\theoremstyle{plain}
\newtheorem{theorem}{Theorem}[section]
\newtheorem{proposition}[theorem]{Proposition}
\newtheorem{lemma}[theorem]{Lemma}
\theoremstyle{definition}
\theoremstyle{remark}
\let\oldleq\leq 
\let\oldgeq\geq 
\let\leq\oldleq 
\let\geq\oldgeq 
\newcommand{\holders}{H\"{o}lder's }
\DeclareMathOperator*{\argmin}{arg\,min}
\DeclareMathOperator*{\Image}{Im}
\DeclareMathOperator*{\Kernel}{Ker}
\newcommand{\R}{\mathbb{R}}
\newcommand{\sgn}{{\rm sgn}}
\newcommand{\cut}{{\rm Cut}}
\newcommand{\subjectto}{\mathrm{s.t.}\ }
\newcommand{\bfa}{\mathbf{a}}
\newcommand{\bfb}{\mathbf{b}}
\newcommand{\bfx}{\mathbf{x}}
\newcommand{\bfi}{\mathbf{i}}
\newcommand{\bfr}{\mathbf{r}}
\newcommand{\bfy}{\mathbf{y}}
\newcommand{\bfv}{\mathbf{v}}
\newcommand{\bfe}{\mathbf{e}}
\newcommand{\bfw}{\mathbf{w}}
\newcommand{\bfu}{\mathbf{u}}
\newcommand{\bfz}{\mathbf{z}}
\newcommand{\bfalpha}{\bm{\alpha}}
\newcommand{\bfbeta}{\bm{\beta}}
\newcommand{\bfzeta}{\bm{\zeta}}
\newcommand{\bfxi}{\bm{\xi}}
\newcommand{\mcalV}{\mathcal{V}}
\newcommand{\mcalF}{\mathcal{F}}
\newcommand{\mcalS}{\mathcal{S}}
\begin{document}

\twocolumn[
\icmltitle{Multi-class Graph Clustering via
Approximated Effective \texorpdfstring{$p$}{p}-Resistance}




\begin{icmlauthorlist}
\icmlauthor{Shota Saito}{ucl}
\icmlauthor{Mark Herbster}{ucl}
\end{icmlauthorlist}

\icmlaffiliation{ucl}{Department of Computer Science, University College London, London, United Kingdom}
\icmlcorrespondingauthor{Shota Saito}{ssaito@cs.ucl.ac.uk}

\icmlkeywords{Machine Learning, $p$-Resistance, Graph $p$-Laplacian, Clustering}

\vskip 0.3in
]

\printAffiliationsAndNotice{}  

\begin{abstract}
This paper develops an approximation to the (effective) $p$-resistance and applies it to multi-class clustering. 
Spectral methods based on the graph Laplacian and its generalization to the graph $p$-Laplacian have been a backbone of non-euclidean clustering techniques.  
The advantage of the $p$-Laplacian is that the parameter $p$ induces a controllable bias on cluster structure.  
The drawback of $p$-Laplacian eigenvector based methods is that the third and higher eigenvectors are difficult to compute.  
Thus, instead, we are motivated to use the $p$-resistance induced by the $p$-Laplacian for clustering.  
For $p$-resistance, small $p$ biases towards clusters with high internal connectivity while large $p$ biases towards clusters of small ``extent,'' that is a preference for smaller shortest-path distances between vertices in the cluster. 
However, the $p$-resistance is expensive to compute.  
We overcome this by developing an approximation to the $p$-resistance. 
We prove upper and lower bounds on this approximation and observe that it is exact when the graph is a tree. 
We also provide theoretical justification for the use of $p$-resistance for clustering.
Finally, we provide experiments comparing our approximated $p$-resistance clustering to other $p$-Laplacian based methods.

\end{abstract}

\section{Introduction}
\label{sec:introduction}

Graphs are widely used data structures representing a pairwise relationship~\cite{normcut}.
In machine learning, various graph methods are considered, such as clustering and semi-supervised learning~\cite{tuto,zhu2003semi}.
Common to these methods, graph 2-seminorm, 2-seminorm induced from the graph Laplacian, is actively used. 
Its generalization to the graph $p$-seminorm is known to exhibit performance improvement~\cite{pgraph,slepcev2019analysis}.

This paper considers multi-class clustering over a graph using the graph $p$-seminorm.
For this purpose, spectral clustering is the most popular.  
In the 2-seminorm based (i.e., standard) spectral clustering, we use the first $k$ eigenvectors of the graph Laplacian for $k$-class clustering~\cite{tuto}.
This use of the first $k$ eigenvectors is theoretically supported~\cite{lee2014multiway}.
Using the $p$-seminorm, this graph Laplacian is extended to the graph $p$-Laplacian~\cite{pgraph}.
Similar to the standard case, using the first $k$ eigenvectors of this graph $p$-Laplacian for $k$-class clustering is also theoretically supported~\cite{tudisco2018nodal}.
However, there is not yet known an exact identification for the third or higher eigenpairs of $p$-Laplacian~\cite{lindqvist2008nonlinear}, and hence in practice, it is difficult to obtain them.
Due to this limitation, the existing methods using $p$-Laplacian propose an ad-hoc resolution of this limitation for multi-class clustering~\cite{pgraph,ding2019multiway,luo2010eigenvectors}.
On the other hand, this limitation makes the $p$-Laplacian difficult to use in practice to leverage the full potential of graph $p$-seminorm for multi-class clustering purposes.

Thus, in order to aim to exploit the graph $p$-seminorm more for multi-class clustering, we explore an alternative way to spectral clustering; in this paper, we propose multi-class clustering via approximated effective $p$-resistance. 
The $p$-resistance is also induced by the graph $p$-seminorm.
The use of $p$-resistance\footnote{In the following, we abbreviate effective $p$-resistance as $p$-resistance.
} for clustering is motivated in the following way.
Looking back to the 2-seminorm case, the 2-resistance is considered in the context of the graph analog to electric circuit~\cite{doyle1984random}. 
The 2-resistance is defined as an inverse of the constrained optimization problem using the graph 2-seminorm.
This 2-resistance is known to be a metric over a graph~\cite{klein1993resistance}. 
Moreover, 2-resistance is characterized by a semi-supervised learning problem of the graph 2-seminorm regularization~\cite{alamgir2011phase}.
Given these properties, the 2-resistance is used for the multi-class graph clustering~\cite{yen2005clustering,alev2017graph}.
However, in the large graph setting, the 2-resistance converges to a meaningless limit function~\cite{nadler2009semi,luxburg2010getting}.
Using the graph $p$-seminorm, the 2-resistance is generalized to the $p$-resistance~\cite{herbster2009predicting}, which overcomes this problem~\cite{slepcev2019analysis}.
The $1/(p-1)$-th power of the $p$-resistance is also shown to be a metric~\cite{herbster2010triangle,kalman2021flow}.
Furthermore, since different $p$ of $p$-resistance captures a different characteristic of a graph~\cite{alamgir2011phase}, we expect that the parameter $p$ serves as a tuning parameter for the clustering result.
Thus, the natural idea for the multi-class clustering is to use the $1/(p-1)$-th power of $p$-resistance. 

While the discussion above motivates us to use the $1/(p-1)$-th power of the $p$-resistance to multi-class clustering, there remain two issues; 
i) computational cost of $p$-resistances for many pairs
ii) lack of theoretical justification for using $p$-resistance for clustering other than the metric property. 
In this paper, we address these in the following way. 
For i), it is computationally expensive to compute $p$-resistances for many pairs. 
The reason is that we need to solve the constrained optimization problem for many pairs.
Looking back at the 2-resistance, we can compute the 2-resistance efficiently in the following way. Recall that we can compute 2-resistance as
\begin{align}
    r_{G,2}(i,j) = \|L^{+}\bfe_{i} - L^{+}\bfe_{j}\|_{G,2}^{2},
\end{align}
where $r_{G,p}(i,j)$ is $p$-resistance for a graph $G$, $i$ and $j$ are vertices, $L^{+}$ is a pseudoinverse of the graph Laplacian $L$ for $G$, $\bfe_{i}$ is the $i$-th coordinate vector of $\R^{n}$, and  $\|\cdot\|_{G,2}$ is a 2-seminorm induced from the graph Laplacian $L$.
By this representation, once we compute $L^{+}$, we can ``reuse'' $L^{+}$ to compute 2-resistance for different pairs. 
This reuse makes the computation of 2-resistances for many pairs faster than naively solving the optimization problem for each pair.
However, we do not know such representation for $p$-resistance. 
Thus, to obtain $p$-resistance for many pairs, we need to solve many constrained optimization problems.
The significant result of this work is that in Thm.~\ref{thm:presistance}, we give a theoretical guarantee for the approximation of $p$-resistance as  
\begin{align}
\label{eq:informalrgp}
    r_{G,p} (i,j) \approx \|L^{+}\bfe_{i} - L^{+}\bfe_{j}\|_{G,q}^{p},
\end{align}
where $q$ satisfies $1/p+1/q=1$, and $\|\cdot\|_{G,q}$ is a graph $q$-seminorm whose formal definition is given later.
We also show that for a tree, the approximation of Eq.~\eqref{eq:informalrgp} becomes exact (Thm.~\ref{thm:tree}).
By this approximation, we can compute the approximated $p$-resistance efficiently, similar to the $p$$=$$2$ case.
For ii),
we do not have a theoretical justification for using $p$-resistance for clustering other than the metric property.
While the $p$-resistance has the metric property, this property itself does not support the clustering quality.
For spectral clustering and 2-resistance, we have theoretical justifications for clustering.
For spectral clustering, using the first $k$ eigenvectors of the graph $p$-Laplacian is theoretically justified~\cite{lee2014multiway,tudisco2018nodal}. 
The 2-resistance has a theoretical connection to a semi-supervised learning problem of graph 2-seminorm regularization~\cite{alamgir2011phase}.
For $p$-resistance, we show that $p$-resistance is characterized by the semi-supervised learning problem of $p$-seminorm regularization.
This resolves the open problem stated in~\cite{alamgir2011phase}. 
This gives a theoretical foundation for using $p$-resistance for clustering from a view of the semi-supervised learning problem.
Addressing the two issues above, as a multi-class clustering algorithm, we propose to apply the $k$-medoids algorithm to the distance matrix obtained from the approximated $p$-resistance.
With these two results, our algorithm can be said to be more theoretically supported than existing multi-class spectral clusterings via graph $p$-Laplacian.
Our experiment demonstrates that our algorithm outperforms the existing multi-class clustering using graph $p$-Laplacian and 2-resistance-based methods. 

Our contributions are as follows: 
i) We give a guarantee for the approximated representation of $p$-resistance using the $q$-seminorm. 
ii) We show that the $p$-resistance characterizes the solution of semi-supervised learning of $p$-seminorm regularization of a graph. 
iii) We provide graph $p$-seminorm-based multi-class clustering. 
iv) We numerically show that our method outperforms the existing and standard methods. 
\textit{All proofs are in Appendix.}

\section{Preliminaries}
\label{sec:preliminaries}

We define a graph $G = (V,E)$, where $V$ is a set of \textit{vertices} and $E$ is a set of \textit{edges}. 
Throughout this paper, we use $n:= |V|$ and $m:=|E|$.
An edge connects two vertices, and we do not consider the direction of the edge (\textit{undirected}). 
A graph is \textit{connected} if there is a path for every pair of vertices. 
In the following we assume that the graph is connected.
We associate an $\ell$-th edge ($\ell$$=$$1,\ldots,m$) with a positive value $w_{\ell}$, which we refer to as a~\textit{weight}. 
We define a \textit{weight vector} $\bfw \in \R^{m}$, whose $\ell$-th element is $w_{\ell}$.
We define a \textit{weight matrix} as a diagonal matrix $W\in \R^{m\times m}$ whose $\ell$-th diagonal element is $w_{\ell}$.
We define an \textit{incidence matrix} $C \in \R^{m \times n}$, where $c_{\ell i} := 1$ and $c_{\ell j} := -1$ when $\ell$-th edge connects vertices $i$ to $j$ ($i>j$) otherwise 0.
We represent a graph by an \textit{adjacency matrix} $A \in \R^{n \times n}$; the $ij$-th element and $ji$-th element of $A$ are $w_{\ell}$ if $\ell$-th edge connects vertices $i$ to $j$ ($i>j$), i.e., $a_{ij} = a_{ji} := w_{\ell}$, and we define $a_{ij} = a_{ji} := 0$ if there is no edge between vertices $i$ and $j$.
By construction, the adjacency matrix $A$ is symmetric. 
A \textit{degree} $d_{i}$ for a vertex $i$ is defined as $d_{i} := \sum_{j} a_{ij}$.
We define a degree matrix $D$, a diagonal matrix whose diagonal elements are $D_{ii} := d_{i}$.
We define a matrix called \textit{graph Laplacian}, as $L:=D-A$.
The graph Laplacian can also be written as $L=C^{\top}WC$.
For more details, see~\cite{bapat2010graphs}.
A graph Laplacian induces a seminorm from a inner product $\langle \bfu,\bfx \rangle_{L}:= \bfu^{\top}L\bfx$.
We now define $\mathcal{V}(L)$, a set of $L^{+}\mathbf{e}_{i}$, a \textit{coordinate spanning set} of $L$, and we compute as
\begin{align}
\notag
    \mathcal{V}(L) &:= \{\mathbf{v}_{i} = L^{+}\mathbf{e}_{i}:i=1,\ldots,n\}, \\  
\label{eq:coordinate}
    \langle \bfu,\mathbf{v}_{i}\rangle_{L} &= \bfu^{\top}LL^{+}\mathbf{e}_{i} = u_{i}, \forall \mathbf{v}_{i} \in \mathcal{V}(L),\ \bfu \in \mathcal{H}(L),
\end{align}
where $\mathcal{H}(L) := \mathrm{span}(\mathcal{V}(L))$.
This is a reproducing kernel property for a kernel whose gram matrix is $L^{+}$~\cite{aronszajn1950theory,shawe2004kernel}. Thus, $\mathbf{v}_{i} = L^{+}\mathbf{e}_{i}$ works as coordinate for the space $\mathcal{H}(L)$.
Note that although we call $\mathcal{V}(L)$ as ``coordinate'', the vectors in $\mathcal{V}(L)$ are not necessarily orthonormal to each other.

An analog is established between graph and electric circuit~\cite{doyle1984random}. 
In this analog, the energy $S_{2}(\bfx)$ for a vector over vertices $\bfx$$\in$$ \R^{n}$ and effective 2-resistance $r_{G,2}(i,j)$ between two vertices $i,j$$\in$$V$ are defined as
\begin{align}
\label{eq:defenergyandresistance}
    S_{G,2}(\bfx) &:= \sum_{i,j \in V}a_{ij}(x_{i} - x_{j})^2 = \bfx^{\top} L \bfx, \\
    r_{G,2}(i,j) &:= (\min_{\bfx}\{S_{G,2}(\bfx)\ \mathrm{s.t.}\ x_{i} - x_{j} = 1\})^{-1}
\end{align}
Using the inner product $\langle \cdot,\cdot\rangle_{L}$ and its induced seminorm $\|$$\cdot$$\|_{L}$, we can rewrite 2-resistance as
\begin{align}
\label{eq:2resistancecharacteristics}
    r_{G,2}(i,j) &= \|L^{+}\mathbf{e}_{i} - L^{+}\mathbf{e}_{j}\|_{L}^{2} 
    = \|\bfv_{i} - \bfv_{j}\|_{L}^{2},
\end{align}
where $\bfv_{i}$$,\bfv_{j}$$\in$$\mathcal{V}(L)$.
From the definition, $r_{G,2}(i,i)$$=$$0$ and $r_{G,2}(i,j)$$=$$r_{G,2}(j,i)$.
These energy and effective resistance are extended to $p$-energy $S_{G,p}$ and $p$-resistance $r_{G,p}$ for $p>1$ as
\begin{align}
\label{eq:penergyandpresistance}
    S_{G,p}(\bfx) &:= \sum_{i,j \in V}a_{ij}|x_{i} - x_{j}|^{p}, \\
\label{eq:presistance}
    r_{G,p}(i,j) &:= (\min_{\bfx}\{ S_{G,p}(\bfx)\ \mathrm{s.t.}\ x_{i} - x_{j} = 1\})^{-1}.
\end{align}
This $p$-resistance shows the triangle inequality~\cite{herbster2010triangle}, that is for $a,b,c \in V$
\begin{align}
    \label{eq:herbstertriangle}
    r_{G,p}^{1/(p-1)}(a,b) \leq r_{G,p}^{1/(p-1)}(a,c) + r_{G,p}^{1/(p-1)}(c,b).
\end{align}
With $r_{G,p}^{1/(p-1)}$, the graph $G$ is a metric space.
Particularly, when $p$$=$$2$, 2-resistance defines a metric between $\mathbf{v}_{i}$$,\mathbf{v}_{j}$$\in$$\mathcal{V}(L)$.
More properties on $p$-energy and $p$-resistance, see~\cite{alamgir2011phase,herbster2009predicting}.

Lastly, we review several notions from linear algebra. 
We refer to~\cite{horn2012matrix} for the details.
First, we recall the weighted $p$-norm. 
Given positive weights $\bfr \in \R^{n_{1}}$ where $r_{i} > 0$, for a vector $\bfx \in \R^{n_{1}}$ we define the weighted $p$-norm $\|\bfx\|_{\bfr,p}$, and its inner product $\langle \bfx, \bfy \rangle_{\bfr}$ as
\begin{align}
\label{eq:weightedpnorm}
    \|\bfx\|_{\bfr,p} := \left(\sum_{i=1}^{n_{1}} r_{i} |x_{i}|^{p}\right)^{1/p}, \langle \bfx, \bfy\rangle_{\bfr} := \sum_{i=1}^{n_{1}} r_{i} x_{i} y_{i}.
\end{align}
For this weighted $p$-norm and inner product, we have \holders inequality as follows;
\begin{lemma}[\holders inequality]
\label{lemma:holders}
For $p,q>1$ such that $1/p + 1/q=1$, $\langle \bfx,\bfy \rangle_{\bfr} \leq \|\bfx\|_{\bfr,p}\|\bfy\|_{\bfr,q}.$
\end{lemma}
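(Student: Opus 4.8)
The plan is to reduce the weighted statement to the classical pointwise Young's inequality and then sum. First I would observe that, since every $r_i > 0$, we have $\langle \bfx,\bfy\rangle_{\bfr} = \sum_i r_i x_i y_i \leq \sum_i r_i |x_i|\,|y_i|$, so it suffices to bound the sum of absolute values. I would also dispose of the degenerate case at once: if $\|\bfx\|_{\bfr,p} = 0$ then (again because each $r_i > 0$) every $x_i = 0$, so both sides vanish, and likewise if $\|\bfy\|_{\bfr,q} = 0$. Hence I may assume both norms are strictly positive.

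The engine is Young's inequality: for $a,b \geq 0$ with $1/p + 1/q = 1$,
\[
    ab \leq \frac{a^{p}}{p} + \frac{b^{q}}{q}.
\]
This follows from the concavity of $\log$ applied to the convex combination with weights $1/p,1/q$: taking $u = a^{p}$, $v = b^{q}$ gives $\log(ab) = \tfrac{1}{p}\log u + \tfrac{1}{q}\log v \leq \log\!\left(\tfrac{1}{p}u + \tfrac{1}{q}v\right)$, which is exactly the claimed bound (equivalently, one may invoke the convexity of $\exp$). With this in hand I would introduce the normalized quantities $a_i := |x_i|/\|\bfx\|_{\bfr,p}$ and $b_i := |y_i|/\|\bfy\|_{\bfr,q}$, apply Young's inequality to each pair $(a_i,b_i)$, multiply by $r_i$, and sum over $i$ to obtain
\[
    \sum_{i} r_i a_i b_i \leq \frac{1}{p}\sum_{i} r_i a_i^{p} + \frac{1}{q}\sum_{i} r_i b_i^{q}.
\]

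The final step is to recognize that the weighting passes cleanly through the normalization: by definition $\sum_i r_i a_i^{p} = \|\bfx\|_{\bfr,p}^{p}/\|\bfx\|_{\bfr,p}^{p} = 1$ and similarly $\sum_i r_i b_i^{q} = 1$, so the right-hand side equals $1/p + 1/q = 1$. Unwinding the normalization yields $\sum_i r_i |x_i|\,|y_i| \leq \|\bfx\|_{\bfr,p}\|\bfy\|_{\bfr,q}$, which combined with the first reduction gives the claim. There is no genuine obstacle here, since this is the textbook argument; the only point deserving a word of care is that the positivity of the weights $r_i$ is precisely what makes both the degenerate-case argument and the cancellation in the normalization go through. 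Were zero weights permitted, one would simply restrict every sum to the support $\{i : r_i > 0\}$ without altering anything.
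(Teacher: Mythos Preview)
Your proof is correct and is the standard textbook argument via Young's inequality. The paper itself does not prove this lemma at all: it is stated in the preliminaries as a classical result and cited without proof (indeed the only appendix material related to it is a commented-out stub that simply restates the lemma), so there is nothing to compare beyond noting that your argument supplies what the paper takes for granted.
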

For a matrix $M$$\in$$\R^{n_{1} \times n_{2}}$, we define an \textit{image} of $M$ as $\Image(M)$$:=$$\{\bfy | \bfy$$=$$M\bfx, \bfx$$\in$$\R^{n_{2}}\}$$\subseteq$$\R^{n_{1}}$, that is a space spanned by the matrix $M$. 
Note that $MM^{+}\bfy$ is an orthogonal projection of $\bfy$ onto $\Image(M)$, where $M^{+}$ is a pseudoinverse of $M$.
We introduce a \textit{matrix operator $p$-norm} $\vvvert M \vvvert_{p}$ for a matrix $M$ as 
\begin{align}
    \vvvert M \vvvert_{p} := \sup_{\bfx \in \R^{n}}\|M\bfx\|_{p}/\|\bfx\|_{p}.
\end{align}

\section{Graph $p$-Seminorm and Approximating $p$-Resistance}
\label{sec:graphpnormand}

This section defines a graph $p$-seminorm, which is a foundation of our discussion. 
We then discuss several properties of the graph $p$-seminorm. 
Using these properties, we provide the approximation of $p$-resistance.

\subsection{Graph $p$-Seminorm}
\label{sec:graphpnorm}

In this section, we define a graph $p$-seminorm and discuss its characteristics.
For a vector over vertices $\bfx \in \R^{n}$, we define a graph $p$-seminorm over a graph using a weighted $p$-norm for a graph weight vector $\bfw \in \R^{m}$.
We define a graph $p$-seminorm $\|\bfx\|_{G,p}$ for $\bfx \in \R^{n}$ as
\begin{align}
\notag
    &\|\bfx\|_{G,p} := \|C\bfx\|_{\bfw,p} \\ 
    \label{eq:defgraphpnorm}
    &= \left(\sum_{i \in E} w_{i} |(C\bfx)_{i}|^{p}\right)^{1/p}= \left(\sum_{i,j \in V} a_{ij} |x_{i} - x_{j}|^{p}\right)^{1/p}.
\end{align}
From the definition of $p$-energy Eq.~\eqref{eq:penergyandpresistance}, $S_{G,p}(\bfx) = \|\bfx\|_{G,p}^{p}$.
Also, we immediately know that this norm is induced by the inner product $\langle C \bfx, C \bfy \rangle_{\bfw}$ from the definition of the graph $p$-seminorm. 
We now see that this graph seminorm can also be induced from the inner product $\langle \bfx, \bfy \rangle_{L}$, because
\begin{align}
    \langle C \bfx, C \bfy \rangle_{\bfw} = \bfx^{\top}C^{\top}W C\bfy = \bfx^{\top} L \bfy = \langle \bfx, \bfy \rangle_{L}.
\end{align}
From this observation, we see that $\|\bfx\|_{G,2} = \|\bfx\|_{L}$.
Also, we can restrict graph $p$-seminorm to a norm if we consider $\bfx \in \Image(L)$.
Note that this graph $p$-seminorm is same as the graph $p$-seminorm defined in~\cite{herbster2009predicting}.
For this graph $p$-seminorm, using Lemma~\ref{lemma:holders}, the \holders inequality holds;
\begin{align}
\label{eq:holders}
     \langle \bfx, \bfy \rangle_{L} \leq \|\bfx\|_{G,p} \|\bfy\|_{G,q}, 1/p+1/q=1.
\end{align}

When $p=2$ \holders inequality plays a fundamental role to show the representation of 2-resistance by Eq.~\eqref{eq:2resistancecharacteristics} in the following way.
Using the equality condition of the \holders inequality Eq.~\eqref{eq:holders} for $p=2$, we have a lemma.
\begin{lemma}[Classical, e.g.,~\citet{herbster2006prediction}]
\label{lemma:p=2holderseqcond}
For $\bfy \in \R^{n}$, $\|\bfy\|_{G,2}^{-2} = \min_{\bfx} \{ \|\bfx\|_{G,2}^{2}\  \mathrm{s.t.}\ \langle \bfx,\bfy \rangle_{L} = 1\}$.
\end{lemma}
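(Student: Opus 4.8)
The plan is to establish the claimed identity as a tight instance of Cauchy--Schwarz (the $p=q=2$ case of \holders inequality, Eq.~\eqref{eq:holders}) supplemented by an explicit minimizer. First I would prove the lower bound: for any $\bfx$ that is feasible for the constraint $\langle \bfx, \bfy\rangle_{L} = 1$, \holders inequality Eq.~\eqref{eq:holders} with $p=q=2$ gives $1 = \langle \bfx,\bfy\rangle_{L} \leq \|\bfx\|_{G,2}\,\|\bfy\|_{G,2}$. Provided $\|\bfy\|_{G,2} \neq 0$, dividing by $\|\bfy\|_{G,2}$ and squaring yields $\|\bfx\|_{G,2}^{2} \geq \|\bfy\|_{G,2}^{-2}$, so the objective is bounded below by $\|\bfy\|_{G,2}^{-2}$ uniformly over the feasible set.

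Next I would exhibit a minimizer attaining this bound, namely $\bfx^{\star} := \bfy/\|\bfy\|_{G,2}^{2}$. Using the identification $\langle \bfx,\bfy\rangle_{L} = \langle C\bfx, C\bfy\rangle_{\bfw}$ established just above Eq.~\eqref{eq:holders}, one has $\langle \bfy,\bfy\rangle_{L} = \|\bfy\|_{G,2}^{2}$, whence $\langle \bfx^{\star}, \bfy\rangle_{L} = \|\bfy\|_{G,2}^{2}/\|\bfy\|_{G,2}^{2} = 1$, so $\bfx^{\star}$ is feasible. Its objective value is $\|\bfx^{\star}\|_{G,2}^{2} = \|\bfy\|_{G,2}^{2}/\|\bfy\|_{G,2}^{4} = \|\bfy\|_{G,2}^{-2}$, which matches the lower bound. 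Hence the minimum is attained and equals $\|\bfy\|_{G,2}^{-2}$. Equivalently, $\bfx^{\star}$ makes $C\bfx^{\star}$ proportional to $C\bfy$, which is exactly the equality case of Cauchy--Schwarz, so no feasible point can do strictly better.

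Finally I would flag the degenerate case so the statement is well posed. If $\bfy \in \Kernel(L)$, then $\|\bfy\|_{G,2} = 0$ and $\langle \bfx, \bfy\rangle_{L} = \bfx^{\top} L \bfy = 0$ for every $\bfx$; the feasible set is empty, the minimum over it is $+\infty$, and $\|\bfy\|_{G,2}^{-2} = +\infty$ as well, so the identity holds vacuously under the usual conventions. Since the lemma is pure Cauchy--Schwarz together with an explicit witness, there is no genuine obstacle here; the only points requiring care are verifying feasibility of $\bfx^{\star}$ through the $\langle\cdot,\cdot\rangle_{L}$ versus $\langle C\cdot, C\cdot\rangle_{\bfw}$ identification and isolating the $\bfy \in \Kernel(L)$ case, which I expect to be the most delicate bookkeeping step rather than a conceptual one.
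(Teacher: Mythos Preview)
Your proposal is correct and matches the paper's approach. The paper treats this lemma as classical and establishes the general-$p$ analogue (Prop.~\ref{prop:newkr}) by exactly the same \holders-plus-explicit-witness argument; your minimizer $\bfx^{\star} = \bfy/\|\bfy\|_{G,2}^{2}$ is precisely the paper's $\bfzeta = f_{q/p}(\bfy)/\|\bfy\|_{\bfr,q}^{q}$ specialized to $p=q=2$, where $f_{1}$ is the identity.
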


This lemma is a classical result rewritten with our notation of graph $p$-seminorm.
By substituting $\bfy$$:=$$L^{+}\bfe_{i} - L^{+}\bfe_{j}$, the right hand side of Lemma~\ref{lemma:p=2holderseqcond} becomes the inverse of 2-resistance (see Appendix~\ref{sec:detailsoflemmap=2}).
Thus, we obtain $r_{G,2}(i,j)$$=$$\|L^{+}\bfe_{i} - L^{+}\bfe_{j} \|_{L}^{2}$$ =$$\|L^{+}\bfe_{i} - L^{+}\bfe_{j} \|_{G,2}^{2}$.
For $p$-resistance, the question is how is the coordinate spanning set $\mcalV(L^{+})$ related to the $p$-resistance? 
Can we derive such relation using \holders inequality Eq.~\eqref{eq:holders}, similarly to the $p$$=$$2$ case?
Next section will show such connection between $p$-resistance and the coordinate spanning set using Eq.~\eqref{eq:holders}.%

\subsection{Approximating $p$-Resistance via Coordinate Spanning Set}
\label{sec:approximating}

This section discusses approximation of $p$-resistance via the coordinate spanning set $\mcalV(L^{+})$. 
Looking at the $p$$=$$2$ case, we see that $L^{+}\bfe_{i}$$\in$$\mcalV(L^{+})$ can be regarded as coordinate, and $r_{G,2}(i,j)$$=$$\|L^{+}\bfe_{i} - L^{+}\bfe_{j}\|_{G,2}^{2}$.
This expression aids us to compute all the pairs of 2-resistance much faster than naively obtaining 2-resistance. 
For $p$-resistance, a natural question to ask is that does there exist some norm $\|\cdot\|^{\ddagger}$ such that $r_{G,p}(i,j) = \|L^{+}\bfe_{i} - L^{+}\bfe_{j}\|^{\ddagger} $?
If not, how can we approximate as $r_{G,p}(i,j) \approx \|L^{+}\bfe_{i} - L^{+}\bfe_{j}\|^{\ddagger}$?
If we can write $p$-resistance by such expression, we expect to obtain all the pairs of approximated $p$-resistance much faster than naively computing all the pairs of $p$-resistance.
This section addresses this problem.

As we see in Sec.~\ref{sec:graphpnorm}, Lemma~\ref{lemma:p=2holderseqcond} is a key to show that $r_{G,2}(i,j) = \|L^{+}\bfe_{i} - L^{+}\bfe_{j}\|_{G,2}^{2}$.
In the following, we now extend Lemma~\ref{lemma:p=2holderseqcond} from the case of $p=2$ to the general $p$.

\begin{proposition}
\label{prop:newkroverc}
For a graph $G$ and $p,q>1$ such that $1/p + 1/q = 1$, we have
\begin{align}
\label{eq:holdersp}
     \|\bfy\|_{G,q}^{-p} \leq \min_{\bfx}\{ \|\bfx\|_{G,p}^{p} \ \mathrm{s.t.} \ \langle \bfy, \bfx \rangle_{L} = 1\} \leq \|\bfz\|_{G,p}^{p}
\end{align}
where
\begin{align}
    \bfz := C^{+} \frac{f_{q/p}(C\bfy)}{\|\bfy\|_{G,q}^{q}}, \quad (f_{\theta}(\bfx))_{i} := \sgn (x_{i}) |x_{i}|^{\theta}.
\end{align}
When $f_{q/p} (C \bfy) \in \Image(C)$, we have
\begin{align}
     \|\bfy\|_{G,q}^{-p}= \min_{\bfx}\{ \|\bfx\|_{G,p}^{p} \ \mathrm{s.t.} \ \langle \bfy, \bfx \rangle_{L} = 1\} = \|\bfz\|_{G,p}^{p}
\end{align}
\end{proposition}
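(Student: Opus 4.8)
The plan is to bracket the constrained minimum between the two stated quantities, proving the left inequality with \holders inequality and the right inequality by exhibiting $\bfz$ as an explicit feasible point. For the lower bound, take any $\bfx$ feasible for the minimization, i.e. $\langle \bfy,\bfx\rangle_L = 1$. Applying the \holders inequality \eqref{eq:holders} gives $1 = \langle \bfy,\bfx\rangle_L \le \|\bfx\|_{G,p}\,\|\bfy\|_{G,q}$, hence $\|\bfx\|_{G,p}^{p} \ge \|\bfy\|_{G,q}^{-p}$. Since this holds for every feasible $\bfx$, the minimum is at least $\|\bfy\|_{G,q}^{-p}$, which is the left inequality of \eqref{eq:holdersp}.

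For the upper bound I would construct $\bfz$ from the equality case of \holders inequality. Equality in \eqref{eq:holders} forces $C\bfx$ to be a positive multiple of $f_{q/p}(C\bfy)$, since $|f_{q/p}(C\bfy)_\ell|^{p}=|(C\bfy)_\ell|^{q}$ matches $|C\bfx|^{p}$ to $|C\bfy|^{q}$ componentwise with aligned signs. This direction need not lie in $\Image(C)$, so I project it there with $C^{+}$ and normalise by $\|\bfy\|_{G,q}^{q}$, which is exactly the stated $\bfz$. It then suffices to check feasibility, after which $\min \le \|\bfz\|_{G,p}^{p}$ is immediate. For feasibility, rewrite $\langle \bfy,\bfz\rangle_L = \langle C\bfy, C\bfz\rangle_{\bfw}$. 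Because $C\bfy\in\Image(C)$ while $C\bfz$ is the projection of $f_{q/p}(C\bfy)/\|\bfy\|_{G,q}^{q}$ onto $\Image(C)$, this pairing is unchanged by the projection and collapses to $\langle C\bfy, f_{q/p}(C\bfy)\rangle_{\bfw}/\|\bfy\|_{G,q}^{q}$. The exponent identity $1+q/p=q$ (equivalently $p+q=pq$) turns the numerator into $\sum_{\ell} w_{\ell}\,|(C\bfy)_{\ell}|^{q} = \|\bfy\|_{G,q}^{q}$, so $\langle \bfy,\bfz\rangle_L = 1$ and $\bfz$ is feasible.

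For the equality claim, suppose $f_{q/p}(C\bfy)\in\Image(C)$. Then the projection acts as the identity on it, so $C\bfz = f_{q/p}(C\bfy)/\|\bfy\|_{G,q}^{q}$ exactly, and computing $\|\bfz\|_{G,p}^{p}=\|C\bfz\|_{\bfw,p}^{p}$ with the same identity $p+q=pq$ yields $\|\bfy\|_{G,q}^{-p}$. Thus the upper bound coincides with the lower bound and the minimum is squeezed to the common value, giving all three equalities. This is also what underlies \Cref{thm:tree}: for a tree $C$ has full row rank, so $\Image(C)=\R^{m}$ and the hypothesis $f_{q/p}(C\bfy)\in\Image(C)$ is automatic.

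The main obstacle is the feasibility step. The clean cancellation of the projection relies on the projection onto $\Image(C)$ being orthogonal with respect to the weighted inner product $\langle\cdot,\cdot\rangle_{\bfw}$ (equivalently self-adjoint for $\langle\cdot,\cdot\rangle_{\bfw}$), so that $\langle C\bfy, C\bfz\rangle_{\bfw}=\langle C\bfy, f_{q/p}(C\bfy)\rangle_{\bfw}/\|\bfy\|_{G,q}^{q}$ survives despite $C\bfy\in\Image(C)$ only; care is needed to carry the edge weights $w_{\ell}$ through the projection correctly, since a naive unweighted projection leaves a residual term along the cycle space. Everything else is \holders inequality together with the exponent bookkeeping $p+q=pq$.
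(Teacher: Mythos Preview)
Your approach is essentially the paper's: \holders inequality for the lower bound, feasibility of the explicit $\bfz$ for the upper bound, and projection-as-identity for the equality case. The lower bound and the equality case match the paper's argument line for line, and your closing remark about trees is exactly how the paper derives Thm.~\ref{thm:tree}.

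The one place your write-up falls short of a proof is the feasibility of $\bfz$, which you correctly flag as the obstacle but do not resolve. The issue you identify is real: $CC^{+}$ is the orthogonal projector onto $\Image(C)$ for the \emph{unweighted} inner product, and you need $\langle C\bfy, CC^{+}\bfv\rangle_{\bfw}=\langle C\bfy,\bfv\rangle_{\bfw}$ in the \emph{weighted} one. The paper isolates precisely this step as a separate lemma (Lemma~\ref{lemma:innerproductoverc}), asserting $\langle C\bfalpha,\bfbeta\rangle_{\bfw}=\langle C\bfalpha,CC^{+}\bfbeta\rangle_{\bfw}$ for all $\bfalpha\in\R^{n}$, $\bfbeta\in\R^{m}$. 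Its proof rewrites the pairing as $\langle W^{1/2}C\bfalpha,W^{1/2}\bfbeta\rangle$, observes that $C^{+}W^{-1/2}$ is a \emph{generalized} inverse of $W^{1/2}C$ (since $W^{1/2}C\cdot C^{+}W^{-1/2}\cdot W^{1/2}C=W^{1/2}C$), and then invokes a range identity for generalized inverses (Lemma~\ref{lemma:generalizedinverse}) to argue that $W^{1/2}(I-CC^{+})\bfbeta\in\Image(W^{1/2}C)^{\perp}$, which kills the cycle-space residual you were worried about. This is exactly the ``care with weights'' you anticipated; to match the paper you need to carry that computation through rather than leave it as a caveat.
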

We first note that the minimization problem of Eq.~\eqref{eq:holdersp} is the inverse of $p$-resistance Eq.~\eqref{eq:penergyandpresistance}.
The left hand side of inequality Eq.~\eqref{eq:holdersp} immediately follows from \holders inequality (Eq.~\eqref{eq:holders}) with $\langle \bfy, \bfx \rangle_{L} = 1$.
We now turn our attention to the right hand side.
Recall that when $p=2$ we always have $f_{q/p}(C\bfx) \in \Image(C)$ and $\|\bfy\|_{G,2}^{-1} = \|\bfz\|_{G,2}$, which matches  Lemma~\ref{lemma:p=2holderseqcond}.
In the general $p$ case, $f_{q/p}(C\bfx) \notin \Image(C)$ and $\|\bfy\|_{G,q}^{-1}$$\neq$$\|\bfz\|_{G,p}$.
Thus, neither $\|\bfy\|_{G,q}^{-p}$ nor $\|\bfz\|_{G,p}^{p}$ gives the solution to the minimization problem.
However, this theorem tells us that we can upper bound the solution to the minimization problem by $\|\bfz\|_{G,p}^{p}$.

Applying Prop.~\ref{prop:newkroverc}, we obtain the bound for $p$-resistance as follows;
\begin{theorem}
\label{thm:presistance}
For a graph $G$ and $p, q > 1$ such that $1/p + 1/q = 1$, the $p$-resistance can be bounded as
\begin{align}
\notag
    &\frac{1}{\alpha_{G,p}^p}\|L^{+}\mathbf{e}_{i} - L^{+}\mathbf{e}_{j}\|_{G,q}^{p} \leq r_{G,p}(i,j) \leq \|L^{+}\mathbf{e}_{i} - L^{+}\mathbf{e}_{j}\|_{G,q}^{p}, \\
\notag
    &\mathrm{\ \ \ \ where\ }\alpha_{G,p} := \vvvert W^{1/p}CC^{+}W^{-1/p} \vvvert_{p},
\end{align}
\end{theorem}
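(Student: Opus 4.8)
The plan is to apply Proposition~\ref{prop:newkroverc} with the single concrete choice $\bfy := L^{+}\bfe_{i} - L^{+}\bfe_{j}$ and then transfer the two-sided bound on the constrained minimization over to the $p$-resistance by taking reciprocals.

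First I would recast $r_{G,p}(i,j)^{-1}$ into exactly the minimization appearing in Proposition~\ref{prop:newkroverc}. Since $S_{G,p}(\bfx)=\|\bfx\|_{G,p}^{p}$, the inverse $p$-resistance is $\min_{\bfx}\{\|\bfx\|_{G,p}^{p}\ \mathrm{s.t.}\ x_i-x_j=1\}$, so it suffices to show that the scalar constraint $x_i-x_j=1$ is the same as $\langle\bfy,\bfx\rangle_L=1$ for \emph{every} $\bfx\in\R^{n}$. Because $\bfe_i-\bfe_j$ is orthogonal to $\bone$ it lies in $\Image(L)$, and $L^{+}L$ is the orthogonal projection onto $\Image(L)$, hence acts as the identity on $\bfe_i-\bfe_j$. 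Writing $\bfy^{\top}=(\bfe_i-\bfe_j)^{\top}L^{+}$ gives $\langle\bfy,\bfx\rangle_L=\bfy^{\top}L\bfx=(\bfe_i-\bfe_j)^{\top}L^{+}L\bfx=(\bfe_i-\bfe_j)^{\top}\bfx=x_i-x_j$. Thus the feasible sets and objectives coincide and $r_{G,p}(i,j)^{-1}=\min_{\bfx}\{\|\bfx\|_{G,p}^{p}\ \mathrm{s.t.}\ \langle\bfy,\bfx\rangle_L=1\}$.

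Invoking Proposition~\ref{prop:newkroverc} then yields $\|\bfy\|_{G,q}^{-p}\le r_{G,p}(i,j)^{-1}\le\|\bfz\|_{G,p}^{p}$ with $\bfz=C^{+}f_{q/p}(C\bfy)/\|\bfy\|_{G,q}^{q}$. All quantities are positive, so inverting gives $\|\bfz\|_{G,p}^{-p}\le r_{G,p}(i,j)\le\|\bfy\|_{G,q}^{p}=\|L^{+}\bfe_i-L^{+}\bfe_j\|_{G,q}^{p}$. The right inequality is already the claimed upper bound, and the left inequality reduces the whole theorem to proving the single estimate $\|\bfz\|_{G,p}\le\alpha_{G,p}\,\|\bfy\|_{G,q}^{-1}$.

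This last estimate is the step I expect to be the main obstacle, since it is where the operator norm $\alpha_{G,p}$ must be produced by careful bookkeeping of the edge weighting. I would start from $\|\bfz\|_{G,p}=\|C\bfz\|_{\bfw,p}=\|W^{1/p}CC^{+}f_{q/p}(C\bfy)\|_{p}/\|\bfy\|_{G,q}^{q}$, insert $W^{-1/p}W^{1/p}$ just after $CC^{+}$, and apply the definition of the matrix operator $p$-norm to peel off $\alpha_{G,p}=\vvvert W^{1/p}CC^{+}W^{-1/p}\vvvert_{p}$, which leaves the factor $\|W^{1/p}f_{q/p}(C\bfy)\|_{p}=\|f_{q/p}(C\bfy)\|_{\bfw,p}$. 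A direct computation using $|f_{q/p}(\bfv)_\ell|^{p}=|v_\ell|^{q}$ gives $\|f_{q/p}(C\bfy)\|_{\bfw,p}=\|\bfy\|_{G,q}^{q/p}$. Combining these and using $q/p=q-1$ (which follows from $1/p+1/q=1$) collapses the exponent $q/p-q$ to $-1$, so that $\|\bfz\|_{G,p}\le\alpha_{G,p}\|\bfy\|_{G,q}^{q/p-q}=\alpha_{G,p}\|\bfy\|_{G,q}^{-1}$. Raising to the $p$-th power and inverting then gives $\tfrac{1}{\alpha_{G,p}^{p}}\|\bfy\|_{G,q}^{p}\le\|\bfz\|_{G,p}^{-p}\le r_{G,p}(i,j)$, which is precisely the lower bound and completes the argument.
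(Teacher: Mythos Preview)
Your proposal is correct and follows essentially the same approach as the paper: both rewrite the constraint $x_i-x_j=1$ as $\langle L^{+}(\bfe_i-\bfe_j),\bfx\rangle_L=1$, invoke Proposition~\ref{prop:newkroverc} with $\bfy=L^{+}(\bfe_i-\bfe_j)$, and then bound $\|\bfz\|_{G,p}$ by peeling off $\alpha_{G,p}$ as an operator $p$-norm and simplifying $\|f_{q/p}(C\bfy)\|_{\bfw,p}=\|\bfy\|_{G,q}^{q/p}$. The only cosmetic differences are that the paper passes through the weighted matrix norm $\vvvert CC^{+}\vvvert_{\bfw,p}$ before converting to $\vvvert W^{1/p}CC^{+}W^{-1/p}\vvvert_{p}$ (whereas you insert $W^{-1/p}W^{1/p}$ directly), and the paper's justification of the constraint rewriting goes through a shift $\bfx'=\bfx-c\bone$ rather than your cleaner observation that $L^{+}L$ fixes $\bfe_i-\bfe_j\in\Image(L)$.
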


\begin{theorem}
\label{thm:tree}
For a tree $G$ and $p, q > 1$ such that $1/p + 1/q = 1$, the $p$-resistance can be written as
\begin{align}
r_{G,p}(i,j) = \|L^{+}\mathbf{e}_{i} - L^{+}\mathbf{e}_{j}\|_{G,q}^{p}
\end{align}
\end{theorem}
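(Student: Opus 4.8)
The plan is to obtain Thm.~\ref{thm:tree} as an immediate corollary of Thm.~\ref{thm:presistance}, by showing that the multiplicative gap $\alpha_{G,p}$ collapses to $1$ precisely when $G$ is a tree, so that the lower and upper bounds of Thm.~\ref{thm:presistance} coincide and pinch $r_{G,p}(i,j)$ to a single value.

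The key structural fact I would exploit is the edge count of a tree. A tree on $n$ vertices has exactly $m = n-1$ edges, whereas the incidence matrix $C$ of any connected graph satisfies $\rank(C) = n-1$ (since $C\bone = \bone^{\top}$-orthogonality forces $\bone \in \Kernel(C)$, and connectivity gives equality). Hence for a tree $\rank(C) = m$, i.e.\ $C$ has full row rank, and therefore $\Image(C) = \R^{m}$. The first step is thus to establish, from this edge count and rank, that $CC^{+} = I_{m}$: recalling that $CC^{+}$ is the orthogonal projection onto $\Image(C)$, full row rank makes this projection the identity on $\R^{m}$.

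With $CC^{+} = I_{m}$ the remainder is a one-line computation. Since $W$ is diagonal, $W^{1/p}CC^{+}W^{-1/p} = W^{1/p}I_{m}W^{-1/p} = I_{m}$, so $\alpha_{G,p} = \vvvert I_{m} \vvvert_{p} = 1$. Substituting $\alpha_{G,p} = 1$ into Thm.~\ref{thm:presistance} yields
\[
\|L^{+}\mathbf{e}_{i} - L^{+}\mathbf{e}_{j}\|_{G,q}^{p} \leq r_{G,p}(i,j) \leq \|L^{+}\mathbf{e}_{i} - L^{+}\mathbf{e}_{j}\|_{G,q}^{p},
\]
which is exactly the claimed equality.

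As a sanity check I would note an equivalent route through the equality hypothesis of Prop.~\ref{prop:newkroverc}: because $\Image(C) = \R^{m}$ for a tree, the vector $f_{q/p}(C\bfy)$ automatically lies in $\Image(C)$ for every $\bfy$, so the equality case of Prop.~\ref{prop:newkroverc} applies with $\bfy := L^{+}\mathbf{e}_{i} - L^{+}\mathbf{e}_{j}$; here one uses that $\bfe_{i}-\bfe_{j} \in \Image(L)$ for a connected graph, so the constraint $\langle \bfy, \bfx\rangle_{L} = 1$ reduces to $x_{i}-x_{j}=1$ and the minimization becomes precisely $r_{G,p}(i,j)^{-1}$. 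I expect the only genuine obstacle to be the recognition step itself: seeing that the tree's defining relation $m = n-1$, together with the standard rank of the incidence matrix, forces $\Image(C)$ to fill all of $\R^{m}$. Once this is observed, both the vanishing of $\alpha_{G,p}$ and the applicability of the equality case of Prop.~\ref{prop:newkroverc} follow with no further work.
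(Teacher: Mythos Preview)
Your proposal is correct and rests on the same structural observation as the paper: for a tree $m=n-1=\rank(C)$, so $\Image(C)=\R^{m}$. The paper's own proof is precisely your ``sanity check'' route---it invokes the equality case of Prop.~\ref{prop:newkroverc} (via Lemma~\ref{lemma:pholderseqcond}) with $\bfy=L^{+}\bfe_{i}-L^{+}\bfe_{j}$, rather than passing through $\alpha_{G,p}=1$ and Thm.~\ref{thm:presistance}; the two packagings are equivalent (minor slip: you mean $\alpha_{G,p}=1$, not ``vanishing'').
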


Thm.~\ref{thm:presistance} and Thm.~\ref{thm:tree} show the relationship between $p$-resistance and $\|L^{+}\bfe_{i} - L^{+}\bfe_{j}\|_{G,q}^{p}$.
For general graphs, we do not obtain the exact representation of $p$-resistance. 
However, Thm.~\ref{thm:presistance} guarantees the quality of approximation as 
\begin{align}
\label{eq:approxrgp}
    r_{G,p}(i,j)
    \approx \|L^{+}\bfe_{i} - L^{+}\bfe_{j}\|_{G,q}^{p}
    = \| \bfv_{i} - \bfv_{j}\|_{G,q}^{p} ,
\end{align}
where $\bfv_{i}$,$\bfv_{j}$$\in$$\mathcal{V}(L)$. 
By this approximation, we obtain the similar representation of $p$-resistance to the $p=2$ case Eq.~\eqref{eq:2resistancecharacteristics}. 
The term $\alpha_{G,p}$ is a $p$-norm of the orthogonal projector to $\Image(W^{1/p}C)$. 
Note that we always have $\alpha_{G,p}$$\geq$$1$.
For a tree graph, Thm.~\ref{thm:tree} shows that $\|L^{+}\bfe_{i} - L^{+}\bfe_{j}\|_{G,q}^{p}$ becomes the exact representation of $p$-resistance.

The next question is what is $\alpha_{G,p}$. 
We bound $\alpha_{G,p}$ as follows;
\begin{proposition}
\label{prop:generalbound}
For a general graph $G$ and $p>1$, we have $\alpha_{G,p} \leq m^{|1/2-1/p|}$.
\end{proposition}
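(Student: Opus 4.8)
The plan is to reduce the claim to two facts: that the matrix inside the norm is an orthogonal projector, hence a contraction in the Euclidean operator norm, and that on $\R^{m}$ the operator $p$-norm and operator $2$-norm differ by at most a factor $m^{|1/2-1/p|}$.

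First I would record the structural observation already noted after Thm.~\ref{thm:presistance}: writing $\Pi := W^{1/p}CC^{+}W^{-1/p}$, the matrix $\Pi$ is the orthogonal projector onto $\Image(W^{1/p}C)$. In particular $\Pi$ is symmetric and idempotent, so all of its singular values lie in $\{0,1\}$ and hence $\vvvert \Pi\vvvert_{2}=1$ (the projector is nonzero since $G$ is connected with at least one edge, so $\Image(W^{1/p}C)\neq\{\bm 0\}$). This is the only place where the precise algebraic form of $\alpha_{G,p}$ enters; everything afterwards is a general statement about the operator $p$-norm of a Euclidean contraction.

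Next I would establish the norm-conversion inequality $\vvvert A\vvvert_{p}\le m^{|1/2-1/p|}\,\vvvert A\vvvert_{2}$ for an arbitrary $A\in\R^{m\times m}$. This follows from the elementary embeddings between $\ell^{p}$ and $\ell^{2}$ on $\R^{m}$: by \holders inequality one has $\|\bfz\|_{2}\le m^{1/2-1/p}\|\bfz\|_{p}$ together with $\|\bfz\|_{p}\le\|\bfz\|_{2}$ when $p\ge 2$, and the reversed pair $\|\bfz\|_{p}\le m^{1/p-1/2}\|\bfz\|_{2}$ together with $\|\bfz\|_{2}\le\|\bfz\|_{p}$ when $1<p\le 2$. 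In each regime, applying the appropriate member of the pair to the output $\|A\bfx\|_{p}$ and to the input $\|\bfx\|_{p}$, and inserting $\|A\bfx\|_{2}\le\vvvert A\vvvert_{2}\|\bfx\|_{2}$, makes the two embedding exponents add to exactly $|1/2-1/p|$.

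Finally I would combine the two steps: $\alpha_{G,p}=\vvvert\Pi\vvvert_{p}\le m^{|1/2-1/p|}\,\vvvert\Pi\vvvert_{2}=m^{|1/2-1/p|}$. The conceptual crux is Step~1, the identification of $\Pi$ as an orthogonal projector, since once $\vvvert\Pi\vvvert_{2}=1$ is in hand the bound is immediate; this is also what forces the bound to collapse to $m^{0}=1$ at $p=2$, consistent with $\alpha_{G,2}=\vvvert\Pi\vvvert_{2}=1$. The only computational care needed is in Step~2, namely keeping straight which embedding points in which direction for $p\ge 2$ versus $1<p<2$, so that the exponents on the input and output sides sum to $|1/2-1/p|$ rather than doubling.
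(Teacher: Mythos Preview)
Your overall route is the same as the paper's: establish $\vvvert\Pi\vvvert_{2}=1$ for $\Pi:=W^{1/p}CC^{+}W^{-1/p}$ and then apply the norm comparison $\vvvert A\vvvert_{p}\le m^{|1/2-1/p|}\vvvert A\vvvert_{2}$, which the paper quotes as Lemma~\ref{lemma:higham} and which you re-derive from the $\ell^{p}$--$\ell^{2}$ embeddings on $\R^{m}$. Step~2 is fine and coincides with the paper.

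There is, however, a gap in your Step~1. The matrix $\Pi$ is certainly idempotent, and its range is $\Image(W^{1/p}C)$, but it is \emph{not} symmetric in general: $\Pi^{\top}=W^{-1/p}CC^{+}W^{1/p}$, which equals $\Pi$ only when $W^{2/p}$ commutes with $CC^{+}$ (for instance in the unweighted case $W=I$). Thus $\Pi$ is an oblique projector, not an orthogonal one, and an oblique projector can have $\vvvert\Pi\vvvert_{2}>1$; the remark after Thm.~\ref{thm:presistance} that you lean on is imprecise on this point. The paper's own proof justifies the same step differently, observing that $\Pi$ is similar to $CC^{+}$ and therefore has eigenvalues in $\{0,1\}$; but that argument is also delicate, since $\vvvert\cdot\vvvert_{2}$ is the largest \emph{singular} value, not the largest eigenvalue, and the two differ for non-symmetric matrices. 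In the unweighted case both arguments go through cleanly; for general weights the assertion $\vvvert\Pi\vvvert_{2}=1$ needs more care than either your proposal or the paper's proof supplies.
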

This proposition gives the guarantee for the approximation in Thm.~\ref{thm:presistance}.
Although Prop.~\ref{prop:generalbound} gives the quality guarantee, 
we expect this upper bound to be loose, i.e., we expect that the actual approximation value is closer to the exact value than this bound.
The reason why we expect in this way is that to prove the bound we only use the general technique that holds for any matrix and we do not use any graph structural information.
In the real dataset, we observe that the approximation of $p$-resistance and $\alpha_{G,p}$ is far better than this guarantee, see Appendix~\ref{sec:comparsionofapproximated}.
We give more discussion on this $\alpha_{G,p}$ in Appendix~\ref{sec:morediscussionalpha}.

Finally, we discuss computational times of the $p$-resistance. 
To compute Eq.~\eqref{eq:approxrgp}, it takes $O(m)$, given $L^{+}$. 
Also, in general it takes $O(n^{3})$ to compute $L^{+}$.
Note that we can reuse $L^{+}$ to compute $p$-resistance for different pairs.
We now consider to obtain the $p$-resistance by naively solving the optimization problem.
We can rewrite the constrained problem Eq.~\eqref{eq:presistance} as unconstrained problem, which is solvable by gradient descent.
In each step of the gradient descent, we compute $\nabla_{\bfx} \|\bfx\|_{G,p}^{p}$, which takes almost same time as Eq.~\eqref{eq:approxrgp}. 
Moreover, we cannot reuse the result of a single pair to compute for other pairs, while we can reuse $L^{+}$. 
Thus, to compute $p$-resistance for a single pair, our approximation is expected to be faster than naively solving the optimization problem. 
Moreover, if we compute for many pairs, our approximation is much faster by reusing $L^{+}$.

\section{Clustering via $p$-Resistance}

This section considers using the $p$-resistance for the clustering algorithm.
Firstly, we propose a clustering algorithm using the approximated $p$-resistance. 
We next characterize our clustering algorithm from the semi-supervised problem point of view.
From this characterization, we can see that our clustering algorithm inherits properties from semi-supervised learning.

\subsection{Proposed Clustering Algorithm via $p$-Resistance}
\label{sec:proposed}

This section proposes an algorithm using $p$-resistance. 
The triangle inequality Eq.~\eqref{eq:herbstertriangle} gives a metric property to  $r_{G,p}^{1/(p-1)}(i,j)$. 
We call this $1/(p-1)$-th power of $p$-resistance as \textit{$p$-resistance metric}.
This metric property motivates us to use $p$-resistance for clustering algorithms.

Furthermore, the parameter $p$ serves as a tuning parameter of the clustering result. 
The general $p$ of $p$-resistance captures the graph structure somewhere between the cut and shortest path.
Using this characteristic, we expect varying $p$ tunes the clustering result somewhere suitable between cut-based and path-based.
When $p$ is small, the clustering result biases towards clusters with high internal connectivity, like a min-cut.
When $p$ is large, the clustering result focus more on path-based topology, that is a preference for smaller shortest-path distances between vertices in the cluster.
We illustrate this with examples of the two-class clustering in Fig.~\ref{fig:example}. 
In these examples, we conduct clustering with $k$-center algorithm using $p$-resistance. The left example is intuitively ``symmetric''; for this kind, $p\to\infty$, which looks at the path-based topology, gives more natural result.
The more natural clustering of the right example is ``cut''; for this kind, $p\to 1$, where we focus on the graph cut, gives the more natural result.
More details are in Appendix~\ref{sec:illustrative}.

While the discussion above motivates us to use the $p$-resistance metric for clustering, computing the $p$-resistance metric for all pairs is costly.
Thus, we approximate this metric by Thm.~\ref{thm:presistance}, and we obtain
\begin{align}
\notag
    r_{G,p}^{1/p-1}(i,j) &\approx \|L^{+}\bfe_{i} - L^{+}\bfe_{j}\|_{G,q}^{p/(p-1)}\\
\label{eq:approximation1/p-1}
    &= \|L^{+}\bfe_{i} - L^{+}\bfe_{j}\|_{G,q}^{q}=\|\bfv_{i}-\bfv_{j}\|_{G,q}^{q},
\end{align}
where $\bfv_{i},\bfv_{j}\in\mathcal{V}(L)$.
We then apply $k$-medoids to the distance matrix obtained by Eq.~\eqref{eq:approximation1/p-1}.
The overall proposed algorithm is summarized in Alg.~\ref{algo:clusteringkmedoids}

We discuss the choice of $k$-medoids over the other distance based method, such as $k$-means~\cite{bishop2006pattern} and $k$-center~\cite{gonzalez1985clustering}.
Although the main emphasis of our algorithm does not comes from the choice of $k$-median but from the approximation of $p$-resistance metric, $k$-median has some advantages.
Since $p$-resistance metric cannot define a distance between other than the data points defined as $\mcalV(L^{+})$, we cannot define distance for the some ``mean'', which is outside of the data points.
Therefore, the mean-based method such as $k$-means is not appropriate for this setting.
Instead, $k$-medoids is similar to the $k$-means~\cite{kaufman1990partitioning} but more appropriate since $k$-medoids assigns the centers to the actual data points.
The other potential choice is $k$-center algorithm. 
The $k$-center algorithm also assigns the center to the actual data point, and is known to be faster than $k$-medoids.
Also, $k$-center algorithm is approximated by the fast greedy farthest first algorithm~\cite{gonzalez1985clustering,herbster2010triangle}.
However, the $k$-medoids is more robust to the outliers than $k$-center. 
Thus, we propose to use $k$-medoids.
The overall computational time for Alg.~\ref{algo:clusteringkmedoids} is dominated by the computation of the all the pairs of the approximated $p$-resistance, $O(mn^{2})$. 
If we use the farthest first algorithm instead of $k$-medoids the algorithm is dominated either by the computation of $L^{+}$, $O(n^3)$, or farthest first $O(kmn)$.
Thus, farthest first is faster since in general $m \gg n$ but less robust than $k$-medoids.

\begin{algorithm}[!t]
\begin{algorithmic}[1]
\REQUIRE{Graph $G=(V,E)$ and $p$}
\STATE Compute pseudoinverse of the graph Laplacian $L^{+}$.
\STATE Compute all the pairs of the $p$-resistance metrics $r_{G,p}^{1/(p-1)}$ using Eq.~\eqref{eq:approximation1/p-1} and obtain a distance matrix.
\STATE Apply $k$-medoids to the distance matrix.
\ENSURE{The clustering result.}
\end{algorithmic}
 \caption{{Clustering Algorithm via $p$-Resistance}}
 \label{algo:clusteringkmedoids}
\end{algorithm}

\subsection{Connection between Semi-supervised Learning and $p$-Resistance}

\begin{figure*}[t]
\begin{center}
\small
\includegraphics[width=0.75\hsize,clip]{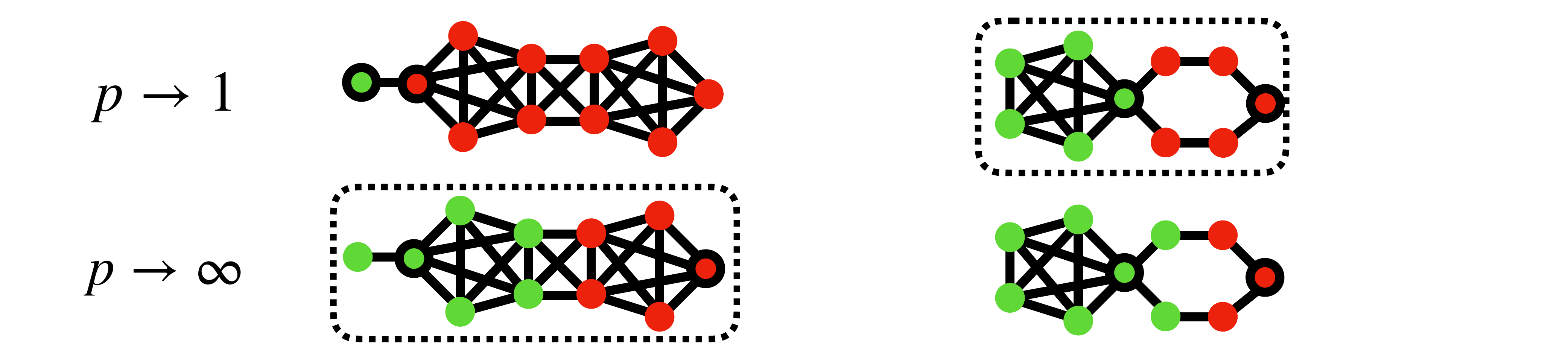}
\caption{
The illustrative examples where $p$ changes the clustering results. 
These examples conduct clustering with $k$-center algorithm using $p$-resistance as a metric.
The red and green colors show the clustering result.
Also, the vertices with borders show the obtained centers.
The dotted boxes exhibit natural clustering results.
These examples show varying $p$ tunes the clustering result; the left example gives a more natural clustering result when $p\to\infty$ whereas for right $p\to 1$ gives more natural result.
Details are in Sec.~\ref{sec:proposed} and Appendix~\ref{sec:illustrative}.
}
\label{fig:example}
\end{center}
\end{figure*}
This section explores connection from the $p$-resistance to the semi-supervised learning (SSL) via graph $p$-seminorm.
As we saw in Sec.~\ref{sec:preliminaries},~\citet{herbster2010triangle} shows the metric property of $p$-resistance.
While the metric property itself can motivates us to use $p$-resistance for our clustering, we do not know how much $p$-resistance shows connectivity of a graph.
This section shows that $p$-resistance can be seen from as an SSL perspective.
This connection assures us to use $p$-resistance for the clustering problem.
In the following we explain the connection by taking the following steps; i) SSL problem in the clustering context ii) the connection between the SSL and $p$-resistance.

We first consider an SSL problem for two known labels as
\begin{align}
\notag
    &\min_{\bfx} \{S_{G,p} (\bfx) \text{\ s.t.\ } x_{i} - x_{j} = 1\}\\ 
    \label{eq:sslp}
    &= \min_{\bfx}\{ S_{G,p} (\bfx) \text{\ s.t.\ } x_{i}=1, x_{j} = 0\}.
\end{align}
The equality holds since $S_{G,p}(\bfx) $$=$$ S_{G,p}(\bfx$$+$$c\mathbf{1})$, $ \forall$$c$$\in$$\R$.
We first note that Eq.~\eqref{eq:sslp} is an inverse of the $p$-resistance and we use the optimal value of this problem to $p$-resistance.
This learning problem for $p=2$ case has been considered in many literature, such as~\cite{zhu2003semi}, and extended to the $p$-seminorm setting~\cite{herbster2009predicting,alamgir2011phase,slepcev2019analysis}.

We now put Eq.~\eqref{eq:sslp} into clustering context; the solution of Eq.~\eqref{eq:sslp} tells us the graph structural information on clustering.
We recognize that Eq.~\eqref{eq:sslp} is two fixed-label problem.
Let $\bfx^{\mathrm{*}_{ij}}$ be a solution of the problem Eq.~\eqref{eq:sslp}. 
It is straightforward to interpret Eq.~\eqref{eq:sslp} if $i$ and $j$ is in different binary classes; we see which clusters the third point $\ell$ belongs to, the cluster which $i$ or $j$ is in.
More specifically, by comparing $x^{\mathrm{*}_{ij}}_{\ell} - x^{\mathrm{*}_{ij}}_{j}$ and $x^{\mathrm{*}_{ij}}_{i} - x^{\mathrm{*}_{ij}}_{\ell}$ we know which cluster the third point $\ell$ belongs to.
If we take the a pair of vertices $(i,j)$ arbitrarily, the assumption that ``$i$ and $j$ in different binary classes'' is not always appropriate. 
In this case, rather than assuming $i$ and $j$ in different binary classes, it is more natural to interpret in the following way; the two-pole binary SSL problem tells us that which of $i$ and $j$ the third point $\ell$ is close to in a graph. 
From this observation, if we look at $\bfx^{\mathrm{*}_{ij}}$ for all pairs, 
we know ``graph structural information'' from the SSL point of view.

We next show the connection between $p$-resistance and the solution of Eq.~\eqref{eq:sslp}, $\bfx^{\mathrm{*}_{ij}}$.
\begin{theorem}
\label{thm:alamgirp}
Let $\bfx^{\mathrm{*}_{ij}}$ be the solution of the problem Eq.~\eqref{eq:sslp}, and $\ell \in V$ be the third unlabeled point. Then we have
\begin{align}
\notag
    x^{\mathrm{*}_{ij}}_{\ell} - x^{\mathrm{*}_{ij}}_{j} \geq x^{\mathrm{*}_{ij}}_{i} - x^{\mathrm{*}_{ij}}_{\ell} \iff r_{G,p}(j,\ell) \geq r_{G,p}(\ell,i).
\end{align}
\end{theorem}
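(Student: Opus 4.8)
The plan is to first simplify the left-hand side. Since $\mathbf{x}^{*_{ij}}$ solves Eq.~\eqref{eq:sslp} in the form with $x_i = 1$ and $x_j = 0$, the inequality $x^{*_{ij}}_\ell - x^{*_{ij}}_j \ge x^{*_{ij}}_i - x^{*_{ij}}_\ell$ reduces to $x^{*_{ij}}_\ell \ge 1/2$. Hence the claim becomes the clean statement that the SSL potential at the third vertex sits above the midpoint exactly when $\ell$ is ``electrically closer'' to $i$, i.e. $x^{*_{ij}}_\ell \ge 1/2 \iff r_{G,p}(j,\ell) \ge r_{G,p}(\ell,i)$. As an anchor, for $p=2$ this follows from the closed-form potential $x_\ell = (r_{G,2}(i,j) - r_{G,2}(i,\ell) + r_{G,2}(\ell,j))/(2\,r_{G,2}(i,j))$, but no such formula is available for general $p$, which is the source of the difficulty.

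The backbone of the argument is a convexity reduction. I would introduce the three-terminal energy $g(t) := \min_{\mathbf{x}}\{S_{G,p}(\mathbf{x}) : x_i = 1,\ x_\ell = t,\ x_j = 0\}$. Because $S_{G,p}$ is convex (indeed strictly convex once two values are pinned on the connected graph, for $p>1$) and the constraints are affine, partial minimization shows $g$ is convex in $t$; moreover, freeing the coordinate $x_\ell$ recovers Eq.~\eqref{eq:sslp}, so the minimizer of $g$ is exactly $t^\star = x^{*_{ij}}_\ell$. Consequently $x^{*_{ij}}_\ell \ge 1/2$ is equivalent to $g$ being non-increasing at $t=1/2$, i.e. to the sign of the one-sided derivative $g'(1/2)$. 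By the envelope theorem (the dependence through the free coordinates vanishes at optimality) $g'(t)$ equals, up to a positive constant, the net $p$-current leaving $\ell$ in the minimizer $\mathbf{x}^t$, namely $\sum_b a_{\ell b}\,\sgn(x^t_\ell - x^t_b)\,|x^t_\ell - x^t_b|^{p-1}$. Thus everything reduces to determining the sign of this current at the symmetric configuration $x_i=1,\ x_\ell = 1/2,\ x_j = 0$.

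The crux is to show that this current sign matches $\operatorname{sign}(r_{G,p}(j,\ell) - r_{G,p}(\ell,i))$. Here I would exploit the reflection $\mathbf{x} \mapsto \mathbf{1} - \mathbf{x}$, which fixes the hyperplane $\{x_\ell = 1/2\}$, preserves $S_{G,p}$, swaps the roles of $i$ and $j$, and flips the sign of every $p$-current; this is the structural symmetry that pins the threshold precisely at $1/2$ (and forces $x^{*_{ij}}_\ell = 1/2$ in the balanced case $r_{G,p}(\ell,i) = r_{G,p}(j,\ell)$). To read off the current in terms of the two pairwise resistances I would pass to the dual (flow) characterization of $p$-resistance, where $r_{G,p}(i,\ell)^{1/(p-1)}$ and $r_{G,p}(\ell,j)^{1/(p-1)}$ are realized by optimal unit flows ($1/p+1/q=1$) from $i$ to $\ell$ and from $\ell$ to $j$, and combine this with complementary slackness linking the harmonic potential to these flows. \emph{The main obstacle} is exactly this step: unlike $p=2$, the $p$-harmonic extension does not superpose and there is no closed form, so the sign of the net current cannot be computed directly and must instead be forced by a monotonicity/comparison principle. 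I expect to have to prove that strengthening the conductance on the $i$-side (equivalently, decreasing $r_{G,p}(i,\ell)$) monotonically raises $x^{*_{ij}}_\ell$, using strict convexity of the flow energy to obtain strict monotonicity and hence both directions of the iff, with the boundary equality case handled by the reflection symmetry above.
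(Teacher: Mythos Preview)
Your approach is genuinely different from the paper's, and the gap sits exactly where you flag it. You correctly reduce the left-hand side to $x^{*_{ij}}_\ell \ge 1/2$, and the three-terminal function $g(t)$ together with the envelope argument cleanly reduces this to the sign of the net $p$-current leaving $\ell$ at the configuration $x_i=1,\ x_\ell=1/2,\ x_j=0$. But you never establish that this sign equals $\sgn\bigl(r_{G,p}(j,\ell)-r_{G,p}(\ell,i)\bigr)$. The monotonicity-in-conductance idea you sketch is a comparison across a \emph{family} of graphs, whereas the theorem concerns one fixed graph; there is no evident one-parameter deformation that moves $r_{G,p}(i,\ell)$ and $r_{G,p}(j,\ell)$ independently while passing through the balanced point, so the argument as stated does not close. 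The reflection $\bfx\mapsto\mathbf{1}-\bfx$ only pins down the equality case. This is precisely the nonlinear obstruction that left the general-$p$ statement open.

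The paper sidesteps this obstacle by a linearization trick rather than any current-sign computation. Writing $A_{p,\bfx}(k,l):=a_{kl}|x_k-x_l|^{p-2}$ and $L_{p,\bfx}:=D_{p,\bfx}-A_{p,\bfx}$, one has $\Delta_p\bfx=L_{p,\bfx}\bfx$ and $S_{G,p}(\bfx)=\bfx^\top L_{p,\bfx}\bfx$. The Lagrangian conditions for Eq.~\eqref{eq:sslp} then read as the \emph{linear} system $pL_{p,\bfx^{*_{ij}}}\bfx^{*_{ij}}=\lambda(\bfe_i-\bfe_j)$, yielding $\bfx^{*_{ij}}=\rho\,L_{p,\bfx^{*_{ij}}}^{+}(\bfe_i-\bfe_j)$ and $r_{G,p}(i,j)=(\bfe_i-\bfe_j)^\top L_{p,\bfx^{*_{ij}}}^{+}(\bfe_i-\bfe_j)$. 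The situation is now formally identical to $p=2$ with $L^{+}$ replaced by the single optimizer-dependent matrix $L_{p,\bfx^{*_{ij}}}^{+}$, and the paper asserts that the remainder of the Alamgir--von~Luxburg argument carries over verbatim because only this matrix expression of $\bfx^{*_{ij}}$ is used. In other words, the nonlinear problem is frozen into a weighted $2$-Laplacian at the optimizer, and the $p=2$ algebra is replayed there; your flow/comparison machinery is never invoked.
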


First note that~\citet{alamgir2011phase} proved Thm.~\ref{thm:alamgirp} only for the $p=2$ case in a different context than clustering (See Appendix~\ref{sec:original}), and posed the case of general $p$ as an open problem.
We resolve this open problem.

Thm.~\ref{thm:alamgirp} means that the $p$-resistance has a good property inherited from the SSL problem Eq.~\eqref{eq:sslp} in a following sense.
Thm.~\ref{thm:alamgirp} tells us that the ``graph structural information'', which can be obtained by comparing $x^{\mathrm{*}_{ij}}_{\ell} -x^{\mathrm{*}_{ij}}_{i}$ and $x^{\mathrm{*}_{ij}}_{j}-x^{\mathrm{*}_{ij}}_{\ell}$, is equivalent to comparing $p$-resistances $r_{G,p}(i, \ell)$ and $r_{G,p}(\ell,j)$.
Henceforth, Thm.~\ref{thm:alamgirp} further translates the intuition about $\bfx^{\mathrm{*}_{ij}}$ into $p$-resistance.

Thus, combining the two observations above, looking at the distance matrix computed from $p$-resistances can be interpreted as follows. Each distance shows how close the pair is in terms of two-pole binary SSL problem. Doing clustering with this distance matrix assigns a cluster by looking at all the graph structural information of two-pole binary SSL problems, which tells us that ``which the third point $\ell$ is close to, $i$ or $j$?''

From the observations above, we see that Thm.~\ref{thm:alamgirp} motivates us to use $p$-resistance metrics for multi-class problem.
Without Thm.~\ref{thm:alamgirp}, our algorithm is somewhat naive; even though $p$-resistance has a metric property,  we do not know how much $p$-resistance contains the structural information.

\section{Related Work}
\label{sec:related}

This section reviews the related work to the clustering via graph $p$-seminorm.
Since our work uses graph $p$-seminorm for the clustering purpose, spectral clustering using graph $p$-Laplacian is relevant. 
The graph $p$-Laplacian is induced from graph $p$-seminorm and used for the clustering purpose~\cite{pgraph}.
\citet{tudisco2018nodal} showed a theoretical guarantee for the use of the first $k$ variational eigenvectors (i.e., eigenvectors obtained by variational principle) of $p$-Laplacian for $k$-class clustering.
While we know the exact identification for the second eigenvectors of $p$-Laplacian, we do not know how to obtain the third or higher eigenvectors~\cite{lindqvist2008nonlinear}.
Thus, it is practically difficult to use spectra of $p$-Laplacian for multi-class clustering.
To bypass this limitation,~\citet{pgraph} applied two-class clustering method to multi-class by recursively bisectioning a subgraph into two subgraphs.
However, even in the $p$$=$$2$ case, recursive bisectioning produces suboptimal results~\cite{simon1997good}. 
The earlier works~\cite{luo2010eigenvectors,ding2019multiway,pasadakis2022multiway} used approximated orthogonality between eigenvectors of $p$-Laplacian for multi-class clustering.
However, we do not have theoretical supports that this approximated $k$ eigenvectors are the approximation of \textit{the first} $k$ variational eigenvectors.
Thus, we need to say that these methods rely on the ``ad-hoc bypasses'' and do not fully exploit the graph $p$-seminorm. 
For more details, see Appendix~\ref{sec:onlimitation}.

Another relevant approach is resistance-based clustering. 
In~\cite{yen2005clustering}, $k$-medoids algorithm is applied to the square of 2-resistance.
For clustering purpose, similar distances to the 2-resistance is proposed~\cite{fouss2007random,nguyen2016new,yen2008family}
The most relevant approach in this category is the $k$-center algorithm for the ``distance'' matrix obtained from the \textit{exact} $p$-resistance in~\cite{herbster2010triangle}. 
~\citet{herbster2010triangle} did not numerically verify the algorithm.
Our work uses $p$-resistance metric instead of $p$-resistance since without the $1/(p-1)$-th power operation $p$-resistance does not satisfy the metric property (Eq.~\eqref{eq:herbstertriangle}).
However, if we use $k$-center algorithm to the \textit{exact} $p$-resistance metric, we obtain the same result as~\cite{herbster2010triangle}.
The reason is that the $k$-center algorithm only matters the order of the distance, and the $1/(p-1)$-th power operation does not change the order of the $p$-resistance.
On the other hand, we emphasize that the most significant difference between our work and~\cite{herbster2010triangle} is that while we use the \textit{approximated} $p$-resistance~\cite{herbster2010triangle} uses exact $p$-resistance.
We also mention that the work~\cite{nguyen2016new} proposed a distance from the $p$-seminorm flow point of view. 
However, this distance does not have characterization from the learning problem (Thm.~\ref{thm:alamgirp}). 

Also, the graph $p$-seminorm is actively used in semi-supervised learning (SSL). 
The SSL problem using graph $p$-seminorm is relevant to $p$-resistance since the $p$-resistance can be seen as SSL for two known labels.
Earlier, the SSL using graph 2-seminorm is considered~\cite{zhou2003learning,zhu2003semi,calder2020poisson}.
The SSL via graph 2-seminorm and effective resistance is known to be ``ill-posed'' when the size of the unlabeled data points is asymptotically large~\cite{nadler2009semi}.
To overcome this problem, graph $p$-seminorm based SSL and $p$-resistance are considered~\cite{alamgir2011phase,bridle2013p,el2016asymptotic,slepcev2019analysis}, where the $p$-resistance is shown to be meaningful when $p$ is large.
Finally, the graph $p$-seminorm is widely used in the machine learning community, such as online learning~\cite{herbster2009predicting} and the local graph clustering task, where we find a cluster which the given vertices belong to~\cite{veldt2019flow,fountoulakis2020p,liu2020strongly}.

\section{Preliminary Experiments}
\label{sec:experiment}

\begin{table*}[!t]
\centering
\small
\caption{
Experimental Results. 
The ``type'' shows the type of methods; (ER) for effective resistance based methods
and (SC) for spectral clustering methods.
The ``Hop'' stands for Hopkins 155 dataset.
In method of ER, ``(a)'' shows that the method uses the approximation by (Eq.~\eqref{eq:approximation1/p-1}) and ``(ex)'' computes the exact $p$-resistance by gradient descent.
Also, ``$k$-med'' is $k$-medoids, and ``FF'' is the farthest first.  
Thus, the method ``$k$-med (a) $p$'' is our proposed algorithm, and ``FF (ex) $p$'' and ``FF $p=2$'' is a method proposed by~\cite{herbster2010triangle}.
The ``$p$-Flow'' is~\cite{nguyen2016new}, ``ECT'' is~\cite{yen2005clustering}, ``Rec-bi $p$'' is ~\cite{pgraph}, and ``$p$-orth'' is~\cite{luo2010eigenvectors}.
Since ``Rec-bi $p$'' is a deterministic method, we only report error. 
Also, since Hop contains multiple datasets, we only show the average. 
Due to the significant computational time, we were unable to finish some of the experiments, which are shown as ``--''. 
}
\label{tab:res}
\begin{tabular}{c|c|cc|ccc}
\toprule
        &                                         & \multicolumn{2}{c|}{2 clsss}                 & \multicolumn{3}{c}{multi-class}                                          \\
Type    & Method                                  & ionosphere                  & Hop 2~cls      & iris                       & wine                       & Hop 3~cls      \\
\midrule
ER & $k$-med (a) $p$                         & \textbf{0.196 $\pm$ 0.000} & \textbf{0.056} & 0.078 $\pm$ 0.013 & \textbf{0.287 $\pm$ 0.000} & \textbf{0.144} \\
ER & $k$-med (ex) $p$                        & --                         & --             & \textbf{0.075 $\pm$ 0.000} &        0.427 $\pm$ 0.000                 & --             \\
ER & $k$-med $p=2$                           & 0.305 $\pm$ 0.000          & 0.236          & 0.331 $\pm$ 0.000          & 0.534 $\pm$ 0.000           & 0.306          \\
ER & FF (a) $p$                              & 0.330 $\pm$ 0.023          & 0.109          & 0.108 $\pm$ 0.045          & 0.339 $\pm$ 0.054          & 0.313          \\
ER & FF (ex) $p$~\cite{herbster2010triangle} & 0.344 $\pm$ 0.020          & --             & 0.109 $\pm$ 0.019          &   0.524 $\pm$ 0.046           & --             \\
ER & FF $p=2$~\cite{herbster2010triangle}    & 0.355 $\pm$ 0.035          & 0.274          & 0.320 $\pm$ 0.000          & 0.530 $\pm$ 0.000          & 0.357          \\
ER & $p$-Flow~\cite{nguyen2016new} &   0.291 $\pm$ 0.000 &    0.231       & 0.247 $\pm$ 0.000          & 0.543 $\pm$ 0.043   &  0.243    \\
ER & ECT~\cite{yen2005clustering}            & 0.376 $\pm$  0.000        & 0.155          & 0.247 $\pm$ 0.000          & 0.534 $\pm$ 0.000          & 0.310          \\
SC      & Rec-bi $p$~\cite{pgraph}                & 0.225                      & 0.200          & 0.089                       & 0.354                      & 0.237          \\
SC      & SC $p$-orth~\cite{luo2010eigenvectors}  & 0.215 $\pm$ 0.123          & 0.237          & 0.087  $\pm$ 0.089          & 0.327 $\pm$ 0.116          & 0.221          \\
SC      & SC $p=2$                                & 0.308 $\pm$ 0.000          & 0.216          & 0.093  $\pm$ 0.000          & 0.438 $\pm$ 0.000          & 0.251         
\end{tabular}
\end{table*}

\begin{figure*}[!t]
\begin{center}
\subfigure[\small{Ion}]{%
\includegraphics[width=.25\hsize,clip]{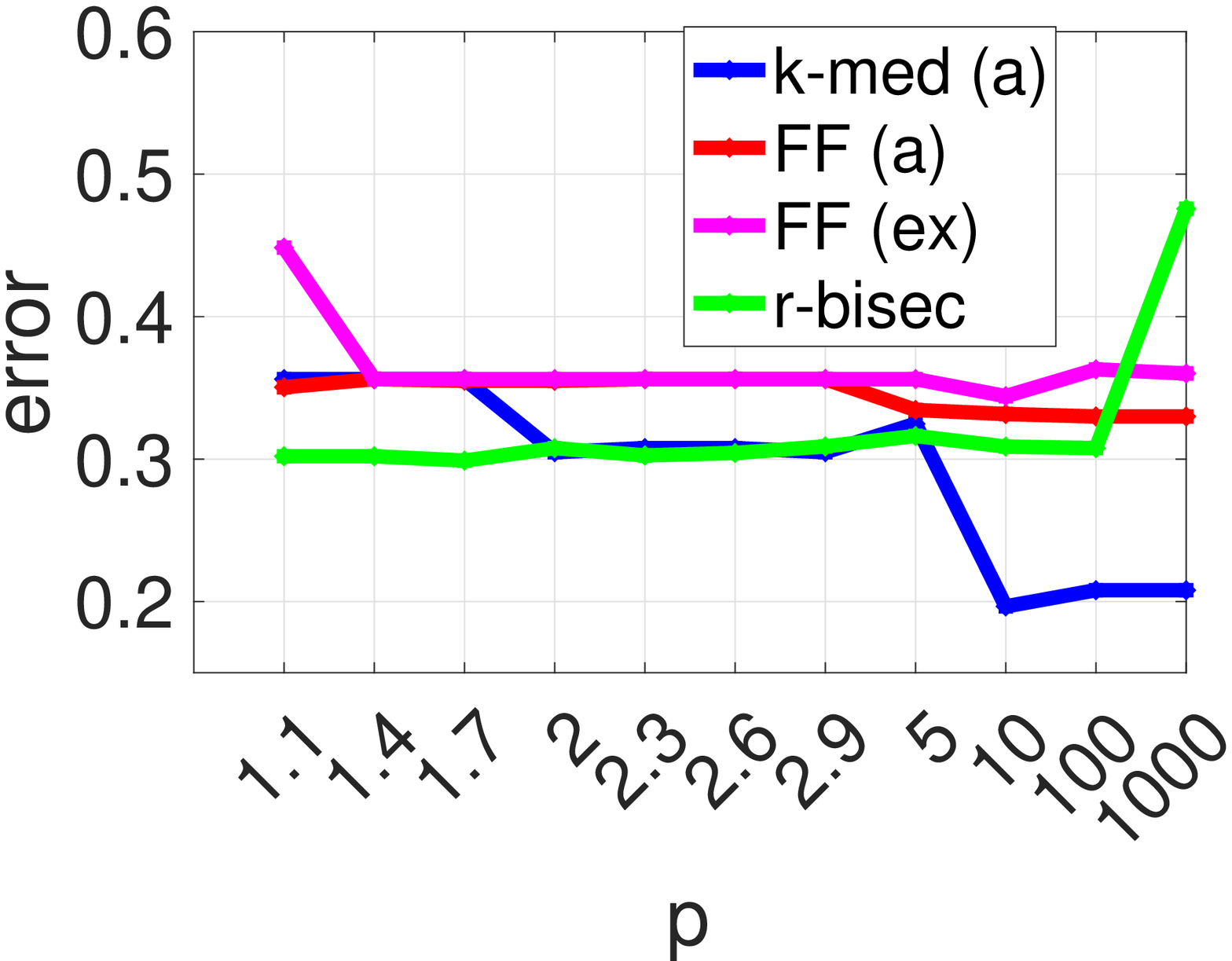}\label{fig:ion}}
\subfigure[\small{Wine}]{%
\includegraphics[width=.25\hsize,clip]{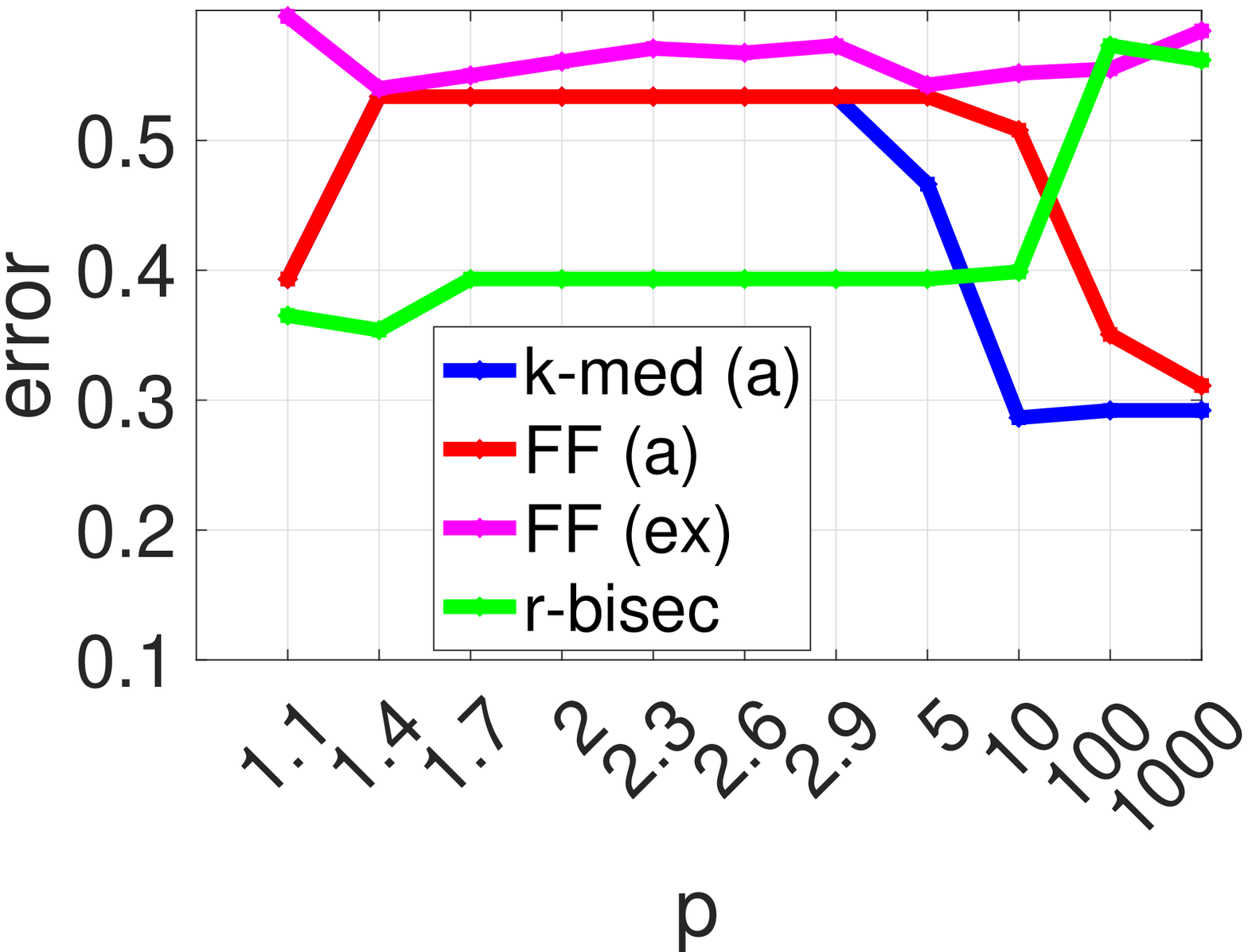}\label{fig:wine}}
\subfigure[\small{Iris}]{%
\includegraphics[width=.25\hsize,clip]{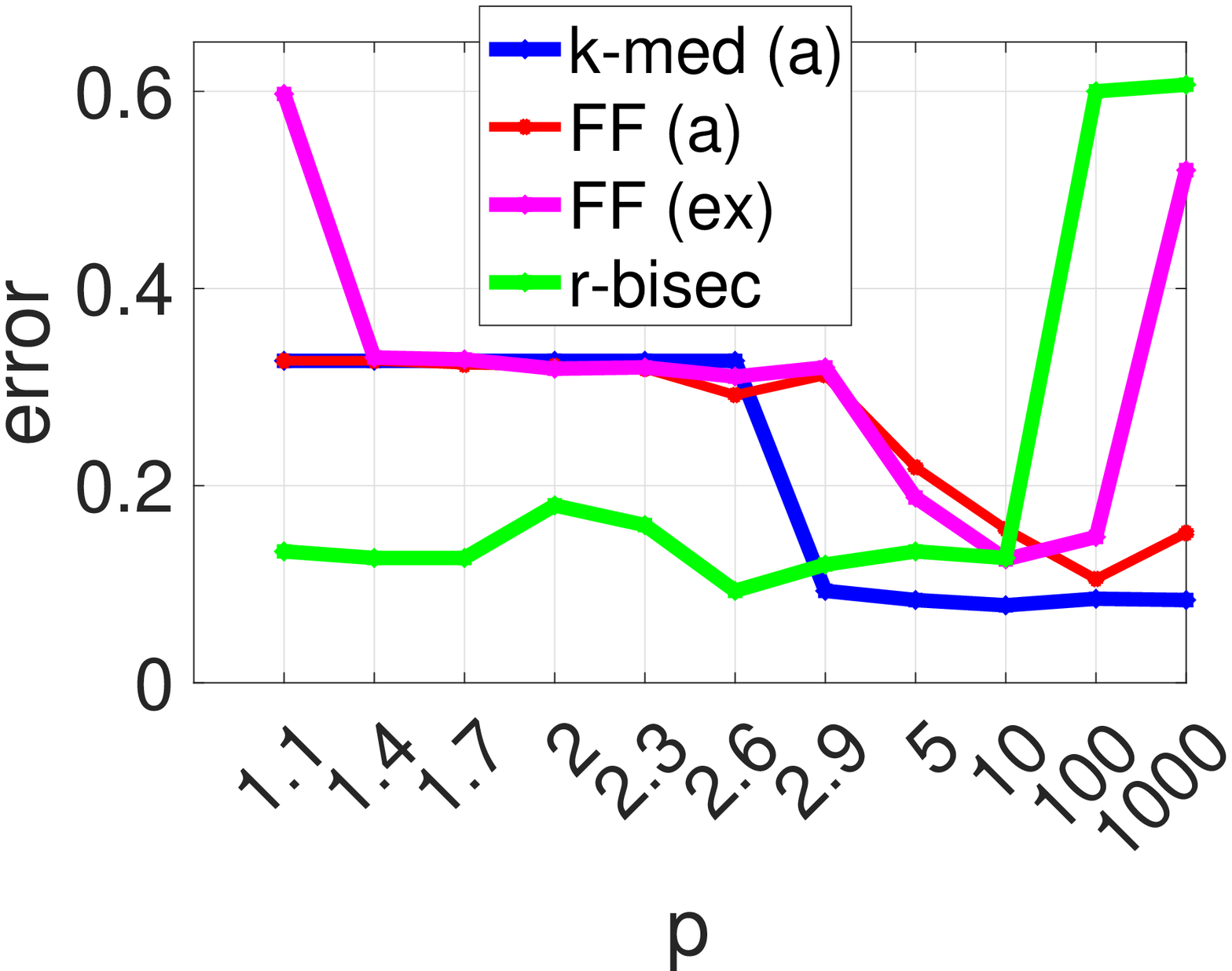}\label{fig:iris}}
\caption{
\small
Plots of the error verse $p$. 
}
\label{fig:exp}
\end{center}
\end{figure*}

\begin{table}[!t]
\small
\caption{Computational time for approximated vs exact $p$-resistance. (a) denotes approximation and (e) denotes exact. In ``r'' we reuse $L^{+}$. In ``et'' we compute $L^{+}$ each time. All time is in second.}
\label{tab:comptimeforapproxvsexact}
\begin{center}
\begin{tabular}{l|ccc}
\toprule
            & ionosphere         & iris               & wine               \\
        \midrule
(a) + r & 0.08 $\pm$ 0.04 & 0.07 $\pm$  0.03 & 0.01 $\pm$ 0.00 \\
(a) + et  & 0.39 $\pm$ 0.00 & 0.32 $\pm$ 0.00  & 0.05 $\pm$ 0.00 \\
(e)         & 1.11 $\pm$ 0.00 & 1.03 $\pm$ 0.04  & 0.36 $\pm$ 0.00 
\end{tabular}
\hfill
\end{center}
\end{table}

This section numerically demonstrates the performance of our Alg.~\ref{algo:clusteringkmedoids} using approximated $p$-resistance.
The purpose of this preliminary experiments is to evaluate if our algorithm on two-class and multi-class clustering problem improves the existing $p$-seminorm based graph clustering algorithm.
Thus, we compared with existing resistance based algorithms and spectral clustering algorithms using graph $p$-seminorm and its $p$$=$$2$ setting. 
For the resistance based method, we compared with the farthest first algorithm on our approximated $p$-resistance.
Additionally, we compared with existing methods; the farthest first using \textit{exact} $p$-resistance~\cite{herbster2010triangle}, a $p$-seminorm flow based method~\cite{nguyen2016new}, and 2-resistance based method~\cite{yen2005clustering}.
We especially note that for the farthest first~\cite{herbster2010triangle}, we computed the exact $p$-resistance by the gradient descent as discussed in Sec.~\ref{sec:approximating}.
We also apply this exact $p$-resistance to $k$-medoids.
Note that $k$-medoids is more costly than $k$-center as discussed in Sec.~\ref{sec:proposed}.
For spectral clustering methods, we compared with a recursively bisection method~\cite{pgraph} and a method using the approximated orthogonality~\cite{luo2010eigenvectors}.
Since the $p$-resistance is related to the unnormalized graph Laplacian, we use unnormalized graph Laplacian for the spectral methods.
Our experiments were performed on classification datasets, ionosphere, iris and wine from UCI repository.
We also used Hopkins155 dataset~\cite{tron2007benchmark}, which contains 120 two-class motion segmentation datasets and 35 three-class ones. 
We created a graph with the following procedure. 
First, we built a $k$-NN graph, where we choose $k$$=$$\mu n$ ($0$$<$$\mu$$\leq$$1$). Then, we computed the edge weight with a Gaussian kernel ($\kappa(\bfx_{i},\bfx_{j})$$=$$\exp (- \sigma \| \bfx_{i}-\bfx_{j} \|^2)$) for two real-valued vectors $\bfx_{i},\bfx_{j}$. 
We used free parameters $\mu$$\in$$\{0.04,0.06,0.08,0.1,1\}$, $\sigma$$\in$$\{10^{-3},\ldots,10^{2}\}$ and $p$$\in$$\{1.1,1.4,\ldots,2.9,5,10,100,1000\}$. 
For comparisons, we followed the original parameters other than above $\mu, \sigma$, and $p$.
We evaluated the performance by error rate, similarly to the previous study~\cite{pgraph}. 
Since the Hopkins155 dataset contains multiple two-class and three-class tasks, we take an average of error rates among a set of two-class tasks and three-class tasks and report both.
The implementation of our method is available at~\url{https://github.com/ShotaSAITO/approximated-presistance}.

The results are summarized as follows. 
We see that ours outperforms the others except for iris.  
As we expected, seeing the deviation, $k$-medoids offers more robust performance than the farthest-first algorithms.
Moreover, our approximation provides faster computation than the exact method since we could not finish some of the experiments using the exact $p$-resistance even for the farthest first.
Seeing Fig.~\ref{fig:exp}, in $k$-medoids large $p$ offers better performance.
Also, if we look at $p=2$, the $k$-medoids with 2-resistance is not always better than spectral clustering. 
However, for general $p$ the $k$-medoids with $p$-resistance performs better. 
Thus, the $k$-medoids with $p$-resistance can be said to be more benefited by $p$ than spectral clustering.
These correspond to the existing theoretical indication; $p$-resistance with large $p$ becomes meaningful function while 2-resistance is not~\cite{alamgir2011phase}.
Comparing exact and approximation $p$-resistance in Fig.~\ref{fig:exp}, while we observe similar performance in the middle range of $p$, we observe the better performance for approximation at the very small $p$ or very large $p$s. 
This might come from the numerical computation of the gradient $\nabla_{\bfx}\|\bfx\|_{G,p}^{p}$ as follows. 
For the exact solution of the very small $p$, the gradient of each step tends to be very small.
For the very large $p$, there is a risk of amplifying round-off numerical error at each step of optimization by taking the power of large $p$.
On the other hand, the approximation offers a robust computation, especially for important large $p$s, because instead we compute by taking the power of small $q$ in Eq.~\eqref{eq:approximation1/p-1}, by which we can avoid the risk discussed.

We next compare times to compute the exact and approximated $p$-resistance for randomly chosen 100 pairs of vertices.
We made a graph for ionosphere, iris, and wine by choosing the best-performing parameters in Table.~\ref{tab:res}.
We measure time for exact $p$-resistance by naively optimizing Eq.~\eqref{eq:penergyandpresistance} by the gradient descent.
For approximation, since we can reuse $L^{+}$ for our approximation, we report the time in two ways; i) we compute $L^{+}$ each time and compute the approximation ii) we reuse $L^{+}$. 
The ``each time'' scenario reports the computational time of $p$-resistance for a single pair.
In the reuse scenario, we measure time $t$ to compute $L^{+}$ and $p$-resistance 100 pairs using $L^{+}$. Then, we report the time $t/100$.
By this, the reuse scenario is much faster than the each time scenario.
Table~\ref{tab:comptimeforapproxvsexact} summarizes the computing time. 
We observe that the approximation method for a single pair is faster than the exact method by comparing the ``each time'' and exact.
As expected, the reuse scenario provides much faster computation than the exact $p$-resistance
For more details and results, including the computing time of Table~\ref{tab:res} and the comparison between values of the exact and approximated $p$-resistance, see Appendix~\ref{sec:detailsof}.

\section{Conclusion} 
\label{sec:conclusion}

We have proposed the multi-class clustering algorithm using the approximated $p$-resistance. 
For this purpose, we have shown the guarantee for the approximation of $p$-resistance.
We also have shown that $p$-resistance characterizes the solution of the semi-supervised learning problem.
Our algorithm has outperformed the existing clustering methods using the graph $p$-seminorm.

The limitation of this work is that we cannot exploit the sparse structures of graphs.
It is because we use $L^{+}$, which becomes dense even if the graph is sparse.
For future work, there remains an ample opportunity to further speed up the procedures involving pseudoinverse of graph Laplacian, such as sparsification techniques~\cite{spielman2008graph,spielman2011graph,spielman2014nearly} or numerical linear-algebraic methods~\cite{saad2003iterative}.
Another direction to consider is connection from $p$-resistance to $p$-Laplacian and nonlinear modularity such as~\cite{tudisco2018community}.
However, unfortunately, the current state of knowledge in the field offers very limited theoretical understanding, even for the standard case; we do not know the connections between 2-resistance and the standard existing clustering methods using the standard modularity and also the standard Laplacian.
Nevertheless, this potential connection highlights interesting open challenges.
Additionally, since we can compute 2-resistance by the simple size $n$ norm $\|(L^{+})^{1/2}\bfe_{i} - (L^{+})^{1/2}\bfe_{j}\|$, it is interesting to see if we can further approximate $\|L^{+}\bfe_{i} - L^{+}\bfe_{j}\|_{G,p}$ by the size $n$ norm instead of the size $m$ graph $p$-seminorm. 
This further speeds up the approximation from $O(m)$ to $O(n)$.
Moreover, instead of our approximated representation of $p$-resistance approach, the exciting approach is to obtain exact representation of $p$-resistance. 
We leave some discussion on the difficulty of this approach in Appendix.~\ref{sec:difficulties}.
Furthermore, the experiments using network datasets would be a nice to confirm.
Finally, it would be also interesting if we apply this $p$-resistance framework to the recently growing space of hypergraph clustering using hypergraph $p$-Laplacians~\cite{TotalVariation,saito2018hypergraph,li2018submodular, saito2023generalizing}.

\section*{Acknowledgments}
This research has been made possible thanks to Huawei, who are
supporting Shota Saito's PhD study at UCL.

{
\bibliography{reference}
}
\bibliographystyle{icml2023}

\clearpage
\appendix
\onecolumn

\section{On Limitation to Multi-class Spectral Clustering using Graph $p$-Laplacian}
\label{sec:onlimitation}

In Sec.~\ref{sec:introduction} and Sec.~\ref{sec:related}, we claim that it is difficult to obtain the third or higher eigenpairs of graph $p$-Laplacian.
This claim is a key motivation on why we take an alternate approach to the graph $p$-Laplacian.
Thus, for completeness, this section briefly reviews graph $p$-Laplacian, and its difficulties to apply for multi-class clustering.

\subsection{Eigenpairs of $p$-Laplacian}
\label{sec:eigenapirsofplaplacian}

To start, following~\cite{pgraph} we introduce the graph $p$-Laplacian and its eigenpairs.
The graph $p$-Laplacian $\Delta_{p}$ is defined as
\begin{align}
\label{eq:defplaplacian}
    (\Delta_{p}\bfx)_{i}:= \sum_{i,j\in V} a_{ij}|x_{i} - x_{j}|^{p-1}\sgn(x_{i} - x_{j}).
\end{align}
By this construction, we have $S_{G,p}(\bfx) = \langle \bfx, \Delta_{p}\bfx \rangle$.
The $k$-th eigenpair ($\lambda_{k}, \bfx_{k}$) of the $p$-Laplacian $\Delta_{p}$ satisfies
\begin{align}
    (\Delta_{p} \bfx_{k})_{i} = \lambda_{k} |(\bfx_{k})_{i}|^{p-1} \mathrm{sgn}((\bfx_{k})_{i}), \forall i \in V.
\end{align}
Note that the first eigenpair is $(0,\mathbf{1})$.
The eigenpairs are characterized by the critical values of the Rayleigh quotient.
\begin{proposition}[\citet{tudisco2018nodal}]
\label{prop:critical}
The eigenpairs of graph $p$-Laplacian is a critical value and point of the Rayleigh quotient defined as
\begin{align}
    \label{eq:rayleigh}
    R_{G,p}(\bfx) := \frac{S_{G,p}(\bfx)}{\|\bfx\|_{p}^{p}}.
\end{align}
\end{proposition}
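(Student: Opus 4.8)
The plan is to identify the eigenpairs of $\Delta_p$ with the critical points/values of the Rayleigh quotient by a direct first-order (Lagrange-multiplier) computation, exploiting that for $p>1$ both the numerator $S_{G,p}$ and the denominator $\|\cdot\|_p^p$ are continuously differentiable. First I would record that $R_{G,p}$ is scale-invariant, $R_{G,p}(c\bfx)=R_{G,p}(\bfx)$ for $c\neq 0$, so it suffices to look for critical points over $\bfx\neq\bm 0$ (equivalently, critical points of $S_{G,p}$ on the sphere $\|\bfx\|_p=1$ via a Lagrange multiplier); the two formulations produce the same stationarity condition.

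Second, I would compute the two gradients. Since $t\mapsto|t|^p$ has derivative $p\,\sgn(t)|t|^{p-1}$, differentiating $\|\bfx\|_p^p=\sum_i|x_i|^p$ gives $\nabla\|\bfx\|_p^p = p\,f_{p-1}(\bfx)$, with $f_\theta$ as defined in the excerpt. Differentiating $S_{G,p}(\bfx)$ edge-by-edge and collecting the terms incident to each vertex $k$ yields $\bigl(\nabla S_{G,p}(\bfx)\bigr)_k = p\sum_j a_{kj}|x_k-x_j|^{p-1}\sgn(x_k-x_j) = p(\Delta_p\bfx)_k$, i.e. $\nabla S_{G,p}(\bfx)=p\,\Delta_p\bfx$, which is consistent with the identity $S_{G,p}(\bfx)=\langle\bfx,\Delta_p\bfx\rangle$ stated earlier.

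Third, applying the quotient rule and setting $\nabla R_{G,p}(\bfx)=\bm 0$ at a nonzero critical point $\bfx$ forces the numerator to vanish; dividing by $p\,\|\bfx\|_p^{p}$ gives
\[
\Delta_p\bfx \;=\; \frac{S_{G,p}(\bfx)}{\|\bfx\|_p^p}\,f_{p-1}(\bfx)\;=\;R_{G,p}(\bfx)\,f_{p-1}(\bfx),
\]
which is exactly the eigenpair equation $(\Delta_p\bfx)_i=\lambda|x_i|^{p-1}\sgn(x_i)$ with $\lambda=R_{G,p}(\bfx)$, so the critical value is the eigenvalue. For the converse, starting from an eigenpair $\Delta_p\bfx=\lambda f_{p-1}(\bfx)$, I would take the inner product with $\bfx$ and use $\langle\bfx,f_{p-1}(\bfx)\rangle=\sum_i|x_i|^p=\|\bfx\|_p^p$ together with $\langle\bfx,\Delta_p\bfx\rangle=S_{G,p}(\bfx)$ to conclude $\lambda=R_{G,p}(\bfx)$, and then read the gradient identity backwards to obtain $\nabla R_{G,p}(\bfx)=\bm 0$.

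The main obstacle I anticipate is regularity: I must justify differentiating $|t|^p$ at $t=0$, which is precisely where the hypothesis $p>1$ enters, since then $p\,\sgn(t)|t|^{p-1}\to 0$ as $t\to0$, so $|t|^p\in C^1$ and both $S_{G,p}$ and $\|\cdot\|_p^p$ are genuinely $C^1$, making the quotient rule legitimate. A secondary point to handle cleanly is that, unlike the $p=2$ case where the right-hand side is linear in $\bfx$, the ``mass'' term here is the nonlinear map $f_{p-1}(\bfx)$; the computation above shows it arises exactly as $\tfrac1p\nabla\|\bfx\|_p^p$, which is what makes this nonlinear eigenvalue problem the correct stationarity condition for $R_{G,p}$.
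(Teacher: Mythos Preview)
The paper does not supply its own proof of this proposition; it is quoted from \citet{tudisco2018nodal} without argument. Your direct computation is correct and is the standard one: the gradient identity $\nabla S_{G,p}(\bfx)=p\,\Delta_p\bfx$ (which the paper itself records later as Prop.~\ref{prop:laplacianproperties}) together with $\nabla\|\bfx\|_p^p=p\,f_{p-1}(\bfx)$ and the quotient rule give the equivalence in one direction, and taking the inner product with $\bfx$ recovers $\lambda=R_{G,p}(\bfx)$ for the converse. Your remark that $p>1$ is needed so that $|t|^p\in C^1$ is exactly the right regularity check.
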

From this proposition, we can know that $R_{G,p}(a\bfx) = R_{G,p}(\bfx)$, for $a\in \R$.
Therefore, to consider the eigenpairs of the graph $p$-Laplacian, we can limit our interest to $\mcalS_{p} := \{\bfx \mid \| \bfx \|^{p}_{p} = 1\}$, seeing the Rayleigh quotient in Eq.~\eqref{eq:rayleigh}.

In the sequel, we briefly explain why we can obtain the second eigenpair and why it is difficult to obtain the third or higher eigenpairs of the graph $p$-Laplacian.
We now define the following quotient,
\begin{align}
\label{eq:rayleigh_2}
R^{(2)}_{G,p}(\bfx) := \frac{S_{G,p}(\bfx)}{\min_{\eta}\|\bfx - \eta \mathbf{1}\|_{p}^{p}}.
\end{align}
This quotient gives the second eigenpair of $p$-Laplacian.
\begin{proposition}[\citet{pgraph}]
\label{prop:secondeigenglobal}
The global solution to Eq.~\eqref{eq:rayleigh_2} is given by $\bfx^{*} = \bfx_2 + \eta^{*}\mathbf{1}$, where $\eta^{*}=\argmin_{\eta} \| \bfx_2 - \eta \mathbf{1} \|_{p}^{p}$, and $\bfx_2$ is the second $p$-eigenvector.
\end{proposition}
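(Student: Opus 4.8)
The plan is to exploit the translation invariance of $R^{(2)}_{G,p}$ to convert the unconstrained minimization in Eq.~\eqref{eq:rayleigh_2} into a constrained minimization of the ordinary Rayleigh quotient $R_{G,p}$, and then to invoke Prop.~\ref{prop:critical} to identify the constrained minimizer as an eigenvector. First I would record that both $S_{G,p}$ and $\min_{\eta}\|\bfx-\eta\mathbf{1}\|_p^p$ are invariant under $\bfx\mapsto\bfx+c\mathbf{1}$: the numerator because $S_{G,p}(\bfx)=S_{G,p}(\bfx+c\mathbf{1})$, and the denominator because the inner minimization over $\eta$ absorbs the shift. Hence $R^{(2)}_{G,p}(\bfx+c\mathbf{1})=R^{(2)}_{G,p}(\bfx)$, so the solution set of Eq.~\eqref{eq:rayleigh_2} is a full translation orbit and it suffices to pin down one representative; the term $\eta^{*}\mathbf{1}$ in the statement merely selects the canonical member of that orbit.

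Next I would note that for $p>1$ the map $\eta\mapsto\|\bfx-\eta\mathbf{1}\|_p^p$ is strictly convex whenever $\bfx\notin\mathrm{span}(\mathbf{1})$, so $\eta^{*}(\bfx):=\argmin_{\eta}\|\bfx-\eta\mathbf{1}\|_p^p$ is unique and characterized by the first-order condition $\mathbf{1}^{\top}f_{p-1}(\bfx-\eta^{*}\mathbf{1})=0$. Translating so that $\eta^{*}=0$, the problem reduces to minimizing $R_{G,p}(\bfx)=S_{G,p}(\bfx)/\|\bfx\|_p^p$ over the set $\mathcal{M}:=\{\bfx:\mathbf{1}^{\top}f_{p-1}(\bfx)=0\}$, where the two quotients agree. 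Existence of a minimizer follows by restricting to the compact set $\mathcal{M}\cap\mcalS_p$.

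The core step is to show the constrained minimizer $\bfx^{*}$ is an eigenvector. Using $\nabla S_{G,p}(\bfx)=p\,\Delta_p\bfx$ and $\nabla\|\bfx\|_p^p=p\,f_{p-1}(\bfx)$, a Lagrange-multiplier condition at $\bfx^{*}$ reads $\Delta_p\bfx^{*}-\lambda f_{p-1}(\bfx^{*})=\nu\,\nabla g(\bfx^{*})$, where $\lambda=R_{G,p}(\bfx^{*})$ and $g(\bfx)=\mathbf{1}^{\top}f_{p-1}(\bfx)$. I would kill the multiplier $\nu$ by pairing with $\mathbf{1}$: since $\mathbf{1}^{\top}\Delta_p\bfx^{*}=0$ by the antisymmetry of the summand in Eq.~\eqref{eq:defplaplacian}, and $\mathbf{1}^{\top}f_{p-1}(\bfx^{*})=0$ on $\mathcal{M}$, while $\mathbf{1}^{\top}\nabla g(\bfx^{*})=(p-1)\sum_i|x^{*}_i|^{p-2}>0$, one gets $\nu=0$. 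Thus $\Delta_p\bfx^{*}=\lambda f_{p-1}(\bfx^{*})$, so $(\lambda,\bfx^{*})$ is an eigenpair consistent with Prop.~\ref{prop:critical}. To identify $\lambda=\lambda_2$, I would observe that every eigenvector with nonzero eigenvalue lies in $\mathcal{M}$ (pair its eigen-equation with $\mathbf{1}$), so $\lambda=\min_{\mathcal{M}}R_{G,p}$ is the smallest nonzero eigenvalue; since $\lambda_1=0$ with eigenvector $\mathbf{1}\notin\mathcal{M}$, this value is exactly $\lambda_2$ and $\bfx^{*}=\bfx_2$ up to the additive $\eta^{*}\mathbf{1}$.

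The main obstacle is the non-smoothness when $1<p<2$: the constraint gradient $\nabla g$ involves $|x_i|^{p-2}$, which is singular at coordinates where $x^{*}_i=0$, so the naive Lagrange argument is not directly valid. I would address this either by working on the open stratum where $\bfx^{*}$ has no vanishing coordinates and arguing separately (or by perturbation) that a minimizer cannot sit on a coordinate subspace in a way that breaks the $\mathbf{1}$-pairing, or by phrasing optimality via the Clarke subdifferential and checking that the pairing identity survives. Verifying this degenerate case carefully, together with the existence/compactness details and the precise correspondence between critical points of $R^{(2)}_{G,p}$ and of the constrained $R_{G,p}$, is where the real work lies; the algebraic skeleton above is otherwise routine.
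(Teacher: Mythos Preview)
The paper itself does not prove this proposition; it is quoted from \cite{pgraph} as background in Appendix~\ref{sec:eigenapirsofplaplacian}, and no argument is supplied anywhere in the text. So there is no in-paper proof to compare against.

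That said, your approach is essentially the one in the original B\"uhler--Hein source, with a minor reorganization. They differentiate $R^{(2)}_{G,p}$ directly and use that the inner minimizer $\eta^{*}(\bfx)$ is a stationary point of $\eta\mapsto\|\bfx-\eta\mathbf{1}\|_p^p$, so by the envelope theorem the $\eta$-derivative contributes nothing and the eigenvector equation $\Delta_p\bfx=\lambda f_{p-1}(\bfx)$ falls out immediately. You instead first pass to the constraint set $\mathcal{M}=\{\mathbf{1}^{\top}f_{p-1}(\bfx)=0\}$ and then kill the associated Lagrange multiplier by pairing with $\mathbf{1}$. The two routes are algebraically equivalent; yours simply makes the constraint explicit rather than hiding it in the envelope step. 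Your identification of $\lambda$ with $\lambda_2$ via the observation that every eigenvector with nonzero eigenvalue lies in $\mathcal{M}$ is exactly the argument in the source. The non-smoothness caveat you raise for $1<p<2$ is genuine and is the only place requiring real care; the rest of your sketch is correct.
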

This proposition shows that we have an exact identification for the second $p$-eigenpair; minimizing Eq.~\eqref{eq:rayleigh_2} gives the second $p$-eigenpair of $\Delta_{p}$.
However, we have not known yet the exact identification for third or higher eigenpair of $p$-Laplacian~\citep{lindqvist2008nonlinear}.

While we do not know the identification such as Eq.~\eqref{eq:rayleigh_2} for the higher order eigenpairs, the next question is if there is characterization of eigenpairs of the graph $p$-Laplacian, such as ``orthognality'' for the $p=2$ case.
When $p=2$, the eigenvectors of the graph Laplacian are further characterized from Prop.~\ref{prop:critical}; the eigenvectors of graph Laplacian are orthogonal to each other.
By using orthogonality and Rayleigh-Ritz theorem, we can obtain the full eigenvectors of the graph 2-Laplacian as

\begin{proposition}[Rayleigh-Ritz]
\label{prop:rayleighritzminmaxp=2}
Let $\bfx_{1},\ldots\bfx_{k-1}$ be eigenvectors of the graph Laplacian $L$. Then the $k$-th eigenvector $\bfx_{k}$ is given as
\begin{align}
\label{eq:seqeigenp=2}
    \bfx_{k} = \argmin_{\bfx} R_{G,2}(\bfx) \quad \mathrm{s.t.}\  \bfx_{k} \bot \bfx_{1},\ldots,\bfx_{k-1}.
\end{align}
\end{proposition}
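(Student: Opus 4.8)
The plan is to recognize Prop.~\ref{prop:rayleighritzminmaxp=2} as the classical Courant--Fischer / Rayleigh--Ritz characterization specialized to the symmetric matrix $L$, and to drive the proof entirely through the spectral decomposition of $L$. First I would record the two structural facts that the graph setting supplies: $L = C^{\top}WC$ is symmetric, and $\bfx^{\top}L\bfx = S_{G,2}(\bfx) = \|\bfx\|_{G,2}^{2} \geq 0$, so $L$ is positive semidefinite. By the spectral theorem, $L$ therefore admits an orthonormal eigenbasis $\bfx_{1},\ldots,\bfx_{n}$ with real eigenvalues ordered $\lambda_{1} \leq \cdots \leq \lambda_{n}$, where $\bfx_{1},\ldots,\bfx_{k-1}$ in the statement are the first $k-1$ of these and $R_{G,2}(\bfx) = \bfx^{\top}L\bfx / \bfx^{\top}\bfx$.

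Next I would exploit the orthogonality constraint. Writing an arbitrary $\bfx \in \R^{n}$ in the eigenbasis as $\bfx = \sum_{i=1}^{n} c_{i}\bfx_{i}$, the constraints $\bfx \bot \bfx_{1},\ldots,\bfx_{k-1}$ force $c_{1} = \cdots = c_{k-1} = 0$ by orthonormality, so the feasible vectors are exactly those supported on $\{\bfx_{k},\ldots,\bfx_{n}\}$. For such $\bfx$ I would compute $\bfx^{\top}L\bfx = \sum_{i=k}^{n} \lambda_{i} c_{i}^{2}$ and $\bfx^{\top}\bfx = \sum_{i=k}^{n} c_{i}^{2}$, again using orthonormality. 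Since $\lambda_{i} \geq \lambda_{k}$ for every $i \geq k$, this yields $R_{G,2}(\bfx) = (\sum_{i=k}^{n} \lambda_{i} c_{i}^{2}) / (\sum_{i=k}^{n} c_{i}^{2}) \geq \lambda_{k}$ for every feasible $\bfx$.

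Finally I would check attainment: the vector $\bfx = \bfx_{k}$ is feasible, being orthogonal to $\bfx_{1},\ldots,\bfx_{k-1}$ by orthonormality, and it gives $R_{G,2}(\bfx_{k}) = \lambda_{k}$, so it attains the lower bound and is therefore a minimizer, matching the claimed identity. I do not anticipate a serious obstacle, since this is a textbook argument once $L$ is identified as symmetric positive semidefinite; the only point worth flagging is that $R_{G,2}$ is scale invariant and the minimizer need not be unique when $\lambda_{k}$ is a repeated eigenvalue, so the proposition should be read as asserting that $\bfx_{k}$ is \emph{among} the minimizers, which the computation above establishes.
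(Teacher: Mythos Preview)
Your argument is correct and is the standard spectral-decomposition proof of the Rayleigh--Ritz principle. The paper, however, does not supply its own proof of this proposition: it is stated in the appendix as a classical result (labelled ``Rayleigh--Ritz'') and invoked as background for the discussion of $p$-Laplacian eigenpairs, with no accompanying derivation. So there is nothing to compare against; your write-up simply fills in a proof that the paper omits, and the route you take---orthonormal eigenbasis, expansion in coordinates, the lower bound $R_{G,2}(\bfx)\geq\lambda_k$ on the orthogonal complement, and attainment at $\bfx_k$---is exactly the textbook one. Your closing caveat about non-uniqueness when $\lambda_k$ has multiplicity is also appropriate and worth keeping.
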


By using this proposition and the sequence Eq.~\eqref{eq:seqeigenp=2}, we can easily obtain the higher order eigenvectors.
This orthogonality constraint of eigenvectors comes from the nature of $L^{2}$ space induced from the graph 2-seminorm.
However, we lose this sense of orthogonality if we expand to $p$-seminorm, since we lose the inner product structure in the $L^{p}$ space.

In this context, the following further generalizes the ``orthogonality'' to graph $p$-Laplacian.
To consider how we generalize the ``orthogonality'', we look back to the $p=2$ case. 
Prop.~\ref{prop:rayleighritzminmaxp=2} can be rewritten as in a more abstract fashion as follows.
\begin{proposition}[Courant's min-max]
   \label{prop:rayleighcourant}
The $k$-th eigenvalue $\lambda_{k}$ of the graph Laplacian $L$ is given as
\begin{align}
\label{eq:seqeigencourantp=2}
\lambda_{k} = \min_{B; \mathrm{dim}(B)=k}\max_{\mathbf{x} \in B}\frac{\|\mathbf{x}\|_{G,2}^{2}}{\|\mathbf{x}\|_{2}^{2}}.
\end{align} 
\end{proposition}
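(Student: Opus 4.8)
The plan is to recognize Eq.~\eqref{eq:seqeigencourantp=2} as the classical Courant--Fischer min-max theorem applied to the symmetric positive semidefinite matrix $L$, and to prove it by the standard two-sided argument. First I would rewrite the objective in terms of $L$: since $\|\mathbf{x}\|_{G,2}^{2} = \langle \mathbf{x}, \mathbf{x}\rangle_{L} = \mathbf{x}^{\top} L \mathbf{x}$ and $\|\mathbf{x}\|_{2}^{2} = \mathbf{x}^{\top}\mathbf{x}$, the quantity being optimized is exactly the Rayleigh quotient $R(\mathbf{x}) := \mathbf{x}^{\top} L \mathbf{x}/(\mathbf{x}^{\top}\mathbf{x})$ of $L$. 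Because $L$ is real symmetric, the spectral theorem supplies an orthonormal eigenbasis $\mathbf{x}_{1},\ldots,\mathbf{x}_{n}$ with eigenvalues $\lambda_{1}\leq\cdots\leq\lambda_{n}$; expanding an arbitrary $\mathbf{x}=\sum_{i}c_{i}\mathbf{x}_{i}$ gives $R(\mathbf{x}) = \sum_{i}\lambda_{i}c_{i}^{2}/\sum_{i}c_{i}^{2}$, a convex combination of the eigenvalues.

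For the upper bound (that the outer minimum is at most $\lambda_{k}$), I would exhibit the specific $k$-dimensional subspace $B_{\star}=\mathrm{span}(\mathbf{x}_{1},\ldots,\mathbf{x}_{k})$. On $B_{\star}$ only the coefficients $c_{1},\ldots,c_{k}$ are nonzero, so $R(\mathbf{x})\leq\lambda_{k}$ for every nonzero $\mathbf{x}\in B_{\star}$, whence $\max_{\mathbf{x}\in B_{\star}}R(\mathbf{x})\leq\lambda_{k}$, and therefore the minimum over all $B$ with $\dim(B)=k$ is at most $\lambda_{k}$.

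For the lower bound (that every $k$-dimensional $B$ forces a maximum at least $\lambda_{k}$), I would use a dimension-counting intersection argument. Fix any subspace $B$ with $\dim(B)=k$ and set $W=\mathrm{span}(\mathbf{x}_{k},\mathbf{x}_{k+1},\ldots,\mathbf{x}_{n})$, so that $\dim(W)=n-k+1$. Since $\dim(B)+\dim(W)=n+1>n$, the intersection $B\cap W$ contains a nonzero vector $\mathbf{x}$. For such $\mathbf{x}$ only $c_{k},\ldots,c_{n}$ are nonzero, so $R(\mathbf{x})\geq\lambda_{k}$, giving $\max_{\mathbf{x}\in B}R(\mathbf{x})\geq R(\mathbf{x})\geq\lambda_{k}$. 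Combining the two bounds yields the claimed equality, with the optimal subspace realized by $B_{\star}$.

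This is a classical statement, so I do not expect a genuine difficulty; the only step requiring care is the lower bound, where one must invoke the linear-algebraic fact that two subspaces whose dimensions sum to more than $n$ must meet nontrivially. I would also note that nothing special happens at $\lambda_{1}=0$ (the kernel spanned by $\mathbf{1}$): we optimize over all $\mathbf{x}\neq\mathbf{0}$ so that $R$ is well defined, and the argument treats the zero eigenvalue exactly like any other.
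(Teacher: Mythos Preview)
Your argument is the standard and correct proof of the Courant--Fischer min-max theorem: rewrite the quotient as the Rayleigh quotient of the symmetric matrix $L$, use the eigenbasis expansion, exhibit $B_\star=\mathrm{span}(\mathbf{x}_1,\ldots,\mathbf{x}_k)$ for the upper bound, and use the dimension-count intersection with $\mathrm{span}(\mathbf{x}_k,\ldots,\mathbf{x}_n)$ for the lower bound. There is no gap.

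Regarding comparison with the paper: the paper does not actually prove this proposition. It is stated as a classical result (labelled ``Courant's min-max'' and referred to as the \emph{variational principle}) and is invoked only as background to motivate the generalized min-max characterization via Krasnoselskii genus for the $p$-Laplacian. So your proof supplies what the paper omits, and it is exactly the textbook argument one would expect for this classical statement.
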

This proposition is also called as \textit{variational principle}.
Since the graph Laplacian is simpler due to its symmetric matrix nature, we have a simpler claim of Prop.~\ref{prop:rayleighcourant} as Prop.~\ref{prop:rayleighritzminmaxp=2}.
Prop.~\ref{prop:rayleighritzminmaxp=2} shows that the dimension plays a role to characterize the eigenpairs for the $p=2$ case.
The idea to characterize the eigenpairs of the graph $p$-Laplacian is that we consider some generalized dimension that is suitable for the $p$-seminorm.

Aiming for this ``generalized'' dimension, we use \textit{Krasnoselskii genus} $\gamma$ for a set $B$; 
\begin{align}
\label{eq:defgenus}
    \gamma(B) =
    \left\{
    \begin{array}{l}
     0  \mathrm{\ if}\ B=\emptyset \\
      \inf \{k \in \mathbf{Z}^{+} \mid \exists \mathrm{odd\ continuous}\ h:B \rightarrow \mathbf{R}^{k}\backslash \{0\}  \}   \\
      \infty  \mathrm{\ when\ no\ such\ }h \mathrm{\ exists\ } \forall j \in \mathbf{Z}^{+}
    \end{array}
    \right.
\end{align}
This genus is a generalized concept of dimension of $B$. 
Using this genus, we can characterize the eigenpairs;
\begin{proposition}[\citet{tudisco2018nodal}]
\label{prop:criticalvalue}
Consider the set of subsets $\mcalF_k(\mathcal{S}_{p}) := \{ B \subset \mcalS_{p} \mid B = -B, \mathrm{\ closed\ }, \gamma(B) \geq k \}$. 
The sequence defined as
    \begin{align}
        \label{eq:minmax}
        \lambda_{k}  = \min_{B \subset \mcalF_k(\mathcal{S}_{p})} \max_{\bfx \in B} R_{p} (\bfx) 
    \end{align}
gives a critical point of $R_{p} (\bfx)$, whose corresponding $\bfx$ and $\lambda_k$ constitute an eigenpair of $\Delta_p$. 
\end{proposition}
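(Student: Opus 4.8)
The plan is to prove this via the Lusternik--Schnirelmann (LS) minimax theory, treating $R_p$ as an even $C^1$ functional constrained to the compact symmetric manifold $\mcalS_p$ and using the genus $\gamma$ as a topological index. The statement has two parts that I would separate: that each level $\lambda_k$ defined by Eq.~\eqref{eq:minmax} is \emph{attained} as a critical value of $R_p$, and that a critical point on $\mcalS_p$ is an eigenvector of $\Delta_p$.

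First I would dispatch the eigenpair correspondence by reducing to a constrained stationarity problem. Writing the minimization of $S_{G,p}$ subject to $\|\bfx\|_p^p = 1$ and applying Lagrange multipliers, the stationarity condition $\nabla S_{G,p}(\bfx) = \lambda\,\nabla\|\bfx\|_p^p$ unwinds, using $\left(\nabla S_{G,p}(\bfx)\right)_i \propto \left(\Delta_p\bfx\right)_i$ and $\left(\nabla\|\bfx\|_p^p\right)_i \propto |x_i|^{p-1}\sgn(x_i)$, and after absorbing the positive constants reduces exactly to $\left(\Delta_p\bfx\right)_i = \lambda\,|x_i|^{p-1}\sgn(x_i)$. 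Hence a point $\bfx \in \mcalS_p$ is a critical point of $R_p$ if and only if $(\lambda,\bfx)$ with $\lambda = R_p(\bfx)$ is an eigenpair. It therefore suffices to show that each $\lambda_k$ is a critical value.

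Second I would set up the LS machinery. Since $p>1$, both $S_{G,p}$ and $\|\cdot\|_p^p$ are $C^1$ and even, so $R_p$ is a well-defined even $C^1$ functional, and $\mcalS_p$ is a compact $C^1$ manifold (a regular level set, since $\nabla\|\bfx\|_p^p \neq 0$ on $\mcalS_p$). Compactness in $\R^n$ makes the Palais--Smale condition automatic. I would then record the genus facts used below: monotonicity $\gamma(A)\le\gamma(B)$ for $A\subseteq B$; the invariance inequality $\gamma(h(B))\ge\gamma(B)$ for any odd continuous $h$; and that the intersection of $\mcalS_p$ with a $k$-dimensional subspace is an odd image of $S^{k-1}$, hence has genus $k$. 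This last fact shows $\mcalF_k(\mcalS_p)\neq\emptyset$, and since $R_p$ is continuous and bounded on the compact $\mcalS_p$ with $R_p\ge 0$, each $\lambda_k$ is finite and well-defined. The core is then the standard contradiction: if $\lambda_k$ were a regular value, the equivariant deformation lemma yields $\epsilon>0$ and an odd continuous $\eta$ with $\eta\bigl(\{\bfx\in\mcalS_p : R_p(\bfx)\le\lambda_k+\epsilon\}\bigr) \subseteq \{\bfx\in\mcalS_p : R_p(\bfx)\le\lambda_k-\epsilon\}$. Choosing $B\in\mcalF_k$ with $\max_B R_p \le \lambda_k+\epsilon$ (which exists since $\lambda_k$ is an infimum), the set $\eta(B)$ satisfies $\gamma(\eta(B))\ge\gamma(B)\ge k$, so $\eta(B)\in\mcalF_k$, yet $\max_{\eta(B)}R_p \le \lambda_k-\epsilon$, contradicting the minimality of $\lambda_k$. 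Thus $\lambda_k$ is a critical value, and by the first step its critical point is the claimed eigenvector.

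The main obstacle I anticipate is the deformation step: constructing an odd pseudo-gradient vector field for a merely $C^1$ (not $C^2$) functional and integrating it into a $\mathbb{Z}_2$-equivariant deformation whose flow stays tangent to the constraint manifold $\mcalS_p$. The evenness of $R_p$ is precisely what forces its tangential gradient to be odd and hence permits the equivariant construction, but carefully verifying the deformation lemma together with the Palais--Smale bookkeeping is where the real work lies; by comparison the Lagrange-multiplier reduction and the genus estimates are routine.
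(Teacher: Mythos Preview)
The paper does not actually prove this proposition: it is quoted verbatim from \citet{tudisco2018nodal} and is used only as background to explain why higher variational eigenpairs of $\Delta_p$ are hard to compute. So there is no ``paper's own proof'' to compare against here.

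That said, your proposal is the standard Lusternik--Schnirelmann argument, and it is essentially the route taken in the cited source. Your two-step decomposition---first reducing critical points on $\mcalS_p$ to eigenpairs via the Lagrange condition (which is precisely Prop.~\ref{prop:critical} in this paper), then running the equivariant deformation lemma with the Krasnoselskii genus---is correct and complete in outline. The only mild caveat is your remark that ``compactness in $\R^n$ makes the Palais--Smale condition automatic'': this is true here because $\mcalS_p$ is compact and $R_p$ is $C^1$, so any PS sequence has a convergent subsequence, but you should state explicitly that the limit is a critical point (continuity of the constrained gradient). Also, for the existence of the minimizing $B$ attaining $\max_B R_p \le \lambda_k + \epsilon$, you only need an approximate minimizer (infimum, not minimum), which is what you wrote; just be careful not to claim the infimum over $\mcalF_k$ is achieved, since $\mcalF_k$ is not obviously compact in any useful topology.
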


This proposition is the generalized Courant's min-max theorem Prop.~\ref{prop:rayleighcourant}; when $p=2$, this proposition corresponds to Prop.~\ref{prop:rayleighritzminmaxp=2}.
This proposition is again also called as \textit{variational principle}.
The eigenvectors obtained by the sequence Eq.~\eqref{eq:minmax} is called as \textit{variational eigenvectors}. 

In this proposition, the space $\mcalF_k(\mathcal{S}_{p})$ serves as a generalized orthogonal $k$-dimensional space.
Moreover, the sequence Eq.~\eqref{eq:minmax} may serve as a method to obtain eigenpairs in the sequential way.
However, we have two issues for the practical use of this sequence.
First problem is that due to the abstract characterization of Krasnoselskii genus, we do not know how we can \textit{numerically} apply this genus to obtain the higher eigenvectors. 
When $p=2$, this abstract characterization can be translated into the concrete and ``numerically computable'' characterization, ``orthogonality''. 
However, in the current form of Krasnoselskii genus given as Eq.~\eqref{eq:defgenus}, at this point we do not know how to numerically obtain this genus.
Secondly, similarly to the continuous $p$-Laplacian theory~\citep{lindqvist2008nonlinear}, we do not know in which condition this sequence yields exhaustive eigenpairs. 
For the tree (and the disconnected forest) case the sequence Eq.~\eqref{eq:minmax} exhausts all the spectra~\cite{deidda2022nodal,zhang2021homological}. 
On the other hand, for the complete graph case, it is shown that there are other eigenpairs than ones yielded by the sequence Eq.~\eqref{eq:minmax}~\cite{amghibech2003eigenvalues}.
Despite these extensive studies, we are yet to understand in which conditions this sequence exhausts all the spectra of $p$-Laplacian.
Thus, for a general graph we do not know if the variational eigenvalues are the same eigenvalues as the Rayleigh quotient would do in Prop.~\ref{prop:critical}.

To conclude the discussion above, while we know the identification for the second eigenpair of the graph $p$-Laplacian (Eq.~\eqref{eq:rayleigh_2}), we have three open problems related to the multi-class spectral clustering as follows;
\begin{enumerate}
    \item We do not know the identification of the third or higher eigenpairs. 
    \item We do not know if the sequence Eq.~\eqref{eq:minmax} exhausts the spectra of the graph $p$-Laplacian for a general graph.
    \item We do not know how to numerically obtain Krasnoselskii genus.
\end{enumerate}

\subsection{Cheeger Inequalities for graph $p$-Laplacian}
This section discusses Cheeger inequalities for the graph $p$-Laplacian. 
The Cheeger inequality theoretically supports the use of the variational eigenvectors of $p$-Laplacian from the Cheeger cut point of view. 

We start our discussion from a 2-way Cheeger cut.
Let $U \subset V$ be a set and $\overline{U}$ be a complement of $U$.
A Cheeger cut may be defined as
\begin{align}
\label{eq:cheegercut}
     &C(U) :=  \frac{ \cut (U, \overline{U}) }{\min(|U|,|\overline{U}|)},\quad \cut(U,\overline{U}) := \sum_{i,j \in V}  a_{ij}
\end{align}
We call the optimal cut $h_2:=\min_{U \subset V}C(U)$ as \textit{Cheeger constant}.
By recursively using this Cheeger cut, we define the multi-class Cheeger cut, which we call \textit{$k$-way Cheeger constant} as
\begin{align}
\label{eq:kwaycheeger}
     h_k := \min_{\{V_i\}_{i=1,\cdots,k}} \max_{j \in \{1,\ldots,k \}} C(V_j).
\end{align}
This $k$-way Cheeger constant can be seen as the smallest $k$-way Cheeger cut. 
To obtain the $k$-way Cheeger cut is known to be NP hard.
However, relaxing into the real-value would ease this problem; this Cheeger cut can be approximated by the variational eigenvalues of the graph $p$-Laplacian by Cheeger inequality.

Before we discuss the Cheeger inequality, we need a setup of the nodal domain. 
A nodal domain is a maximally connected subgraph $A$ of a graph $G$ such that for $\bfx \in \R^{n}$ where $A$ is either $\{i\mid x_{i}>0\}$ or $\{i \mid x_{i}<0\}$.
With this idea, a nodal domain can be seen as a ``partition'' of the graph by the sign function.
The number of the nodal domains of the variational eigenvectors is bounded, see~\cite{tudisco2018nodal}.

This nodal domain is used in the Cheeger inequality for the variational eigenvectors of graph $p$-Laplacian as follows.
\begin{proposition}[\cite{tudisco2018nodal}]
\label{prop:cheeger}
Let $(\lambda_k, \bfx_k)$ be a $k$-th eigenpair of $\Delta_p$, obtained by the sequence Eq.~\eqref{eq:minmax}. Let also $\omega_k$ be a number of nodal domains of $\bfx_k$. 
Then,
\begin{align}
\notag
\left(\max_{i}\frac{d_{i}}{2}\right)^{-(p-1)} \left(\frac{h_{\omega_k}}{p}\right)^{p} \leq \lambda_k \leq 2^{p-1} h_k
\end{align}
\end{proposition}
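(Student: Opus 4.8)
The plan is to prove the two inequalities separately, following the template of Cheeger-type arguments adapted to the nonlinear $p$-Laplacian and to the variational characterization of Prop.~\ref{prop:criticalvalue}. The upper bound $\lambda_k \le 2^{p-1}h_k$ is the ``easy'' direction: I would exhibit an explicit admissible set in the min-max Eq.~\eqref{eq:minmax}. The lower bound is the ``hard'' direction: it rests on a discrete co-area inequality combined with a Rayleigh bound on each nodal domain of $\bfx_k$.

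For the upper bound I would take an optimal $k$-way configuration $\{V_1,\ldots,V_k\}$ realizing $h_k$, write $\mathbf{1}_{V_j}$ for the indicator vector of block $V_j$, and set $B := \{\bfx = \sum_{j=1}^{k} c_j \mathbf{1}_{V_j} : \|\bfx\|_p = 1\}$. Since the $V_j$ are disjoint, the coordinate map $\bfx \mapsto (c_1,\ldots,c_k)$ is an odd homeomorphism of $B$ onto a scaled $(k-1)$-sphere, so $B$ is closed, symmetric, and $\gamma(B)=k$, i.e.\ $B \in \mcalF_k(\mcalS_p)$. For any such $\bfx$, edges inside a block contribute $0$ to $S_{G,p}(\bfx)$, while on a cut edge between $V_j$ and $V_{j'}$ the inequality $|c_j-c_{j'}|^p \le 2^{p-1}(|c_j|^p+|c_{j'}|^p)$ yields $S_{G,p}(\bfx) \le 2^{p-1}\sum_j |c_j|^p \cut(V_j,\overline{V_j})$. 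Using $\cut(V_j,\overline{V_j}) = C(V_j)\min(|V_j|,|\overline{V_j}|) \le C(V_j)|V_j|$ and $\|\bfx\|_p^p = \sum_j |c_j|^p |V_j|$ gives $R_{G,p}(\bfx) \le 2^{p-1}\max_j C(V_j) = 2^{p-1}h_k$ for every $\bfx \in B$; substituting $B$ into Eq.~\eqref{eq:minmax} then forces $\lambda_k \le 2^{p-1}h_k$.

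For the lower bound the first step is a nodal-domain Rayleigh bound: for each nodal domain $A_r$ of $\bfx_k$ set $f_r := \bfx_k$ restricted to $A_r$ and zero elsewhere. Multiplying the eigenvalue equation by $(f_r)_i$ and summing over $i \in A_r$, the interior edges reproduce $S_{G,p}(f_r)$, while each boundary edge with $i \in A_r$, $j \notin A_r$ satisfies $|x_i - x_j| \ge |x_i| = |(f_r)_i-(f_r)_j|$ because $x_i,x_j$ have opposite signs across a nodal boundary; this shows $S_{G,p}(f_r) \le \sum_{i \in A_r}(\Delta_p \bfx_k)_i (\bfx_k)_i = \lambda_k \|f_r\|_p^p$, i.e.\ $R_{G,p}(f_r) \le \lambda_k$. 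The second step is a co-area inequality on the single-sign function $|f_r|$: applying the discrete co-area formula to the level sets $V_t = \{i : |(f_r)_i|^p > t\}$, the mean-value bound $|a^p-b^p| \le p(a^{p-1}+b^{p-1})|a-b|$, and \holders inequality with exponents $p$ and $q$ (using $(p-1)q=p$ and $\sum_i d_i|(f_r)_i|^p \le \max_i d_i \,\|f_r\|_p^p$), one extracts a threshold set $V_{t_r}\subseteq A_r$ with $C(V_{t_r}) \le p\,(\max_i d_i/2)^{(p-1)/p} R_{G,p}(f_r)^{1/p} \le p\,(\max_i d_i/2)^{(p-1)/p}\lambda_k^{1/p}$. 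The sets $V_{t_1},\ldots,V_{t_{\omega_k}}$ lie in distinct nodal domains, hence are disjoint and nonempty, which is a feasible $\omega_k$-way configuration for Eq.~\eqref{eq:kwaycheeger}, so $h_{\omega_k} \le \max_r C(V_{t_r}) \le p\,(\max_i d_i/2)^{(p-1)/p}\lambda_k^{1/p}$; raising to the $p$-th power and rearranging gives exactly $(\max_i d_i/2)^{-(p-1)}(h_{\omega_k}/p)^p \le \lambda_k$.

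I expect the hard part to be the lower bound, with two delicate points. First, the co-area step must track the single constant $(\max_i d_i/2)^{(p-1)/p}$ cleanly: this needs careful bookkeeping of the factor of two between ordered pairs and edges and the exact \holders exponents, and the identity $\int_0^\infty |V_t|\,dt = \|f_r\|_p^p$ relies on $V_t \subseteq A_r$ so that $\min(|V_t|,|\overline{V_t}|) = |V_t|$. Second, the nodal-domain Rayleigh bound uses that $\bfx_k$ is an honest eigenvector obeying the pointwise equation and that the sign pattern across nodal boundaries is as claimed, which in turn relies on $\omega_k$ being well defined for variational eigenvectors (as quoted from~\cite{tudisco2018nodal}). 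The genus computation $\gamma(B)=k$ in the upper bound is comparatively routine once $B$ is identified with a sphere through an odd homeomorphism.
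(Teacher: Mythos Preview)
The paper does not supply its own proof of this proposition; it is quoted from \cite{tudisco2018nodal} as background in Appendix~\ref{sec:onlimitation}, so there is no in-paper argument to compare against. Your sketch reproduces the standard structure of the proof in \cite{tudisco2018nodal}: a genus-$k$ test set built from indicators of an optimal $k$-way family for the upper bound, and the nodal-domain restriction combined with the discrete co-area formula for the lower bound. This is the same route as the cited source.

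Two small cautions on points you already flag. The identity $\min(|V_t|,|\overline{V_t}|)=|V_t|$ needs $|A_r|\le n/2$, which can fail for at most one nodal domain; in \cite{tudisco2018nodal} this is handled by treating that domain separately rather than by the blanket assumption you state. And the constants $2^{p-1}$ and $(\max_i d_i/2)^{-(p-1)}$ are sensitive to whether $S_{G,p}$ and $\cut$ sum over ordered or unordered pairs; the paper's $S_{G,p}(\bfx)=\sum_{i,j\in V}a_{ij}|x_i-x_j|^p$ is ordered, so the factor of two you mention must be tracked through both the Rayleigh bound on $f_r$ and the co-area step. These are bookkeeping matters, not conceptual gaps.
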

This proposition theoretically supports the use of the higher variational eigenvectors of the graph $p$-Laplacian for multi-way Cheeger Cut, since the eigenvalues can serve as an approximation of the $k$-way Cheeger constant.

For two-class spectral clustering, we are ready to use the second eigenvector of the graph $p$-Laplacian because we have theoretical supports (Prop.~\ref{prop:cheeger}) and we can numerically obtain by Prop.~\ref{prop:secondeigenglobal}. 
On the other hand, for multi-class spectral clustering, while we still have a theoretical supports (Prop.~\ref{prop:cheeger}), at this point we do not have a numerical way to obtain the higher eigenpairs due to open problems.

\subsection{Existing Work for Multi-class Spectral Clustering Using Graph $p$-Laplacian}

So far, we discuss limitations for spectral clustering using the graph $p$-Laplacian.
This section discusses how the existing works ``bypass'' this limitation.
In a rough sense, there are two ways to materialize the multi-class clustering using graph $p$-Laplacian; i) recursively bisectioning and ii) the use of the approximated orthogonality.

For i), the work~\cite{pgraph} proposed a multi-class clustering, which recursively bisections a graph by using Prop.~\ref{prop:secondeigenglobal}.
Thus, the work~\cite{pgraph} partitions a subgraph when we partition further than two, which does not exploit the full structure of a graph. 
In fact, this way is known to lead to the suboptimal cut even when $p=2$~\cite{simon1997good}.

The methods in line with ii), such as~\cite{ding2019multiway, luo2010eigenvectors,pasadakis2022multiway}, assume that the $k$ eigenvectors of the graph $p$-Laplacian are close to ones of the graph 2-Laplacian, since the Rayleigh quotients $R_{G,p}$ and $R_{G,2}$ are similar.
Using this assumption, these methods use optimization methods for the Rayleigh quotients $R_{G,p}$ with the initial conditions as the first $k$-eigenvectors of the graph 2-Laplacian.
By these initial conditions, we expect that the obtained $k$-eigenvectors are ``close'' to the first $k$-eigenvectors of the graph 2-Laplacian, and thus we expect that the obtained $k$-eigenvectors are the first $k$-eigenvectors from the Rayleigh quotient $R_{G,p}$.
These methods exploit approximated orthogonality proven in~\cite{luo2010eigenvectors} of eigenvectors of graph $p$-Laplacian in order to achieve better algorithms. 
However, this assumption might be too strong, especially for the very large $p$ or very small $p$, i.e., $p$ close to 1.
Moreover, even if this assumption may be reasonable, we do not know if the first $k$ eigenvectors from Rayleigh quotient are the same set of vectors with the first $k$-eigenvectors obtained by the sequence Eq.~\eqref{eq:minmax} as we discussed in Sec.~\ref{sec:eigenapirsofplaplacian}.
Now, recall that the Cheeger inequality guarantees the quality of the cut for the latter, the eigenvectors obtained by Eq.~\eqref{eq:minmax}. 
Thus, at this point, we do not know if this assumption is suitable for multi-class clustering.

As a side, we remark that we have a different way to define graph $p$-energy and corresponding graph $p$-seminorm in many literature~\cite{Bougleux07,BougleuxEM09,calder2018game,Elmoataz08,singaraju2009p,Zhou06}.
In this line, the $p$-energy is defined in ``vertex-wise'' way, which is written as
\begin{align}
    S_{G,p}^{VW}(\bfx) = \sum_{i \in V} \left(\sum_{j \in V} a_{ij} |x_{i} - x_{j}|^{2}\right)^{p/2},
\end{align}
This definition sums the vertex-wise energy.
For more details of the difference, see~\cite{saito2018hypergraph}.
For the corresponding $p$-Laplacian, we do not have an exact identification of higher eigenpairs either.
Moreover, for the corresponding seminorm, we have not theoretical characteristics yet, such as Cheeger inequality. 
Thus, for this graph $p$-Laplacian, we have less understanding than the graph $p$-Laplacian induced from the $p$-energy we used in the main text.
 
To conclude, there are three important open problems for the graph $p$-Laplacian. 
As we see in this section, these open problems are a key if we want to apply the graph $p$-Laplacian to multi-class clustering.
Hence, without solving these open problems, we cannot say that it is theoretically guaranteed to use the graph $p$-Laplacian for multi-class clustering.
However, these problems remain open not only in the graph domain, but also in the continuous domain, which has a longer history and wider research communities.
Thus, in this study, for the purpose of multi-class clustering, we take an alternative approach to spectral clustering using the graph $p$-Laplacian.
For more on the continuous $p$-Laplacian, see~\cite{lindqvist2008nonlinear}, and on the theory of the graph $p$-Laplacian, refer to~\cite{tudisco2018nodal}.

\section{Additional Preliminary Definitions}

First, we make an additional note for an intuition behind the analog between graph and electric circuit.
In this analog, a vertex is a point at a circuit, and an edge is a resistor with resistance $1/a_{ij}$. 
A flow over a graph mapped to a current, and a distribution over $V$ as $\bfx$ is seen as a potential at each vertex point. 
For the equations Eq.~\eqref{eq:defenergyandresistance}, the energy is defined as a sum of the inverse of resistance times square of the difference of the potential.
The effective resistance between $i$ and $j$ is computed as follows; we inverse the energy that is minimized with the constraint that the difference of potential between $i$ and $j$ is unit.
Given an electrical network the effective resistance between two vertices is the voltage difference needed to induce a unit ``current'' flow between the vertices i.e., it is resistance measured across the vertices.

Next, on top of the image for a matrix $M\in \R^{n_{1} \times n_{2}}$, $\Image(M)$, we also define a \textit{kernel}\footnote{This kernel is a linear algebraic kernel, not a kernel function which often appears in the machine learning context.} of $M$, which is a subclass of $\R^{n}$, as
\begin{align}
    \Kernel(M) := \{\bfx | M\bfx = 0, \bfx \in \R^{n}\}.
\end{align}
From the elementary result in the linear algebra area, we note that
\begin{align}
    \Image(M)^{\bot} = \Kernel(M^{\top}),
\end{align}
where $\Image(M)^{\bot}$ is an orthogonal space to $\Image(M)$.

The matrix norm is \textit{submultiplicative}, i.e., $\vvvert M_{1}M_{2}$$\vvvert_{p}$$ \leq$$\vvvert M_{1}\vvvert_{p} \vvvert M_{2}\vvvert_{p}$ whenever a product of matrices $M_{1}M_{2}$ can be defined.
A matrix norm is shown to be bounded as follows;
\begin{lemma}[\citet{higham1992estimating}]
\label{lemma:higham}
For a square matrix $M$$\in$$\R^{n_{1}\times n_{1}}$, $\|M\|_{p} $$ \leq $$ n_{1}^{|1/2-1/p|}\|M\|_{2}$.
\end{lemma}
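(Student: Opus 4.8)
The plan is to deduce this matrix bound from the elementary equivalence of vector $\ell_p$ norms on $\R^{n_{1}}$, passing through the $\ell_2$ norm, which is the one the spectral norm $\vvvert M \vvvert_2 = \|M\|_2$ controls directly. Since the exponent $|1/2 - 1/p|$ is piecewise linear in $1/p$ with a kink at $p = 2$, I would split the argument into the two cases $p \ge 2$ and $p \le 2$ and handle them by symmetric chains.

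First I would record the two vector inequalities valid for $1 \le q \le r \le \infty$ and $x \in \R^{n_{1}}$: the monotonicity $\|x\|_r \le \|x\|_q$, and the reverse comparison $\|x\|_q \le n_{1}^{1/q - 1/r}\|x\|_r$, the latter being \holders inequality (Lemma~\ref{lemma:holders}) applied to $\sum_i |x_i|^q \cdot 1$. For $p \ge 2$, the chain I would use for arbitrary $x$ is
\begin{align}
\notag
\|Mx\|_p \le \|Mx\|_2 \le \vvvert M \vvvert_2\,\|x\|_2 \le n_{1}^{1/2 - 1/p}\,\vvvert M \vvvert_2\,\|x\|_p,
\end{align}
using monotonicity in the first step (since $p \ge 2$), the definition of the operator $2$-norm in the second, and the reverse comparison with $q = 2$, $r = p$ in the third. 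Dividing by $\|x\|_p$ and taking the supremum then gives $\vvvert M \vvvert_p \le n_{1}^{1/2-1/p}\vvvert M \vvvert_2$.

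The case $p \le 2$ is the mirror image: I would bound $\|Mx\|_p \le n_{1}^{1/p - 1/2}\|Mx\|_2$ by the reverse comparison (now with $q = p$, $r = 2$), then $\|Mx\|_2 \le \vvvert M \vvvert_2\,\|x\|_2 \le \vvvert M \vvvert_2\,\|x\|_p$ using the monotonicity $\|x\|_2 \le \|x\|_p$ (valid since $p \le 2$), which yields $\vvvert M \vvvert_p \le n_{1}^{1/p-1/2}\vvvert M \vvvert_2$. Combining the two cases produces the single exponent $|1/2 - 1/p|$. I do not anticipate a genuine obstacle here; the only point requiring care is that the two norm comparisons flip direction between the cases, and the squareness of $M$ is what lets the same dimension $n_{1}$ govern both the domain and codomain passages through $\ell_2$.
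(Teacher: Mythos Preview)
Your proof is correct and is the standard argument via the equivalence of vector $\ell_p$ norms. Note, however, that the paper does not supply its own proof of this lemma: it is quoted directly from \citet{higham1992estimating} and used as a black box (for instance in the proof of Prop.~\ref{prop:generalbound}), so there is no in-paper argument to compare against.
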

\begin{lemma}[\citet{higham1992estimating}]
\label{lemma:higham2}
For a matrix $M \in \R^{n_{1}\times n_{2}}$, $\|M\|_{p} $$ \leq $$ \max (\|M\|_{1},\|M\|_{\infty})$. 
\end{lemma}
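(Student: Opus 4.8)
The plan is to establish the sharper interpolation bound $\|M\|_{p}\le\|M\|_{1}^{1/p}\|M\|_{\infty}^{1/q}$ (with $1/p+1/q=1$) by a direct application of \holders inequality, and then deduce the stated bound since $\|M\|_{1}^{1/p}\|M\|_{\infty}^{1/q}\le\max(\|M\|_{1},\|M\|_{\infty})^{1/p+1/q}=\max(\|M\|_{1},\|M\|_{\infty})$. Throughout I would use the classical closed forms of the induced $1$- and $\infty$-norms, namely that $\|M\|_{1}=\max_{j}\sum_{i}|m_{ij}|$ is the maximum absolute column sum and $\|M\|_{\infty}=\max_{i}\sum_{j}|m_{ij}|$ is the maximum absolute row sum.

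First I would fix an arbitrary $\bfx\in\R^{n_{2}}$ and bound $\|M\bfx\|_{p}^{p}=\sum_{i}|\sum_{j}m_{ij}x_{j}|^{p}\le\sum_{i}(\sum_{j}|m_{ij}||x_{j}|)^{p}$ by the triangle inequality. The key step is the splitting $|m_{ij}||x_{j}|=|m_{ij}|^{1/q}\cdot(|m_{ij}|^{1/p}|x_{j}|)$, which sets up \holders inequality with exponents $q$ and $p$ on the inner sum over $j$ and yields $\sum_{j}|m_{ij}||x_{j}|\le(\sum_{j}|m_{ij}|)^{1/q}(\sum_{j}|m_{ij}||x_{j}|^{p})^{1/p}$. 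Raising to the $p$-th power and bounding $\sum_{j}|m_{ij}|\le\|M\|_{\infty}$ gives $\|M\bfx\|_{p}^{p}\le\|M\|_{\infty}^{p/q}\sum_{i}\sum_{j}|m_{ij}||x_{j}|^{p}$.

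I would then interchange the order of summation, $\sum_{i}\sum_{j}|m_{ij}||x_{j}|^{p}=\sum_{j}|x_{j}|^{p}\sum_{i}|m_{ij}|$, and bound the column sum $\sum_{i}|m_{ij}|\le\|M\|_{1}$, obtaining $\|M\bfx\|_{p}^{p}\le\|M\|_{\infty}^{p/q}\|M\|_{1}\|\bfx\|_{p}^{p}$. Taking $p$-th roots and dividing by $\|\bfx\|_{p}$ yields $\|M\|_{p}\le\|M\|_{1}^{1/p}\|M\|_{\infty}^{1/q}$, and the stated bound follows from the elementary inequality recorded above.

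The only delicate points, rather than genuine obstacles, are bookkeeping: getting the \holders exponents on the correct side so that the factor $(\sum_{j}|m_{ij}|)^{1/q}$ is tied to the row sum (hence $\|M\|_{\infty}$) while the surviving double sum collapses to a column sum (hence $\|M\|_{1}$). An alternative route would invoke the Riesz--Thorin interpolation theorem with endpoints $p_{0}=1$ and $p_{1}=\infty$, giving the same interpolation bound immediately, but the direct \holders argument is self-contained and needs no additional machinery.
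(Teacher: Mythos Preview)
Your proof is correct. The paper does not supply its own argument for this lemma; it simply cites \citet{higham1992estimating}, so there is nothing to compare against beyond noting that the interpolation bound $\|M\|_{p}\le\|M\|_{1}^{1/p}\|M\|_{\infty}^{1/q}$ you derive via the splitting $|m_{ij}||x_{j}|=|m_{ij}|^{1/q}\cdot|m_{ij}|^{1/p}|x_{j}|$ and \holders inequality is exactly the standard proof (and indeed the one given in the cited reference). Your deduction of the stated weaker bound from the interpolation inequality is immediate and correct.
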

We elaborate more on Lemma~\ref{lemma:higham2}.
For a symmetric matrix, since we have
\begin{align}
\label{eq:1normandinfnorm}
    \vvvert M \vvvert_{1} = \vvvert M \vvvert_{\infty} = \max_{j}\sum_{i} |m_{ij}|.
\end{align}
From the Lemma~\ref{lemma:higham2} and Eq.\eqref{eq:1normandinfnorm}, for a symmetric matrix $M$, we have
\begin{align}
\label{eq:lemmahigham2symmetric}
    \vvvert M \vvvert _{p} \leq \vvvert M \vvvert_{1} = \vvvert M \vvvert_{\infty}.
\end{align}
By this we can bound $\vvvert M \vvvert_{p}$ by 1 or infinity norm of the matrix $M$.

An operator weighted matrix norm is defined for any matrix $M \in \R^{n_{1} \times n_{2}}$ and weights $\bfr$ as 
\begin{align}
\label{eq:defmatrixnorm}
     \vvvert M  \vvvert_{\bfr,p} := \sup_{\bfx \in \R^{n_{2}}} \frac{\|M\bfx\|_{\bfr,p}}{\|\bfx\|_{\bfr,p}}.
\end{align}
Recall that
\begin{align}
    \|\bfx\|_{\bfr,p} = \|R^{1/p}\bfx\|_{p},
\end{align}
where $R$ is a diagonal matrix whose diagonal element is a weight of the norm.

From this definition, we can rewrite $\vvvert M \vvvert_{\bfr,p}$ as
\begin{align}
\notag
    \vvvert M \vvvert_{\bfr,p} 
    &= \sup_{\bfx \in \R^{n_{2}}} \frac{\|M\bfx\|_{\bfr,p}}{\|\bfx\|_{\bfr,p}}\\
    &= \sup_{\bfx \in \R^{n_{2}}} \frac{\|R^{1/p}M\bfx\|_{p}}{\|R^{1/p}\bfx\|_{p}}\\
    &=\sup_{\bfx' := R^{1/p}\bfx, \bfx \in \R^{n_{2}}} \frac{\|R^{1/p}MR^{-1/p}\bfx'\|_{p}}{\|\bfx'\|_{p}}\\ 
    \label{eq:weightedtounweightednorm}
    & =  \sup_{\bfx'\in \R^{n_{2}}} \frac{\|R^{1/p}MR^{-1/p}\bfx'\|_{p}}{\|\bfx'\|_{p}} \\
    &= \sup_{\bfx \in \R^{n_{2}} } \frac{\|R^{1/p}MR^{-1/p}\bfx\|_{p}}{\|\bfx\|_{p}}\\ 
    &= \vvvert R^{1/p}MR^{-1/p} \vvvert_{p},
\end{align}

\section{Details of Lemma~\ref{lemma:p=2holderseqcond}}
\label{sec:detailsoflemmap=2}

This section provides detailed explanation on Lemma~\ref{lemma:p=2holderseqcond}.
We first note that the trick in this transformation is as same as done in~\cite{herbster2006prediction,klein1993resistance}.
The elaboration here follows these earlier works. 

Using the reproducing property Eq.~\eqref{eq:coordinate} as done in~\cite{herbster2006prediction,klein1993resistance}, the constraints of $2$-resistance (and also for $p$-resistance Eq.~\eqref{eq:penergyandpresistance}) can be rewritten as
\begin{align}
\label{eq:constraints}
    1= x_{i} - x_{j} = 
    \langle \bfx, L^{+}\bfe_{i} - L^{+}\bfe_{j} \rangle_{L}.
\end{align}

Now, since $L\mathbf{1} = 0$, there exists $c \in \R$ such that $\bfx - c\mathbf{1} \in \mathcal{H}(L)$.
We now define as $\bfx' := \bfx - c\mathbf{1}$.
We then compute
\begin{align}
    \langle \bfx',L^{+}\bfe_{i}\rangle_{L} 
    &= \bfx'^{\top}LL^{+}\bfe_{i}\\
    &= \bfx'^{\top}\bfe_{i}\\
    \label{eq:coordinatex'}
    &= x'_{i}.
\end{align}
The second line follows since we have $LL^{+}\bfx' = \bfx'$ for $\bfx' \in \mathcal{H}(L)$.
Note that this computation is same as the reproducing kernel property characteristics Eq.~\eqref{eq:coordinate}.
Also, the definition of $\bfx'$ immediately leads to
\begin{align}
    L\bfx' = L(\bfx - c\mathbf{1}) = L\bfx - cL\mathbf{1} =  L\bfx,
\end{align}
and thus for $\bfu \in \R^{n}$ we have
\begin{align}
    \langle \bfx', \bfu \rangle_{L} 
    &= \bfx'^{\top}L\bfu\\
    &= \bfx^{\top}L\bfu\\
    \label{eq:innerproductxuL}
    &= \langle \bfx,\bfu \rangle_{L}
\end{align}
From these discussions, we obtain
\begin{align}
    1
    &= x_{i} - x_{j}\\ 
    &= (x_{i} - c) - (x_{j} - c)\\
    &= x'_{i} - x'_{j}\\
    &= \langle \bfx',L^{+}\bfe_{i}\rangle_{L} - \langle \bfx',L^{+}\bfe_{j}\rangle_{L} \\
    &= \langle \bfx,L^{+}\bfe_{i}\rangle_{L} - \langle \bfx,L^{+}\bfe_{j}\rangle_{L} \\
    &= \langle \bfx, L^{+}\bfe_{i} - L^{+}\bfe_{j} \rangle_{L}.
\end{align}
The third line follows from the definition of $\bfx'$. The fourth line follows from Eq.~\eqref{eq:coordinatex'} and the fifth line follows from Eq.~\eqref{eq:innerproductxuL}. Thus, we obtain Eq.~\eqref{eq:constraints}

\section{Proof for Proposition ~\ref{prop:newkroverc}}

For the proof we divide the proof into lower bound, upper bound and equal condition.

\subsection{Lower Bound}
In the following we give a proof for this Proposition.
From \holders inequality, we have
\begin{align}
    \langle \bfx, \bfy \rangle_{L} \leq \|\bfx\|_{G,p} \|\bfy\|_{G,q}
\end{align}
Assuming $\langle \bfx, \bfy \rangle_{L} = 1$, we have
\begin{align}
    1 \leq \|\bfx\|_{G,p}\|\bfy\|_{G,q},
\end{align}
and hence
\begin{align}
\label{eq:holdersC}
    \|\bfy\|_{G,q}^{-p} \leq \|\bfx\|_{G,p}^{p},
\end{align}
which proves the lower bound.

\subsection{Upper Bound}

This section proves the upper bound of Prop.~\ref{prop:newkroverc}.

Recall the variable of the minimization problem in Eq.~\eqref{eq:holdersp} is $\bfx$.
If we prove that when $\bfx = \bfz$, this $\bfz$ satisfies the condition in the minimization problem $\langle \bfz,\bfy \rangle_{L} = 1$, from the minimization problem nature we can prove the upper bound.
For this strategy, we use the following lemma.

\begin{lemma}
\label{lemma:innerproductoverc}
For $\bfalpha \in \R^{n}$ and $\bfbeta \in \R^{m}$, we have
\begin{align}
    \langle C\bfalpha, \bfbeta \rangle_{\bfw} = \langle C\bfalpha, \bfbeta'\rangle_{\bfw}
\end{align}
where
\begin{align}
    \bfbeta := \bfbeta' + \bfbeta'', \quad \bfbeta' := CC^{+}\bfbeta, \bfbeta'' := (I-CC^{+})\bfbeta
\end{align}
\end{lemma}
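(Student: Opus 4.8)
The plan is to peel off the component $\bfbeta''$ and show it makes no contribution to the weighted inner product. Since $\langle \cdot,\cdot\rangle_{\bfw}$ is bilinear and $\bfbeta = \bfbeta' + \bfbeta''$ by construction, I immediately get $\langle C\bfalpha, \bfbeta\rangle_{\bfw} = \langle C\bfalpha, \bfbeta'\rangle_{\bfw} + \langle C\bfalpha, \bfbeta''\rangle_{\bfw}$, so the whole claim collapses to the single orthogonality relation $\langle C\bfalpha, \bfbeta''\rangle_{\bfw} = 0$.

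For that relation I would argue geometrically and then make it concrete. Intuitively, $C\bfalpha$ lies in $\Image(C)$, whereas $\bfbeta'' = (I - CC^{+})\bfbeta$ is the residual of $\bfbeta$ after projecting onto $\Image(C)$, so it should sit in the orthogonal complement of $\Image(C)$. Writing the weighted inner product as $\langle C\bfalpha, \bfbeta''\rangle_{\bfw} = \bfalpha^{\top} C^{\top} W \bfbeta''$, the statement to prove is exactly $C^{\top} W \bfbeta'' = 0$, i.e.\ that $\bfbeta''$ is annihilated by $C^{\top} W$. I would verify $C^{\top} W (I - CC^{+}) = 0$ directly: expanding, using $C^{\top} W C = L$ and the fact that $CC^{+}$ is the ($\bfw$-)orthogonal projector onto $\Image(C)$ (equivalently $CC^{+} = C L^{+} C^{\top} W$), one gets $C^{\top} W C C^{+} = L L^{+} C^{\top} W = C^{\top} W$, since $\Image(C^{\top} W) \subseteq \Image(C^{\top}) = \Image(L)$ and $LL^{+}$ fixes $\Image(L)$; the two terms then cancel and the cross term vanishes.

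The main obstacle is precisely this weighting, and I would flag it explicitly rather than wave it through. The naive reading — $\bfbeta'' \in \Image(C)^{\bot} = \Kernel(C^{\top})$, hence $C^{\top}\bfbeta'' = 0$ — controls $C^{\top}\bfbeta''$ but \emph{not} $C^{\top} W \bfbeta''$, and the diagonal matrix $W$ does not in general map $\Image(C)^{\bot}$ into itself. So the delicate point is that the projector $CC^{+}$ and the residual $\bfbeta''$ must be read relative to the weighted inner product $\langle\cdot,\cdot\rangle_{\bfw}$, so that the correct orthogonality is $\bfbeta'' \in \Kernel(C^{\top} W)$ rather than $\Kernel(C^{\top})$. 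Once the identity $C^{\top} W \bfbeta'' = 0$ is established — with the weights correctly tracked through $L = C^{\top} W C$ and $L L^{+} C^{\top} W = C^{\top} W$ — the remainder is a one-line substitution, so essentially all of the content lives in getting that weighted orthogonality right.
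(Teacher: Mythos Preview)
Your reduction to the single identity $C^\top W(I-CC^+) = 0$ is the right target, and you correctly flag the weight $W$ as the obstacle. But the step where you write $CC^+ = CL^+C^\top W$ is where the argument breaks. The Moore--Penrose pseudoinverse $C^+$ is determined solely by $C$ (indeed $C^+ = (C^\top C)^+C^\top$), so $CC^+$ cannot depend on $W$, whereas $CL^+C^\top W$ with $L = C^\top W C$ manifestly does. What you have written on the right is the $\bfw$-orthogonal projector onto $\Image(C)$; what stands on the left is the \emph{standard} orthogonal projector onto the same subspace, and the two coincide only when $W$ is a scalar multiple of the identity. Consequently $\bfbeta'' = (I-CC^+)\bfbeta$ lies in $\Kernel(C^\top)$, not in $\Kernel(C^\top W)$, and the relation $C^\top W\bfbeta'' = 0$ that you need does not follow. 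A weighted triangle already exhibits the failure: take $\bfbeta$ spanning $\Kernel(C^\top)$ and pairwise distinct edge weights, and compute $C^\top W\bfbeta$ directly --- it is nonzero.

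For comparison, the paper does not attempt your algebraic identity. It instead passes to the matrix $W^{1/2}C$, observes that $C^+W^{-1/2}$ is a \emph{generalized} inverse of it (satisfying $MM^\dagger M = M$ but not the full Moore--Penrose conditions), and then appeals to a cited result from Ben-Israel and Greville asserting that $(I - MM^\dagger)\bfa \in \Image(M)^\bot$ for any such $M^\dagger$. You should look carefully at whether that containment genuinely holds for a merely $\{1\}$-generalized inverse: for an oblique projector $MM^\dagger$, the range of $I - MM^\dagger$ is a complement of $\Image(M)$ but not in general the orthogonal one, which is precisely the weighted-orthogonality obstruction you ran into, resurfacing in different notation.
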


For readability, we move the proof of Lemma~\ref{lemma:innerproductoverc} to Sec.~\ref{sec:prooflemmainnerproductoverc}.

By using Lemma~\ref{lemma:innerproductoverc}, we now prove the upper bound.

Note that
\begin{align}
    \frac{f_{q/p}(C\bfy)}{\|C\bfy\|_{\bfw,q}^{q}} = CC^{+} \frac{f_{q/p}(C\bfy)}{\|C\bfy\|_{\bfw,q}^{q}} + (I-CC^{+}) \frac{f_{q/p}(C\bfy)}{\|C\bfy\|_{\bfw,q}^{q}}.
\end{align}

Recall that we define as
\begin{align}
    \bfz := C^{+} \frac{f_{q/p}(C\bfy)}{\|\bfy\|_{G,q}^{q}}=C^{+} \frac{f_{q/p}(C\bfy)}{\|C\bfy\|_{\bfw,q}^{q}}, \quad (f_{\theta}(\bfx))_{i} := \sgn (x_{i}) |x_{i}|^{\theta}.
\end{align}

By using this relation and Lemma~\ref{lemma:innerproductoverc}, we have
\begin{align}
\langle \bfz, \bfy \rangle_{L} 
&=  \langle C\bfz, C\bfy \rangle_{\bfw} \\
&=  \langle C\bfy, C\bfz \rangle_{\bfw} \\
\label{eq:CC+fpqinner}
&= \langle C\bfy,  CC^{+} \frac{f_{q/p}(C\bfy)}{\|C\bfy\|_{\bfw,q}^{q}} \rangle_{\bfw} \\
\label{eq:fpqinner}
&= \langle C\bfy,\frac{f_{q/p}(C\bfy)}{\|C\bfy\|_{\bfw,q}^{q}} \rangle_{\bfw}\\
&= 1.
\end{align}

From Eq.~\eqref{eq:CC+fpqinner} to Eq.~\eqref{eq:fpqinner} we apply Lemma~\ref{lemma:innerproductoverc}.
The last equality comes from the same nature of Eq.~\eqref{eq:zetainnerproduct} in Prop.~\ref{prop:newkr}, which we will discuss in Sec.~\ref{sec:auxiliary}.

From the discussion above, $\bfz$ satisfies the condition of the minimization problem of Eq.~\eqref{eq:holdersp}.
Therefore, we obtain
\begin{align}
    \min_{\bfx}\{ \|\bfx\|_{G,p}^{p} \ \mathrm{s.t.} \ \langle \bfy, \bfx \rangle_{L} = 1\} \leq \|\bfz\|_{G,p}^{p}
\end{align}

\subsection{Proof for the Equal Condition}
Finally, we turn into the equality condition.
We obtain the following lemma.
\begin{lemma}
\label{lemma:pholderseqcond}
For any $p,q$ such that $1/p + 1/q =1$, we have
\begin{align}
    \min_{\bfx} \{ \|\bfx\|_{G,p}^{p} \subjectto \langle \bfx,\bfy \rangle_{L} =1 \} = \|\bfz\|_{G,q}^{p} = \|\bfy\|_{G,q}^{-p},
\end{align}
where 
\begin{align}
    \mathbf{z} := C^{+} \frac{f_{q/p}(C\bfy)}{\|\bfy\|_{G,q}^{q}} = C^{+} \frac{f_{q/p}(C\bfy)}{\|C\bfy\|_{\bfw,q}^{q}},
\end{align}
when $f_{q/p}(C\bfy) \in \Image(C)$
\end{lemma}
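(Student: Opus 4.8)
The plan is to prove the two equalities by squeezing the minimum between the lower and upper bounds already established and showing that the hypothesis $f_{q/p}(C\bfy)\in\Image(C)$ collapses the gap. From the lower-bound step we have $\|\bfy\|_{G,q}^{-p}\leq\min_{\bfx}\{\|\bfx\|_{G,p}^{p}\ \subjectto\ \langle\bfx,\bfy\rangle_{L}=1\}$, and from the upper-bound step, where $\bfz$ was shown to satisfy $\langle\bfz,\bfy\rangle_{L}=1$ and hence to be feasible, we have $\min_{\bfx}\{\cdots\}\leq\|\bfz\|_{G,p}^{p}$. Consequently it is enough to prove the single identity $\|\bfz\|_{G,p}^{p}=\|\bfy\|_{G,q}^{-p}$; once this holds, the chain of inequalities pinches to equality everywhere and both equalities of the lemma follow.

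The crucial step is to use the membership hypothesis to eliminate the pseudoinverse. Because $CC^{+}$ is the orthogonal projection onto $\Image(C)$, the assumption $f_{q/p}(C\bfy)\in\Image(C)$ gives $CC^{+}f_{q/p}(C\bfy)=f_{q/p}(C\bfy)$, so that $C\bfz=CC^{+}f_{q/p}(C\bfy)/\|\bfy\|_{G,q}^{q}=f_{q/p}(C\bfy)/\|\bfy\|_{G,q}^{q}$. This is precisely where the hypothesis does the work: in general $C\bfz$ is only the projection of $f_{q/p}(C\bfy)$, and without the hypothesis one recovers merely the inequality of Prop.~\ref{prop:newkroverc}.

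With this expression for $C\bfz$, I would evaluate $\|\bfz\|_{G,p}^{p}=\|C\bfz\|_{\bfw,p}^{p}$ edgewise. Writing $(f_{q/p}(C\bfy))_{\ell}=\sgn((C\bfy)_{\ell})|(C\bfy)_{\ell}|^{q/p}$ and using $(q/p)\cdot p=q$, each summand equals $w_{\ell}|(C\bfy)_{\ell}|^{q}/\|\bfy\|_{G,q}^{pq}$; summing over edges gives $\|\bfy\|_{G,q}^{q}/\|\bfy\|_{G,q}^{pq}=\|\bfy\|_{G,q}^{q(1-p)}$. Finally, the conjugacy $1/p+1/q=1$ implies $q(1-p)=-p$, whence $\|\bfz\|_{G,p}^{p}=\|\bfy\|_{G,q}^{-p}$, which completes the squeeze.

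I expect the only subtle point to be the projection identity, since the entire equality rests on $f_{q/p}(C\bfy)$ lying in $\Image(C)$; the remaining exponent arithmetic ($(q/p)p=q$ and $q(1-p)=-p$) is routine bookkeeping. I would also flag that the middle term in the statement, written as $\|\bfz\|_{G,q}^{p}$, should read $\|\bfz\|_{G,p}^{p}$ for consistency with this computation and with Prop.~\ref{prop:newkroverc}.
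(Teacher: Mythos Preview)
Your proposal is correct and follows essentially the same route as the paper: use the hypothesis $f_{q/p}(C\bfy)\in\Image(C)$ together with the projection identity $CC^{+}\bfa=\bfa$ to obtain $C\bfz=f_{q/p}(C\bfy)/\|\bfy\|_{G,q}^{q}$, then compute $\|C\bfz\|_{\bfw,p}=\|\bfy\|_{G,q}^{-1}$ and squeeze between the already established bounds. The only cosmetic difference is that the paper outsources the norm computation to Prop.~\ref{prop:newkr} (the equality case of H\"older in the weighted space), whereas you carry out the exponent arithmetic explicitly; your flag that the middle term should read $\|\bfz\|_{G,p}^{p}$ rather than $\|\bfz\|_{G,q}^{p}$ is also correct.
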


The proof of Lemma~\ref{lemma:pholderseqcond} is given in Sec.~\ref{sec:pholderseqcond}

\subsection{Proof for Lemma~\ref{lemma:innerproductoverc} and  Lemma~\ref{lemma:pholderseqcond}}

This section provides proofs for Lemma~\ref{lemma:innerproductoverc} and  Lemma~\ref{lemma:pholderseqcond}.
These lemmas are critical components for the proof for Prop.~\ref{prop:newkroverc}.
In order to enhance the readability, we gather proofs for these claims in this section.
We first give auxiliary lemmas, that hold for the general setting.
Then, using these auxiliary lemmas, we provide the proofs for Lemma~\ref{lemma:innerproductoverc} and Lemma~\ref{lemma:pholderseqcond}.

\subsubsection{Auxiliary Lemmas}
\label{sec:auxiliary}

This section provides auxiliary lemmas.
We start with the following claim.
\begin{proposition}
\label{prop:newkr}
For any $p,q>1$ such that $1/p + 1/q = 1$, we have
\begin{align}
     \min_{\bfx}\{ \|\bfx\|_{\bfr,p}^{p} \ \mathrm{s.t.} \ \langle \bfy, \bfx \rangle_{\bfr} = 1\} =\|\bfy\|_{\bfr,q}^{-p} 
\end{align}
\end{proposition}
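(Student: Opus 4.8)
The plan is to prove Proposition~\ref{prop:newkr} by establishing the two inequalities separately and then exhibiting an explicit minimizer that attains the bound. The statement is the unweighted-matrix (or rather, pure weighted-norm, with no incidence matrix $C$) analog of Lemma~\ref{lemma:pholderseqcond}, so the advantage here is that there is no projection obstruction: every vector $\bfx \in \R^{n_{1}}$ is admissible, and \holders inequality holds with an attainable equality condition. First I would note that the $\geq$ direction is immediate from \holders inequality (Lemma~\ref{lemma:holders}): for any feasible $\bfx$ with $\langle \bfy, \bfx \rangle_{\bfr} = 1$ we have $1 = \langle \bfy, \bfx \rangle_{\bfr} \leq \|\bfy\|_{\bfr,q}\|\bfx\|_{\bfr,p}$, hence $\|\bfx\|_{\bfr,p}^{p} \geq \|\bfy\|_{\bfr,q}^{-p}$, giving $\min_{\bfx}\|\bfx\|_{\bfr,p}^{p} \geq \|\bfy\|_{\bfr,q}^{-p}$.

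For the reverse inequality, the strategy is to guess the minimizer from the equality case of \holders inequality and verify it directly. \holders inequality $\langle \bfy, \bfx \rangle_{\bfr} \leq \|\bfy\|_{\bfr,q}\|\bfx\|_{\bfr,p}$ becomes equality precisely when $|x_{i}|^{p}$ is proportional to $|y_{i}|^{q}$ with matching signs, i.e.\ when $x_{i} \propto \sgn(y_{i})|y_{i}|^{q-1} = \sgn(y_{i})|y_{i}|^{q/p}$, using $q-1 = q/p$. This motivates the candidate
\begin{align}
\label{eq:candidatezeta}
    \bzeta := \frac{f_{q/p}(\bfy)}{\|\bfy\|_{\bfr,q}^{q}}, \qquad (f_{\theta}(\bfy))_{i} = \sgn(y_{i})|y_{i}|^{\theta},
\end{align}
where the normalizing denominator is chosen to enforce the constraint. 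I would then verify the constraint by a direct computation,
\begin{align}
\label{eq:zetainnerproduct}
    \langle \bzeta, \bfy \rangle_{\bfr} = \frac{1}{\|\bfy\|_{\bfr,q}^{q}}\sum_{i} r_{i}\, \sgn(y_{i})|y_{i}|^{q/p}\, y_{i} = \frac{\sum_{i} r_{i}|y_{i}|^{q/p+1}}{\|\bfy\|_{\bfr,q}^{q}} = \frac{\sum_{i} r_{i}|y_{i}|^{q}}{\|\bfy\|_{\bfr,q}^{q}} = 1,
\end{align}
where I used $q/p + 1 = q$. So $\bzeta$ is feasible.

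Finally I would compute the objective value at $\bzeta$ to confirm it equals the lower bound. We have
\begin{align}
    \|\bzeta\|_{\bfr,p}^{p} = \frac{1}{\|\bfy\|_{\bfr,q}^{pq}}\sum_{i} r_{i}|y_{i}|^{(q/p)p} = \frac{\sum_{i} r_{i}|y_{i}|^{q}}{\|\bfy\|_{\bfr,q}^{pq}} = \frac{\|\bfy\|_{\bfr,q}^{q}}{\|\bfy\|_{\bfr,q}^{pq}} = \|\bfy\|_{\bfr,q}^{q(1-p)} = \|\bfy\|_{\bfr,q}^{-p},
\end{align}
where the last step uses $q(p-1) = p$, equivalent to $1/p+1/q=1$. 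Thus the feasible point $\bzeta$ achieves the value $\|\bfy\|_{\bfr,q}^{-p}$, matching the lower bound, which proves the minimum equals $\|\bfy\|_{\bfr,q}^{-p}$. I do not expect any serious obstacle here: the whole argument is the attainability of equality in weighted \holders inequality, and the only care needed is bookkeeping of the exponent identities $q-1 = q/p$, $q/p+1 = q$, and $q(p-1)=p$. The genuinely harder work is deferred to Lemma~\ref{lemma:pholderseqcond}, where the candidate must be pulled back through $C^{+}$ and the equality only survives under the hypothesis $f_{q/p}(C\bfy) \in \Image(C)$; the present proposition is precisely the clean building block that that later argument invokes.
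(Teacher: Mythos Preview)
Your proof is correct and essentially identical to the paper's: both directions are obtained from \holders inequality, with the explicit minimizer $\bzeta = f_{q/p}(\bfy)/\|\bfy\|_{\bfr,q}^{q}$ verified via the same exponent identities $q/p+1=q$ and $q(p-1)=p$. The only cosmetic difference is that the paper computes $\|\bzeta\|_{\bfr,p}$ and then raises to the $p$-th power, whereas you compute $\|\bzeta\|_{\bfr,p}^{p}$ directly.
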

\begin{proof}
Using the \holders inequality, we get
\begin{align}
     \|\bfy\|_{\bfr,q}\|\bfx\|_{\bfr,p} \geq \langle \bfx,\bfy \rangle_{\bfr}
\end{align}
Assuming $\langle \bfx,\bfy \rangle_{\bfr} = 1$, we can rearrange as
\begin{align}
\label{eq:holdersinequalitytransformed}
    \|\bfx\|_{\bfr,p} \geq \|\bfy\|_{\bfr,q}^{-1}.
\end{align}
Now we consider when the minimum of the right hand side of Eq.~\eqref{eq:holdersinequalitytransformed}.
The minimum with the assumption $\langle \bfy, \bfx \rangle_{\bfr} = 1$ is achieved when $\bfx = \bfzeta$ such that
\begin{align}
\label{eq:zetadef}
    \bfzeta := \frac{f_{q/p}(\bfy)}{\|\bfy\|_{\bfr,q}^{q}}, \quad (f_{\theta}(\bfy))_{i} := \mathrm{sgn} (y_{i}) |y_{i}|^{\theta}
\end{align}
which means
\begin{align}
    \zeta_{i} =\frac{\mathrm{sgn}(y_{i})|y_{i}|^{q/p}}{\|\bfy\|_{\bfr,q}^{q}}.
\end{align}
For this $\bfzeta$, we compute
\begin{align}
    \langle \bfy, \bfzeta \rangle_{\bfr} 
     &  =\sum_{i=1}^{n} r_{i}y_{i}\zeta_{i}\\
    &=\sum_{i=1}^{n} r_{i}y_{i} \frac{\mathrm{sgn}(y_{i})|y_{i}|^{q/p}}{\|\bfy\|_{\bfr,q}^{q}}\\
    \label{eq:thirdzetainnerproduct}
    &= \frac{\sum_{i=1}^{n} r_{i}|y_{i}|^{q/p + 1}}{\|\bfy\|_{\bfr,q}^{q}}\\
    \label{eq:fourthzetainnerproduct}
    &= \frac{\sum_{i=1}^{n} r_{i}|y_{i}|^{q}}{\|\bfy\|_{\bfr,q}^{q}}\\
    \label{eq:zetainnerproduct}
    &= \frac{\|\bfy\|_{\bfr,q}^{q}}{\|\bfy\|_{\bfr,q}^{q}} = 1.
\end{align}
The transition from Eq.~\eqref{eq:thirdzetainnerproduct} to Eq.~\eqref{eq:fourthzetainnerproduct} comes from $q/p+1 = q$.
Also, we have
\begin{align}
    \|\bfzeta\|_{\bfr,p} 
    &=\left\|\frac{f_{q/p}(\bfy)}{\|\bfy\|_{\bfr,q}^{q}} \right\|_{\bfr,p}\\
    &= \left(\sum_{i=1}^{n} r_{i}\left|\frac{\mathrm{sgn}(y_{i})|y_{i}|^{q/p}}{\|\bfy\|_{\bfr,q}^{q}}\right|^{p}\right)^{1/p}\\
    &= \frac{1}{\|\bfy\|_{\bfr,q}^{q}} \left(\sum_{i=1}^{n} r_{i}\left|\mathrm{sgn}(y_{i})|y_{i}|^{q/p}\right|^{p}\right)^{1/p}\\
    &=\frac{1}{\|\bfy\|_{\bfr,q}^{q}} \left(\sum_{i=1}^{n} r_{i}\left|y_{i}\right|^{q}\right)^{1/p}\\
    &= \frac{1}{\|\bfy\|_{\bfr,q}^{q}} \|\bfy\|_{\bfr,q}^{q/p}\\
    \label{eq:zetanorm}
    &= \|\bfy\|_{\bfr,q}^{q/p-q} = \|\bfy\|_{\bfr,q}^{-1}.
\end{align}
By substituting $\bfx = \bfzeta$ in Eq.~\eqref{eq:holdersinequalitytransformed}, the assumption $\langle \bfy, \bfx \rangle_{\bfr} = 1$ is satisfied and the equality holds.
Thus, we obtain
\begin{align}
     \|\bfx\|_{\bfr,p} \geq \|\bfy\|_{\bfr,q}^{-1} \iff \|\bfx\|_{\bfr,p}^{p} \geq \|\bfy\|_{\bfr,q}^{-p},
\end{align}
where the equality holds when $\bfx = \bfzeta$.
\end{proof}

We bring another lemma about spaces spanned by matrices.

\begin{lemma}[\cite{ben2003generalized} Ex.9, \S 1.3, p.43 \& \S 2.6, p.71]
\label{lemma:generalizedinverse}
For a matrix $M \in \R^{n_{1} \times n_{2}}$, we define a generalized inverse of matrix $M$ denoted by $M^{\dagger} \in \R^{n_{2} \times n_{1}}$, satisfying that
\begin{align}
    MM^{\dagger}M = M.
\end{align}
Then,
\begin{align}
    \Image(M) = \Image(MM^{\dagger}).
\end{align}
Also,
\begin{align}
    S = \{\bfy: \bfy = (I - MM^{\dagger})\bfx, \bfx \in \R^{n_{1}}\} \subseteq \Image(M)^{\bot}.
\end{align}
\end{lemma}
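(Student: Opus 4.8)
The plan is to treat the two assertions separately, using only the inner-inverse identity $MM^\dagger M = M$ for the image equality, and the additional orthogonal-projection (symmetry) property of the pseudoinverse for the containment.

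For the equality $\Image(M) = \Image(MM^\dagger)$ I would prove the two inclusions. The inclusion $\Image(MM^\dagger) \subseteq \Image(M)$ is immediate, since any vector $MM^\dagger\bfx$ equals $M(M^\dagger\bfx)$ and hence lies in $\Image(M)$. For the reverse inclusion I would take $\bfy \in \Image(M)$, write $\bfy = M\bfx$, and compute $MM^\dagger\bfy = MM^\dagger M\bfx = M\bfx = \bfy$, where the middle equality is exactly $MM^\dagger M = M$. Thus $\bfy = MM^\dagger\bfy \in \Image(MM^\dagger)$, which gives $\Image(M) \subseteq \Image(MM^\dagger)$ and therefore equality. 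This step needs nothing beyond the defining identity of the generalized inverse, and it shows in passing that $MM^\dagger$ is an idempotent projecting onto $\Image(M)$.

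For the containment $S = \{(I - MM^\dagger)\bfx : \bfx \in \R^{n_1}\} \subseteq \Image(M)^{\bot}$, I would first rewrite the target via the linear-algebraic identity $\Image(M)^{\bot} = \Kernel(M^\top)$ recalled earlier in the excerpt. It then suffices to show $M^\top(I - MM^\dagger) = 0$, equivalently $M^\top = M^\top MM^\dagger$. The natural route is to transpose the defining identity: $M^\top = (MM^\dagger M)^\top = M^\top (MM^\dagger)^\top$. Invoking the symmetry $(MM^\dagger)^\top = MM^\dagger$ then yields $M^\top = M^\top MM^\dagger$, so $M^\top(I - MM^\dagger) = 0$ and every $\bfy = (I - MM^\dagger)\bfx$ satisfies $M^\top\bfy = 0$, i.e. $\bfy \in \Kernel(M^\top) = \Image(M)^{\bot}$.

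The point needing care — and the main obstacle — is that the two claims rely on different strengths of ``generalized inverse.'' The image equality holds for any inner inverse satisfying $MM^\dagger M = M$, since $MM^\dagger$ is merely a (possibly oblique) projection onto $\Image(M)$. The containment $S \subseteq \Image(M)^{\bot}$, however, genuinely requires $MM^\dagger$ to be the \emph{orthogonal} projection: for a general oblique projection, $I - MM^\dagger$ maps onto a complement of $\Image(M)$ that need not be orthogonal to it, and the transposition step above fails without $(MM^\dagger)^\top = MM^\dagger$. Accordingly, I would make explicit that $M^\dagger$ is taken to be the Moore--Penrose pseudoinverse (consistent with the paper's standing convention that $MM^+$ is the orthogonal projection onto $\Image(M)$), and invoke its symmetry at precisely that step.
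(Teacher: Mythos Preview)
The paper does not supply its own proof of this lemma; it is stated with a citation to Ben-Israel and Greville and then invoked. So there is no in-paper argument to compare against.

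Your proof of $\Image(M)=\Image(MM^{\dagger})$ is correct and uses exactly the right tool: only the identity $MM^{\dagger}M=M$ is needed, and your two inclusions are the standard ones.

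Your treatment of the second claim is also correct, and you have put your finger on a real issue. The containment $\Image(I-MM^{\dagger})\subseteq\Image(M)^{\bot}$ is \emph{false} for an arbitrary inner inverse satisfying only $MM^{\dagger}M=M$. A small counterexample: $M=(1,0)^{\top}$, $M^{\dagger}=(1,1)$; then $MM^{\dagger}=\bigl(\begin{smallmatrix}1&1\\0&0\end{smallmatrix}\bigr)$ and $(I-MM^{\dagger})\bfe_{2}=(-1,1)^{\top}$, which is not in $\Image(M)^{\bot}=\mathrm{span}\{\bfe_{2}\}$. Your fix --- assuming $(MM^{\dagger})^{\top}=MM^{\dagger}$, i.e.\ that $M^{\dagger}$ is at least a $\{1,3\}$-inverse (in particular the Moore--Penrose inverse) --- is exactly what is needed, and your transposition argument $M^{\top}=(MM^{\dagger}M)^{\top}=M^{\top}(MM^{\dagger})^{\top}=M^{\top}MM^{\dagger}$ then goes through cleanly.

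One caveat worth recording: the paper later applies this lemma with $M=W^{1/2}C$ and $M^{\dagger}=C^{+}W^{-1/2}$, which is a $\{1\}$-inverse but in general \emph{not} the Moore--Penrose inverse (since $W^{1/2}CC^{+}W^{-1/2}$ need not be symmetric). So strengthening the hypothesis to ``Moore--Penrose'' repairs the lemma but does not directly fit that downstream use. This is a wrinkle in the paper's argument rather than in yours.
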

Note that the generalized inverse $M^{\dagger}$ is not unique.
However, the pseudoinverse $M^{+}$ is unique, and also be one of generalized inverses $M^{\dagger}$.
From this lemma, we can write as
\begin{align}
\label{eq:imageandorthogonal}
    \Image(M) = \{\bfa: \bfa = MM^{\dagger}\bfb, \bfb \in \R^{n_{1}}\}, \quad \Image(M)^{\bot} \supseteq \{\bfa: \mathbf{y} = (I-MM^{\dagger})\bfb, \bfb \in \R^{n_{1}}\}.
\end{align}

\subsubsection{Proof for Lemma~\ref{lemma:innerproductoverc}}
\label{sec:prooflemmainnerproductoverc}

This section provides a proof for Lemma~\ref{lemma:innerproductoverc}.

For the illustrative purpose, we start with the $\bfw = \mathbf{1}$ case.
If $\bfw = \mathbf{1}$, then
\begin{align}
    \langle C\bfalpha, \bfbeta \rangle_{\bfw} &= \langle C\bfalpha, \bfbeta' + \bfbeta'' \rangle_{\bfw}\\
    &= \langle C\bfalpha, \bfbeta' \rangle_{\bfw} + \langle C\bfalpha, \bfbeta''  \rangle_{\bfw}\\
    &= \langle C\bfalpha, \bfbeta' \rangle_{\bfw}.
\end{align}
The last equality follows for the following reason. 
By composition, $C\bfalpha \in \Image(C)$. Also, from Lemma~\ref{lemma:generalizedinverse}, $\bfbeta'' \in \Image(C)^{\bot}$. 
Hence, $C\bfalpha$ and $\bfbeta''$ are orthogonal to each other and we get $\langle C\bfalpha, \bfbeta''  \rangle_{\bfw} = 0$.

We now turn into the case where $\bfw$ is arbitrary.
Things are less trivial when we introduce the weight. 
Thus, we further analyze the weighted inner product.

Since the matrix $W$ is a full rank diagonal matrix, we obtain
\begin{align}
    W^{1/2}C (C^{+}W^{-1/2}) W^{1/2}C = W^{1/2}CC^{+}C = W^{1/2}C,
\end{align}
and thus $C^{+}W^{-1/2}$ is a generalized inverse of $W^{1/2}C$, i.e.,
\begin{align}
\label{eq:geninverse}
    (W^{1/2}C)^{\dagger} = C^{+}W^{-1/2}.
\end{align}

Also, since $W$ is a full rank diagonal matrix,
\begin{align}
\label{eq:spacew}
    \{\bfb: \bfb = W^{-1/2}\bfa, \bfa \in \R^{m}\} = \{\bfb' : \bfb' \in \R^{m}\} = \R^{m}.
\end{align}

Using these relations and Lemma~\ref{lemma:generalizedinverse}, we get
\begin{align}
    \Image(W^{1/2}C) 
    &= \{\bfb: \bfb = W^{1/2}C(W^{1/2}C)^{\dagger}\bfa, \bfa \in \R^{m}\}\\
    \label{eq:imusinggeninv}
    &= \{\bfb: \bfb = W^{1/2}CC^{+}W^{-1/2}\bfa, \bfa \in \R^{m}\}\\
    \label{eq:imspacew}
    &= \{\bfb: \bfb = W^{1/2}CC^{+}\bfa', \bfa' \in \R^{m}\},
\end{align}
where we use Eq.~\eqref{eq:geninverse} for Eq.~\eqref{eq:imusinggeninv} and we use Eq.~\eqref{eq:spacew} for Eq.~\eqref{eq:imspacew}.
Moreover, we have
\begin{align}
    \Image(W^{1/2}C)^{\bot} 
    &\supseteq \{\bfb: \bfb = (I - W^{1/2}C(W^{1/2}C)^{\dagger})\bfa, \bfa \in \R^{m}\}\\
    \label{eq:imbotusinggeninv}
    &= \{\bfb: \bfb = (I - W^{1/2}CC^{+}W^{-1/2})\bfa, \bfa \in \R^{m}\}\\
    &= \{\bfb: \bfb = W^{1/2} (I - CC^{+})W^{-1/2}\bfa, \bfa \in \R^{m}\}\\
    \label{eq:kernelcc}
    &= \{\bfb: \bfb = W^{1/2} (I - CC^{+})\bfa', \bfa' \in \R^{m}\},
\end{align}
where we use Eq.~\eqref{eq:geninverse} for Eq.~\eqref{eq:imbotusinggeninv} and we use Eq.~\eqref{eq:spacew} for Eq.~\eqref{eq:kernelcc}.
Therefore, 
\begin{align}
    \langle C\bfalpha,\bfbeta \rangle_{\bfw} 
    &= \langle W^{1/2}C\bfalpha, W^{1/2}\bfbeta \rangle\\
    &= \langle W^{1/2}C\bfalpha, W^{1/2}\bfbeta' \rangle + \langle W^{1/2}C\bfalpha, W^{1/2}\bfbeta'' \rangle\\
    &= \langle W^{1/2}C\bfalpha, W^{1/2}CC^{+}\bfbeta \rangle + \langle W^{1/2}C\bfalpha, W^{1/2} (I- CC^{+})\bfbeta \rangle\\
    \label{eq:thelineexplainedinnerproduct}
    &=  \langle W^{1/2}C\bfalpha, W^{1/2}CC^{+}\bfbeta \rangle.\\
    &= \langle W^{1/2}C\bfalpha, W^{1/2}\bfbeta' \rangle.\\
    \label{eq:thelastequalityforlemmainnerproduct}
    &=  \langle C\bfalpha, \bfbeta'\rangle_{\bfw}
\end{align}
The line Eq.~\eqref{eq:thelineexplainedinnerproduct} follows because from Eq.~\eqref{eq:kernelcc} the $W^{1/2} (I - CC^{+})\bfbeta \in \Image(W^{1/2}C)^{\bot}$ and therefore $W^{1/2} (I - CC^{+})\bfbeta$ is orthogonal to $W^{1/2}C\bfalpha$, which induces $\langle W^{1/2}C\bfalpha, W^{1/2} (I - CC^{+})\bfbeta \rangle = 0$.

Eq.~\eqref{eq:thelastequalityforlemmainnerproduct} concludes the proof.

\subsubsection{Proof for Lemma~\ref{lemma:pholderseqcond}}
\label{sec:pholderseqcond}
This section proves Lemma~\ref{lemma:pholderseqcond}.

 If $f_{q/p}(C\bfy) \in \Image(C)$, then 
 \begin{align}
    \label{eq:CC+fqp}
     C\bfz &= CC^{+} \frac{f_{q/p}(C\bfy)}{\|\bfy\|_{G,q}^{q}} \\
     \label{eq:plainfqp}
     &= \frac{f_{q/p}(C\bfy)}{\|\bfy\|_{G,q}^{q}}\\
     \label{eq:equalz}
     &= \frac{f_{q/p}(C\bfy)}{\|C\bfy\|_{\bfw,q}^{q}}
 \end{align}
 From Eq.~\eqref{eq:CC+fqp} to Eq.~\eqref{eq:plainfqp}, we use the following relation; for a vector $\bfa$$\in$$\Image(C)$ we have $\bfa = CC^{+}\bfa$ since $CC^{+}$ is an orthogonal projection onto the space $\Image(C)$.
Eq.~\eqref{eq:equalz} is a form of Eq.~\eqref{eq:zetadef}, and thus from Prop.~\ref{prop:newkr}, 
Eq.~\eqref{eq:equalz} satisfies the equality condition of the \holders inequality as
 \begin{align}
     \|C\bfz\|_{\bfw,p}^{p} = \|C\bfy\|_{\bfw,q}^{-p} \iff \|\bfz\|_{G,p}^{p} = \|\bfy\|_{G,q}^{-p}, 
 \end{align}
 where we use the definition of the graph $p$-seminorm.
 Thus, we obtain the claim.

\section{Proofs for Theorem~\ref{thm:tree} and Theorem~\ref{thm:presistance}}

In this section we prove Thm.~\ref{thm:tree} and Thm.~\ref{thm:presistance}.
The general strategy is applying Prop.~\ref{prop:newkroverc}.
We first prove the general case of Thm.~\ref{thm:presistance}.

\subsection{Proof for Theorem~\ref{thm:presistance}}

By definition,
\begin{align}
    r_{G,p}(i,j) = \frac{1}{\min_{\bfx} \|\bfx\|_{G,p}^{p} \ \subjectto \ x_{i} - x_{j} = 1}.
\end{align}
First, we rewrite the condition of the minimization problem.
Using Eq.~\eqref{eq:constraints}, we observe that the denominator of Eq.\eqref{eq:presistance} can be written as
\begin{align}
    \label{eq:minpcutrewritten}
    \min_{\bfx} \{\|\bfx\|_{G,p}^{p} \  \mathrm{s.t.} \  x_{i} - x_{j} = 1\} &= \min_{\bfx} \{ \|\bfx\|_{G,p}^{p} \  \mathrm{s.t.} \  \langle L^{+}\mathbf{e}_{i} - L^{+}\mathbf{e}_{j},\bfx \rangle_{L} = 1\}
\end{align}

From this rewrite, we see that Eq.~\eqref{eq:minpcutrewritten} is exactly same as the minimization problem of Prop.~\ref{prop:newkroverc} if we substitute $\bfy := L^{+}(\mathbf{e}_{i} - \mathbf{e}_{j})$. 
Thus, we apply Prop.~\ref{prop:newkroverc} to this problem in order to obtain lower and upper bounds of Eq.~\eqref{eq:minpcutrewritten}.

\paragraph{Lower Bound of Eq.~\eqref{eq:minpcutrewritten}.} 
Now, we come to the lower bound of this problem Eq.~\eqref{eq:minpcutrewritten}.
By applying the lower bound of Prop.~\ref{prop:newkroverc} with substituting $\bfy := L^{+}(\mathbf{e}_{i} - \mathbf{e}_{j})$, we obtain
\begin{align}
    \label{eq:furtherrewritedenominatorofeffectiveresistance}
    \|L^{+}(\mathbf{e}_{i} - \mathbf{e}_{j})\|_{G,q}^{-p} \leq \min_{\bfx} \{\|\bfx\|_{G,p}^{p} \ \mathrm{s.t.} \ \langle L^{+}(\mathbf{e}_{i} - \mathbf{e}_{j}), \bfx\rangle_{L} = 1 \}.
\end{align}
This conclude the lower bound.

\paragraph{Upper Bound of Eq.~\eqref{eq:minpcutrewritten}.}
Next, we turn to the upper bound of this problem Eq.~\eqref{eq:minpcutrewritten}.

We first compute
\begin{align}
    \|\bfz\|_{G,p} = \|C\mathbf{z}\|_{\bfw,p} &= \left\|\frac{CC^{+}f_{q/p}(C\bfy)}{\|C\bfy\|_{\bfw,q}^{q}}\right\|_{\bfw,p} \\
    \label{eq:CC+Cyinnorm}
    &= \frac{\left\|CC^{+}f_{q/p}(C\bfy)\right\|_{\bfw,p}}{\|C\bfy\|_{\bfw,q}^{q}}  \\
    \label{eq:CC+Cyoutnorm}
    &\leq \vvvert CC^{+}\vvvert_{\bfw,p} \frac{\|f_{q/p}(C\bfy)\|_{\bfw,p}}{\|C\bfy\|_{\bfw,q}^{q}}  \\
    \label{eq:CC+Cy}
    &= \vvvert CC^{+} \vvvert_{\bfw,p}  \|C\bfy\|_{\bfw,q}^{-1}\\
    \label{eq:WCC+WCy}
    &= \vvvert W^{1/p}CC^{+}W^{-1/p} \vvvert_{p} \|C\bfy\|_{\bfw,q}^{-1}\\
    &= \vvvert W^{1/p}CC^{+}W^{-1/p} \vvvert_{p} \|\bfy\|_{G,q}^{-1}
    \\
    \label{eq:inequalityzymatrixnorm}
    &= \alpha_{G,p}  \|\bfy\|_{G,q}^{-1},
\end{align}

where we recall that we defined as $\alpha_{G,p} := \vvvert W^{1/p}CC^{+}W^{-1/p} \vvvert_{p}$.
The transformation from Eq.~\eqref{eq:CC+Cyinnorm} to Eq.~\eqref{eq:CC+Cyoutnorm} follows from the submultiplicative characteristics of the matrix norm discussed in Sec.~\ref{sec:preliminaries}.
The equality from Eq.~\eqref{eq:CC+Cyoutnorm} to Eq.~\eqref{eq:CC+Cy} holds due to the same discussion as Eq.~\eqref{eq:zetanorm} in Prop.~\ref{prop:newkr}, which we discussed in Sec.~\ref{sec:auxiliary}.
The transformation from Eq.~\eqref{eq:CC+Cy} to Eq.~\eqref{eq:WCC+WCy} follows from a characteristics of the weighted matrix norm discussed in Eq.~\eqref{eq:weightedtounweightednorm}.
Hence, by taking the $p$-th power of the inequality Eq.~\eqref{eq:inequalityzymatrixnorm} and observing that we substitute $\bfy := L^{+}(\bfe_{i} - \bfe_{j})$, we obtain
\begin{align}
    \|\bfz\|_{G,p}^{p} \leq 
    &\alpha_{G,p}^{p}  \|L^{+}(\bfe_{i} - \bfe_{j})\|_{G,q}^{-p}
\end{align}
Thus, from Prop.~\ref{prop:newkroverc} and the inequality Eq.~\eqref{eq:inequalityzymatrixnorm} we get
\begin{align}
    \label{eq:furtherrewritedenominatorofeffectiveresistanceupperbound}
    \min_{\bfx} \{\|\bfx\|_{G,p}^{p} \ \mathrm{s.t.} \ \langle L^{+}(\mathbf{e}_{i} - \mathbf{e}_{j}), \bfx\rangle_{L} = 1 \}& \leq \|\bfz\|_{G,p}^{p} \leq \alpha_{G,p}^{p} \|L^{+}(\bfe_{i} - \bfe_{j})\|_{G,q}^{-p}.
\end{align}

\paragraph{Combining Lower and Upper Bounds of Eq.~\eqref{eq:minpcutrewritten}.}
We now combine the lower bound Eq.~\eqref{eq:furtherrewritedenominatorofeffectiveresistance} and the upper bound Eq.~\eqref{eq:furtherrewritedenominatorofeffectiveresistanceupperbound}.
By combining these two and using Eq.~\eqref{eq:minpcutrewritten}, we get
\begin{align}
\notag
    &\|L^{+}\mathbf{e}_{i}-L^{+}\mathbf{e}_{j}\|_{G,q}^{-p}\leq \min_{\bfx} \{ \|\bfx\|_{G,p}^{p}  \  \mathrm{s.t.} \ \langle L^{+}(\mathbf{e}_{i} - \mathbf{e}_{j}), \bfx\rangle_{L} = 1\} \leq \alpha_{G,p}^{p} \|L^{+}\mathbf{e}_{i}-L^{+}\mathbf{e}_{j}\|_{G,q}^{-p}
\\ &\iff    \|L^{+}\mathbf{e}_{i}-L^{+}\mathbf{e}_{j}\|_{G,q}^{-p}\leq \min_{\bfx} \{ \|\bfx\|_{G,p}^{p}  \  \mathrm{s.t.} \  x_{i} - x_{j} = 1\} \leq \alpha_{G,p}^{p} \|L^{+}\mathbf{e}_{i}-L^{+}\mathbf{e}_{j}\|_{G,q}^{-p}
\end{align}

For the $p$-effective resistance, taking the inverse we obtain
\begin{align}
     \frac{1}{\alpha_{G,p}^{p}} \|L^{+}\mathbf{e}_{i}-L^{+}\mathbf{e}_{j}\|_{G,q}^{p} \leq r_{G,p}(i,j) \leq \|L^{+}\mathbf{e}_{i}-L^{+}\mathbf{e}_{j}\|_{G,q}^{p}.
\end{align}

\subsection{Proof for Theorem~\ref{thm:tree}}
For the incidence matrix of tree, $\mathrm{rank}(C) = n-1$~\cite{bapat2010graphs}. 
Hence $\Image(C) = \R^{n-1}$.
Thus, $f_{q/p}(C\bfy) \in \R^{n-1} = \Image(C)$.
Using the Lemma~\ref{lemma:pholderseqcond} and substituting $\bfy = L^{+}\bfe_{i} - L^{+}\bfe_{j}$, 
\begin{align}
\label{eq:treeequation}
\min_{\bfx} \{ \|\bfx\|_{G,p}^{p} \subjectto \langle \bfx,L^{+}\bfe_{i} - L^{+}\bfe_{j} \rangle_{L} =1 \} = \|L^{+}\bfe_{i} - L^{+}\bfe_{j}\|_{G,q}^{-p}.
\end{align}
Recall that the minimization problem of Eq.~\eqref{eq:treeequation} is the inverse of the $p$-resistance. 
Therefore, Eq.~\eqref{eq:treeequation} leads to the claim.

\section{Proof for Proposition.~\ref{prop:generalbound}}
We recall that by definition of pseudoinverse, we have
\begin{align}
    \vvvert CC^{+} \vvvert_{2} = 1,
\end{align}
since the eigenvalues of $CC^{+}$ is either 0 or 1.
Also, for any matrix $M$ and any invertible matrix $P$, $PMP^{-1}$ and $M$ share the same eigenvalues.
By construction, $W$ is also an invertible matrix.
Thus, using Lemma~\ref{lemma:higham}, we obtain 
\begin{align}
    \alpha_{G,p} 
    &= \vvvert W^{1/p}CC^{+}W^{-1/p} \vvvert_{p} \\
    & \leq  m^{|1/2-1/p|} \vvvert W^{1/p}CC^{+}W^{-1/p} \vvvert_{2} \\
    &= m^{|1/2-1/p|}.
\end{align}

\section{Illustrative Examples of Clustering via $p$-resistance Fig.~\ref{fig:example}}
\label{sec:illustrative}

This section explains illustrative examples of clustering via $p$-resistance where $p$ plays a role.

\subsection{Preliminaries for Illustrative Examples}
\label{sec:preliminariesforillustrative}

Before we discuss the details of the clustering, we setup preliminaries.
We now setup the notions on the graph metrics.
First, a \textit{$st$-mincut} is defined as the minimum cut between the vertices $s$ and $t$, i.e.,
\begin{align}
\label{eq:defstmincut}
    \min_{V'} \mathrm{Cut}(s,t) := \min_{V'} \sum_{i\in V', j \in V'\backslash V | s \in V', t \in V'\backslash V} a_{ij}.
\end{align}
The act of the ``cut'' of the edges is defined to divide into two graphs so that the vertex $s$ belongs to one and the vertex $t$ belongs to the other.
The minimum cut is that we want such a cut so that the sum of the weight of the edges to be cut is minimized.

Now, we also define the shortest path between vertices $s$ and $t$ is defined as
\begin{align}
\label{eq:defshortestpath}
\min_{\bfi} \sum_{\ell \in E}w_{\ell}  i_{\ell} \ \subjectto \textnormal{$\bfi = (i_{\ell})_{\ell\in E}$ unit flow from $i$ to $j$},
\end{align}
where $\bfi \in \{0,1\}^{m}$.
The shortest path problem is to finding the path with smallest sum of the weights of edges between $s$ and $t$.

In the following, we show that $p$-resistance is connection with $st$-mincut and the shortest path.
We recall the theorem in~\cite{alamgir2011phase} as
\begin{proposition}[\citet{alamgir2011phase}]
Consider a $p$-flow problem as
\begin{align}
\label{eq:flowproblem}
    F_{G,p}(i,j) := \min_{\bfi}\sum_{\ell \in E} w_{\ell}^{1-p} i_{\ell}^{p} \ \subjectto \textnormal{$\bfi = (i_{\ell})_{\ell\in E}$ unit flow from $i$ to $j$},
\end{align}
where $\bfi \in \R^{+m}$ is a current at edges. 
Then, for $1/p+1/q=1$, we have
\begin{align}
\label{eq:resistanceandflow}
    r_{G,p}^{1/(p-1)}(i,j) = F_{G,q}(i,j).
\end{align}
\end{proposition}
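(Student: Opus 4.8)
The plan is to read this identity as the convex (Lagrangian) duality between the voltage problem defining the $p$-resistance and the $q$-dissipation flow problem, and to prove it by sandwiching $F_{G,q}(i,j)$ between matching bounds coming from \holders inequality and its equality case. Write $P := \min_{\bfx}\{\|\bfx\|_{G,p}^{p} : x_{i} - x_{j} = 1\}$, so that $r_{G,p}(i,j) = P^{-1}$, and $Q := F_{G,q}(i,j)$. The first step is the pairing identity: for any $\bfx$ with $x_{i} - x_{j} = 1$ and any unit flow $\bfi$ from $i$ to $j$, I encode flow conservation as $C^{\top}\bfi = \bfe_{i} - \bfe_{j}$, so that
\begin{align*}
\sum_{\ell \in E} i_{\ell}(C\bfx)_{\ell} = \bfi^{\top} C \bfx = (C^{\top}\bfi)^{\top}\bfx = x_{i} - x_{j} = 1 .
\end{align*}

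Second, I would obtain a lower bound on $Q$ from the weighted \holders inequality (Lemma~\ref{lemma:holders}), applied to the splitting $i_{\ell}(C\bfx)_{\ell} = (w_{\ell}^{1/p}(C\bfx)_{\ell})(w_{\ell}^{-1/p} i_{\ell})$. Since $-q/p = 1-q$, this yields, for every feasible pair,
\begin{align*}
1 = \sum_{\ell} i_{\ell}(C\bfx)_{\ell} \leq \left(\sum_{\ell} w_{\ell}|(C\bfx)_{\ell}|^{p}\right)^{1/p}\left(\sum_{\ell} w_{\ell}^{1-q}|i_{\ell}|^{q}\right)^{1/q} .
\end{align*}
Fixing $\bfi$ and taking the infimum over $\bfx$ (legitimate because $t \mapsto t^{1/p}$ is monotone), then taking the infimum over $\bfi$, gives $P^{1/p}Q^{1/q} \geq 1$, i.e.\ $Q \geq P^{-q/p} = P^{1-q}$.

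Third, I would match this with an upper bound via the equality case. Let $\bfx^{*}$ be the minimizer, set $v^{*} := C\bfx^{*}$, and define the induced current $i^{*}_{\ell} := w_{\ell}|v^{*}_{\ell}|^{p-1}\sgn(v^{*}_{\ell})$. The stationarity (KKT) condition for minimizing $\|\bfx\|_{G,p}^{p}$ under the linear constraint reads $p\,C^{\top}\bfi^{*} = \mu(\bfe_{i} - \bfe_{j})$, so $\bfi^{*}$ is a flow from $i$ to $j$; pairing with $\bfx^{*}$ identifies $\mu/p = P$, so that $\hat{\bfi} := \bfi^{*}/P$ is a unit flow. Using $(p-1)q = p$ and $w_{\ell}^{1-q}w_{\ell}^{q} = w_{\ell}$, its dissipation is
\begin{align*}
\sum_{\ell} w_{\ell}^{1-q}|\hat{\imath}_{\ell}|^{q} = P^{-q}\sum_{\ell} w_{\ell}|v^{*}_{\ell}|^{p} = P^{-q}\cdot P = P^{1-q},
\end{align*}
whence $Q \leq P^{1-q}$. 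Combining the two bounds gives $Q = P^{1-q}$, and since $q - 1 = 1/(p-1)$ and $P = r_{G,p}(i,j)^{-1}$, I conclude $F_{G,q}(i,j) = P^{1-q} = r_{G,p}^{q-1}(i,j) = r_{G,p}^{1/(p-1)}(i,j)$.

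The hard part will be the bookkeeping of edge orientations and signs in the flow: the flow problem is posed over $\R^{+m}$, whereas the induced current $i^{*}_{\ell}$ carries the sign of the potential drop, so I would orient each edge along the direction of $i^{*}$ (equivalently read $|i_{\ell}|$ in the objective) and verify conservation $C^{\top}\bfi = \bfe_{i} - \bfe_{j}$ under this convention. The remaining delicate points are justifying that the convex program attains its minimum and that its KKT conditions are necessary and sufficient, and confirming that the sequential minimization in the lower-bound step is valid; both are routine once the pairing identity and the exponent arithmetic $(p-1)q = p$, $q-1 = 1/(p-1)$ are in place.
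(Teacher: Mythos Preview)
Your proof is correct. Note, however, that the paper does not give its own proof of this proposition: it is quoted as a known result from Alamgir and von Luxburg (2011) and used only as a preliminary in the appendix on illustrative examples. Your argument---the pairing identity $\bfi^{\top}C\bfx = x_{i}-x_{j}$, the weighted \holders bound $Q \geq P^{1-q}$, and the KKT-based construction of the unit flow $\hat{\bfi}=\bfi^{*}/P$ attaining $P^{1-q}$---is the standard Lagrangian-duality proof of this voltage--flow identity and is essentially what the cited reference does. The orientation issue you flag is indeed benign: since the objective depends only on $|i_{\ell}|$, reorienting each edge along the sign of $i^{*}_{\ell}$ makes the induced current nonnegative while preserving both the conservation law $C^{\top}\bfi=\bfe_{i}-\bfe_{j}$ and the dissipation value.
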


We first remark that $\bfi$ in $q$-flow problem is non-negative real value whereas $\bfi$ for the shortest path is either 0 or 1.
We remark that when $p\to\infty$, $q$ goes to 1 and $q$-flow problem is a simple shortest path flow problem.

This proposition means that the $1/(p-1)$-th power of $p$-resistance is equivalent to the $q$-flow.
From this proposition, we now see the connection between $p$-resistance, and $st$-mincut and shortest path as follows.
\begin{itemize}
    \item When $p \to 1$, $p$-resistance between $s$ and $t$ is 1/$st$-mincut.
    \item When $p \to \infty$,  $1/(p-1)$-th power of the $p$-resistance is the discrete shortest path of the \textit{unweighted} graph.
\end{itemize}
Thus, we intuitively characterize the $p$-resistance as
\begin{itemize}
    \item When $p$ is small, $p$-resistance more focus on a minimum cut.
    \item When $p$ is large, $p$-resistance more focus on the ``path'', and also more focus on the ``unweighted topology''.
\end{itemize}

We next formulate the clustering problem as follows.
We use the $k$-center algorithm using $p$-resistance as a metric as
\begin{align}
\label{eq:kcenter1/p-1}
    C_{G,p}^{*} := \min_{v_{1}^{*},v_{2}^{*} \in V}\max_{v \in V} \min_{i\in\{1,2\}} r_{G,p}^{1/(p-1)}(v,v_{i}^{*}),
\end{align}
where $\{v_{1}^{*},v_{2}^{*}\}$ is a minimizer.
Since when $p \to 1$ and $r_{G,p}^{1/(p-1)}>0$, then $r_{G,p}^{1/(p-1)} \to \infty$ and therefore Eq.~\eqref{eq:kcenter1/p-1} cannot be used.
In this case, we note that the following relation that is 
\begin{align}
    x < y \iff x^{1/(p-1)} < y^{1/(p-1)}
\end{align}
we have
\begin{align}
\label{eq:kcenter}
    C_{G,p}^{*p-1} := \min_{v_{1}^{*},v_{2}^{*} \in V}\max_{v \in V} \min_{i\in\{1,2\}} r_{G,p}(v,v_{i}^{*}).
\end{align}
Thus, we simply use the comparison of $r_{G,p}$ instead of $r_{G,p}^{(1/(p-1))}$ when $p \to 1$.
We finally remark that ~\citet{herbster2010triangle} showed that when $p \to 1$ the triangle inequality still holds, i.e., 
\begin{align}
    r_{G,p\to1}(i,j) \leq r_{G,p\to1}(i,\ell) + r_{G,p\to1}(\ell,j).
\end{align}

\subsection{Illustrative Examples of Clustering via $p$-Resistance}

Now, we discuss the examples in Fig.~\ref{fig:example}.
We give notations as in Fig.~\ref{fig:examplegraphs}.
We denote by $(V_{ij}, E_{ij})$ the vertices and edges of the graph $G_{ij}$.
We also give the example where the weight matters and its notation in Fig.~\ref{fig:examplegraphsline}.

\begin{figure*}[t]
\begin{center}
\includegraphics[width=0.6\hsize,clip]{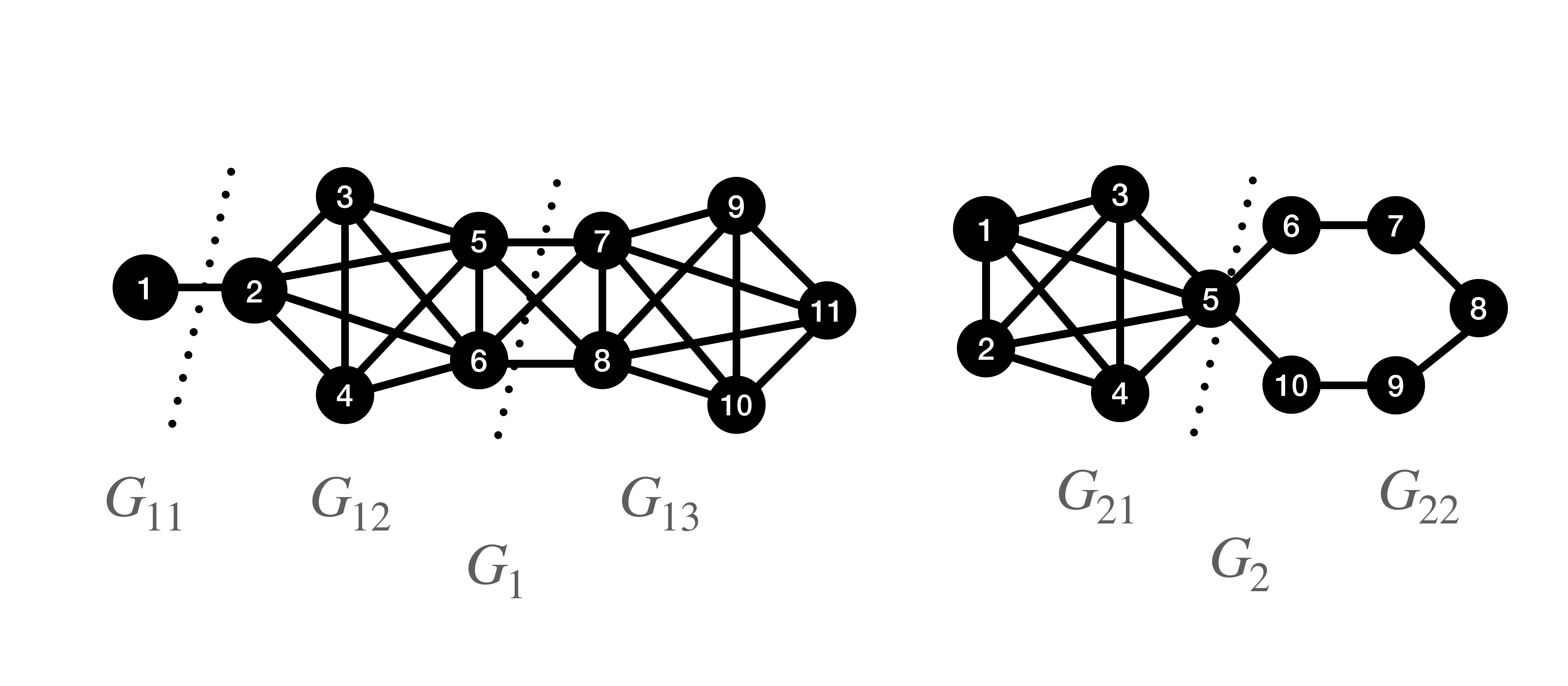}
\caption{
The notations of illustrative example graphs. In the graph $G_{2}$ the vertex 5 is in both $G_{21}$ and $G_{22}$.
}
\label{fig:examplegraphs}
\end{center} 
\vspace{-0.2in}
\end{figure*}

\begin{figure*}[t]
\begin{center}
\includegraphics[width=0.7\hsize,clip]{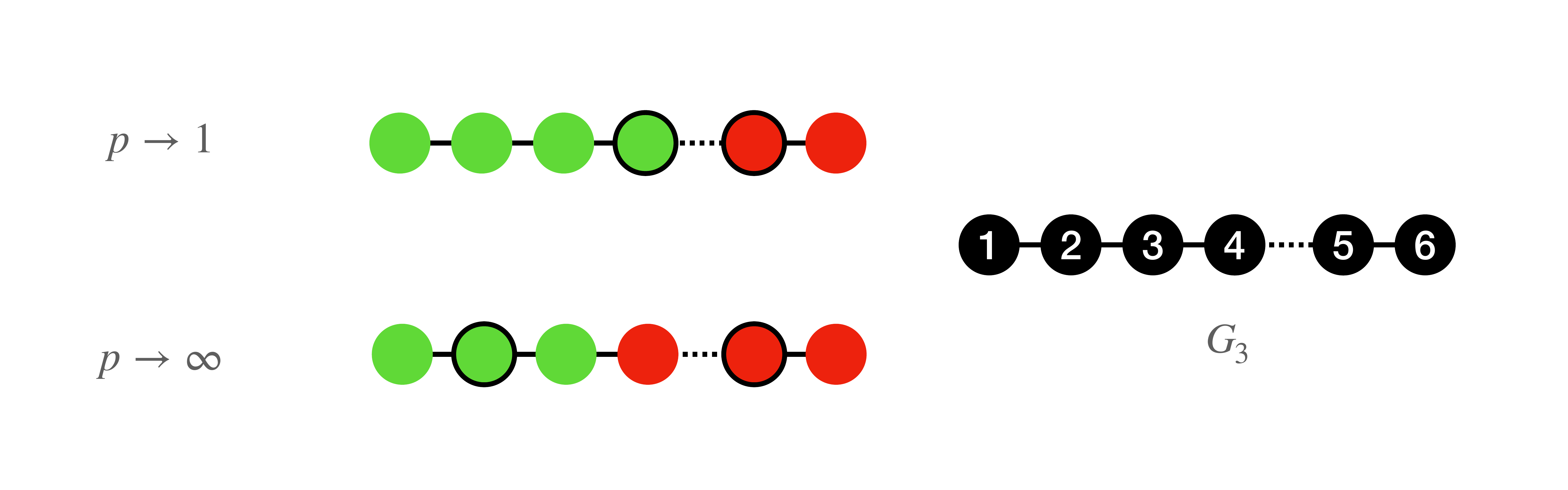}
\caption{
The illustrative example of a weighted graph and its notations. 
The weights of edge drawn in the line are 1, whereas weight of the dotted line is $\epsilon \ll 1$.
The other drawing rule follows Fig.~\ref{fig:example}.
In the example, we observe that we focus on the difference of the weight when $p\to 1$, while we ignore the weight when $p \to \infty$.
For this example, ``more natural result'' depends on the perspective.
If we look at the cut, the more natural result is obtained when $p \to 1$.
If we look at the path-based topology, we obtain the natural result when $p \to \infty$.
Details in Appendix~\ref{sec:illustrative}.
}
\label{fig:examplegraphsline}
\end{center}
\end{figure*}

\subsubsection{The Case of $G_{1}$}

For the case of $p \to 1$, since $p$-resistance is the 1 over min-cut, we have for $j>i$
\begin{align}
    r_{G,p}(i,j) = 
    \left\{
    \begin{array}{cc}
        1 & i=1 \text{ and } j \in V\backslash \{1\}\\
        1/5 & i,j \in V_{12} \text{ or } i,j \in V_{13} \\
        1/4 & i \in \{5,6\},j \in \{7,8\}
    \end{array}
    \right.
\end{align}
Note that $r_{G,p}(i,j) = r_{G,p}(j,i)$.
By using this $p$-resistance, the set satisfying Eq.~\eqref{eq:kcenter} is $v_{1}^{*} = 1$ and $v_{2}^{*} \in V_{12} \cup V_{13}$.
This is because if we do not take $v_{1}^{*} = 1$, 
\begin{align}
    \min_{v_{1}^{*},v_{2}^{*} \in V}\max_{v \in V} \min_{i\in\{1,2\}} r_{G,p}(v,v_{i}^{*}) = 1,
\end{align}
which is the maximum of the weight of edges of $G$.

For $p \to \infty$, since $p$-resistance is a shortest path, we have for $j>i$
\begin{align}
    r_{G,p}^{1/(p-1)}(i,j) = 
    \left\{
    \begin{array}{cl}
        1 & i=1,j=2\\
        2& i=1, j\in \{3,4,5,6\}\\
        3& i=1, j\in \{7,8\}\\
        4& i=1, j\in \{9,10,11\}\\
        1& i,j \in V_{12} \text{ or } i,j \in V_{13} \\
        2& i \in V_{12}, j\in V_{13}.
    \end{array}
    \right.
\end{align}
Then if we set $v_{1}^{*} = 2$ and $v_{2}^{*} \in V_{13}$, we have
\begin{align}
    \min_{v_{1}^{*},v_{2}^{*} \in V}\max_{v \in V} \min_{i\in\{1,2\}} r_{G,p}^{1/(p-1)}(v,v_{i}^{*}) = 1.
\end{align}
Since this is the minimum of the weight of the edge, it is clear that this set is optimal.

Coloring the vertices in the same color if the vertices are closer to the same center than the others, we obtain Fig.~\ref{fig:example}.

\subsubsection{The Case of $G_{2}$}
For the case of $p \to 1$, we have for $j>i$
\begin{align}
    r_{G,p}(i,j) = 
    \left\{
    \begin{array}{cl}
        1/5 & i,j \in V_{21}\\
        1/2 & i \in V_{21},j \in V_{22} \\
        1/2 & i,j \in V_{22}.
    \end{array}
    \right.
\end{align}
Then if we set $v_{1}^{*} \in V_{21}$ and $v_{2}^{*} \in V_{22}$, we have
\begin{align}
    \min_{v_{1}^{*},v_{2}^{*} \in V}\max_{v \in V} \min_{i\in\{1,2\}} r_{G,p}^{1/(p-1)}(v,v_{i}^{*}) = 1/2.
\end{align}
Since $\min_{i \in V_{22},}r_{G,p}(i,j)=1/2$, this is the best possible minimum. 

For the case of $p \to \infty$, we have for $j>i$
\begin{align}
    r_{G,p}(i,j)^{1/(p-1)} = 
    \left\{
    \begin{array}{cl}
        1 & i,j \in V_{21}\\
        2 & i \in V_{21}\backslash\{5\},j \in \{6,10\} \\
        3 & i \in V_{21}\backslash\{5\},j \in \{7,9\} \\
        4 & i \in V_{21}\backslash\{5\},j=8\\
        \min \{j-i, 6-(j-i) \} & i,j \in V_{22}
    \end{array}
    \right.
\end{align}
Then if we set $v_{1}^{*} = 5$ and $v_{2}^{*} = 8$, we have
\begin{align}
    \min_{v_{1}^{*},v_{2}^{*} \in V}\max_{v \in V} \min_{i\in\{1,2\}} r_{G,p}^{1/(p-1)}(v,v_{i}^{*}) = 1.
\end{align}
Since the minimum of $p$-resistance is 1, this is the best possible minimum.

Coloring the vertices in the same color if the vertices are closer to the same center than the others, we obtain Fig.~\ref{fig:example}.

\subsubsection{The Case of $G_{3}$}
For the case of $p \to 1$, we have for $j>i$
\begin{align}
    r_{G,p}(i,j) = 
    \left\{
    \begin{array}{cl}
        1 & i,j \in \{1,\ldots,4\}\\
        1 & i,j \in \{5,6\}\\
        1/\epsilon & i \in  \{1,\ldots,4\},j \in \{5,6\}.
    \end{array}
    \right.
\end{align}
Then if we set $v_{1}^{*} \in \{1,\ldots,4\}$ and $v_{2}^{*}\in \{5,6\}$, we have
\begin{align}
    \min_{v_{1}^{*},v_{2}^{*} \in V}\max_{v \in V} \min_{i\in\{1,2\}} r_{G,p}^{1/(p-1)}(v,v_{i}^{*}) = 1.
\end{align}
Since the minimum of $p$-resistance is 1, this is the best possible minimum. 

For the case of $p \to \infty$, we have for $j>i$
\begin{align}
    r_{G,p}^{1/(p-1)}(i,j) = j-i \text{ if $j>i$}
\end{align}
Then if we set $v_{1}^{*} =2$ and $v_{2}^{*}=5$, we have
\begin{align}
    \min_{v_{1}^{*},v_{2}^{*} \in V}\max_{v \in V} \min_{i\in\{1,2\}} r_{G,p}^{1/(p-1)}(v,v_{i}^{*}) = 1.
\end{align}
Since the minimum of $p$-resistance is 1, this is the best possible minimum. 

Coloring the vertices in the same color if the vertices are closer to the same center than the others, we obtain Fig.~\ref{fig:example}.

\section{On Theorem~\ref{thm:alamgirp}}

This section discusses Thm.~\ref{thm:alamgirp}, including proof and some existing claim on Thm.~\ref{thm:alamgirp}.

\subsection{Proof for Theorem~\ref{thm:alamgirp}}

We use the following characteristics of $p$-Laplacian, defined as Eq.~\eqref{eq:defplaplacian}.
\begin{proposition}[\cite{pgraph}]
\label{prop:laplacianproperties}
\begin{align} 
S_{G,p}(\bfx) & = \langle \bfx, \Delta_{p}\bfx \rangle_{\mathcal{H}(V)},   \label{penergy} \\
\left(\frac{\partial S_{p}(\bfx)}{\partial \bfx}\right)_{i}  & = p (\Delta_{p} \bfx)_{i}.\label{propertylaplacian} 
\end{align}
\end{proposition}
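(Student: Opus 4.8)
The plan is to derive both identities by direct expansion, exploiting only the symmetry $a_{ij}=a_{ji}$ of the adjacency matrix together with the elementary fact that $t\,\sgn(t)=|t|$. No machinery beyond these is needed, so this is essentially a verification; the only point requiring care is keeping the edge-counting convention consistent between $S_{G,p}$ in \eqref{eq:penergyandpresistance} and $\Delta_p$ in \eqref{eq:defplaplacian}, where $(\Delta_p\bfx)_i=\sum_{j}a_{ij}|x_i-x_j|^{p-1}\sgn(x_i-x_j)$ runs over the neighbours $j$ of $i$.

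For the first identity \eqref{penergy}, I would expand the inner product as $\langle\bfx,\Delta_p\bfx\rangle=\sum_i x_i(\Delta_p\bfx)_i=\sum_{i,j}a_{ij}\,x_i\,|x_i-x_j|^{p-1}\sgn(x_i-x_j)$, read as a sum over ordered pairs. The key step is to pair the ordered term $(i,j)$ with $(j,i)$: using $a_{ij}=a_{ji}$, $|x_j-x_i|=|x_i-x_j|$ and $\sgn(x_j-x_i)=-\sgn(x_i-x_j)$, the two contributions combine into $a_{ij}(x_i-x_j)|x_i-x_j|^{p-1}\sgn(x_i-x_j)=a_{ij}|x_i-x_j|^{p}$, where the last equality is $t\,\sgn(t)=|t|$ applied to $t=x_i-x_j$. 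Summing the paired terms over the edges then reproduces exactly $\sum_{i,j}a_{ij}|x_i-x_j|^{p}=S_{G,p}(\bfx)$.

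For the gradient identity \eqref{propertylaplacian}, I would first record that for $p>1$ the scalar map $t\mapsto|t|^{p}$ is continuously differentiable with derivative $p|t|^{p-1}\sgn(t)$, which vanishes at $t=0$; hence $S_{G,p}$ is differentiable everywhere and may be differentiated term by term. Differentiating $S_{G,p}(\bfx)=\sum_{i,j}a_{ij}|x_i-x_j|^{p}$ and using $\partial(x_i-x_j)/\partial x_k=\delta_{ik}-\delta_{jk}$, each edge $\{i,j\}$ contributes $p\,a_{ij}|x_i-x_j|^{p-1}\sgn(x_i-x_j)(\delta_{ik}-\delta_{jk})$ to $\partial S_{G,p}/\partial x_k$. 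The decisive observation is that relabelling the two endpoints of an edge flips both $\sgn(x_i-x_j)$ and $\delta_{ik}-\delta_{jk}$, so the contribution of each edge incident to $k$ is unambiguous and equals $p\,a_{kj}|x_k-x_j|^{p-1}\sgn(x_k-x_j)$ for its neighbour $j$. Summing over the neighbours of $k$ yields $p\sum_j a_{kj}|x_k-x_j|^{p-1}\sgn(x_k-x_j)=p(\Delta_p\bfx)_k$, which is the claim.

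I expect no conceptual obstacle here; the only delicate point is the convention by which $\sum_{i,j\in V}$ enumerates each edge, which is pinned down by the stated identity $S_{G,2}(\bfx)=\bfx^\top L\bfx$, so that the symmetrization in both arguments does not introduce a spurious factor of two. A fully robust alternative, should the bookkeeping feel ambiguous, is to carry the energy as a half-sum over ordered pairs throughout: then the two orientations of every edge are summed explicitly, and the factor $p$ in \eqref{propertylaplacian} together with the cancellation of the orientation sign track automatically.
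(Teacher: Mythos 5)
Your proof is correct. Note that the paper itself gives no argument for this proposition --- it is imported verbatim from \cite{pgraph} --- and your direct expansion is exactly the standard verification given there: the pairing of ordered terms $(i,j)$ with $(j,i)$ via $t\,\sgn(t)=|t|$ for \eqref{penergy}, and term-by-term differentiation using that $t\mapsto|t|^{p}$ is $C^{1}$ for $p>1$ for \eqref{propertylaplacian}. Your care with the edge-counting convention is also the right instinct, since the paper's $\sum_{i,j\in V}$ must count each edge once to be consistent with $S_{G,2}(\bfx)=\bfx^{\top}L\bfx$ and with $\|\bfx\|_{G,p}^{p}=\sum_{\ell\in E}w_{\ell}|(C\bfx)_{\ell}|^{p}$, and your pairing argument respects exactly that normalization.
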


Before we prove the main argument, we now explore a matrix expression of the $p$-Laplacian $\Delta_{p}$.
We define a matrix $A_{p,\bfx}$ as
\begin{align}
    A_{p,\bfx}(i,j) := a_{ij}|x_{i} - x_{i}|^{p-2},
\end{align}
and its degree-like matrix $D_{p,\bfx}$ as
\begin{align}
    D_{p,\bfx}(i,j) = \left\{
    \begin{array}{ll}
        \sum_{j=1}^{n} A_{p,\bfx}(i,j) & \mathrm{if}\  l=i\\
        0 & \mathrm{if}\  l \neq i\\
    \end{array}
    \right.
\end{align}
Define the matrix $L_{p,\bfx}$ as
\begin{align}
    L_{p,\bfx} := D_{p,\bfx} - A_{p,\bfx}.
\end{align}
We note that these matrix expressions depend on $\bfx$ except the $p=2$ case.
Now,
\begin{align}
    (L_{p,\bfx} \bfx)_{i}
    &= D_{p,\bfx}(i,i) x_{i} - \sum_{j=1}^{n} A_{p,\bfx}(i,j)x_{j}\\
    &= \sum_{j=1}^{n} A_{p,\bfx}(i,j) x_{i} - \sum_{j=1}^{n} A_{p,\bfx}(i,j)x_{j}\\
    &= \sum_{j=1}^{n} A_{p,\bfx}(i,j) (x_{i} - x_{j})\\
    &= \sum_{j=1}^{n} a_{ij}|x_{i} - x_{i}|^{p-1} \sgn(x_{i} - x_{j})\\
    &= (\Delta_{p} \bfx)_{i}.
\end{align}
Thus, we can say that $L_{p,\bfx}$ is a matrix expression of the $p$-Laplacian, satisfying
\begin{align}
    L_{p,\bfx} \bfx = \Delta_{p}\bfx.
\end{align}
Then, by Prop.~\ref{prop:laplacianproperties}, we can prove that
\begin{align}
\bfx^{\top}L_{p,\bfx} \bfx = S_{G,p} (\bfx).
\end{align}
Now we turn to the optimization problem Eq.~\eqref{eq:sslp}.
By using the Lagrangian multiplier method, the optimal solution satisfies the following:
\begin{align}
    \label{eq:lagfunction}
    F(\bfx, \lambda) &:= (S_{G,p} (\bfx)) - \lambda (x_{i} - x_{j} -1)\\
    \label{eq:lagconditionpsi}
    \frac{\partial F}{\partial \bfx} &= pL_{p,\bfx} \bfx - \lambda (\bfe_{i} - \bfe_{j}) = 0\\
    \label{eq:lagconditionlambda}
    \frac{\partial F}{\partial \lambda} &= x_{i} - x_{j} -1 = 0.
\end{align}
From Eq.~\eqref{eq:lagconditionpsi}, we have
\begin{align}
    \label{eq:psiopt}
    \bfx^{\mathrm{*}_{ij}} = \frac{\lambda}{p} L_{p,\bfx^{\mathrm{*}_{ij}}}^{+} (\bfe_{i} - \bfe_{j}).
\end{align}
From Eq.~\eqref{eq:psiopt} and Eq.~\eqref{eq:lagconditionlambda}, we have
\begin{align}
    \label{eq:lambdawrittendown}
    \frac{\lambda}{p} \left((L_{p,\bfx^{\mathrm{*}_{ij}}}^{+}(i,i) - L_{p,\bfx^{\mathrm{*}_{ij}}}^{+}(i,j)) - (L_{p,\bfx^{\mathrm{*}_{ij}}}^{+}(j,i) - L_{p,\bfx^{\mathrm{*}_{ij}}}^{+}(j,j)) \right) = 1.
\end{align}
Following Eq.~\eqref{eq:psiopt}, we substitute $\lambda/p$ from Eq.~\eqref{eq:lambdawrittendown} into Eq.~\eqref{eq:psiopt}, and we have 
\begin{align}
\label{eq:optp}
    \bfx^{\mathrm{*}_{ij}} = \frac{L_{p,\bfx^\mathrm{*}_{ij}}^{+}}{L_{p,\bfx^{\mathrm{*}_{ij}}}^{+}(i,i) + L_{p,\bfx^{\mathrm{*}_{ij}}}^{+}(j,j) - 2L_{p,\bfx^{\mathrm{*}_{ij}}}^{+}(i,j)}(\bfe_{i} - \bfe_{j}).
\end{align}
Since $p$-resistance is an inverse of the energy, we obtain
\begin{align}
    r_{G,p} (i,j) &= (\bfx^{\mathrm{*}_{ij}\top} L_{p,\bfx^{\mathrm{*}_{ij}}}^{+} \bfx^{\mathrm{*}_{ij}})^{-1}\\
    &= L_{p,\bfx^{\mathrm{*}_{ij}}}^{+}(i,i) + L_{p,\bfx^{\mathrm{*}_{ij}}}^{+}(j,j) - 2L_{p,\bfx_{\mathrm{*}_{ij}}}^{+}(i,j) \\
    &= (\bfe_{i} - \bfe_{j})^{\top}L_{p,\bfx^{\mathrm{*}_{ij}}}^{+}(\bfe_{i} - \bfe_{j})
\end{align}

The rest of the proof is same as the original proof in Thm.~6 in~\citep{alamgir2011phase}.
The trick is that we do not have to the exact form of $L_{p,\bfx^{\mathrm{*}_{ij}}}$.
Only this expression is enough to prove Theorem~\ref{thm:alamgirp}.

\subsection{Original Context of Theorem~\ref{thm:alamgirp}}
\label{sec:original}
Originally in Sec.~5~\cite{alamgir2011phase}, Thm.~\ref{thm:alamgirp} when $p=2$ has a different interpretation.
~\citet{nadler2009semi} proves that the semi-supervised learning problem of $p=2$ case is meaningless if the number of vertices are infinite. 
Thm.~\ref{thm:alamgirp} for $p=2$ supports this claim in~\cite{nadler2009semi} for two-pole semi-supervised leaning problem for the following way.
Since the equivalent 2-resistance is known to converge to a meaningless function,
the solution of the semi-supervised problem is equivalently characterized by this meaningless function. 
Thus, the semi-supervised learning does not make sense, if the number of the vertices are large.
If the conjecture for $p>1$ case were proven, Thm.~\ref{thm:alamgirp} can be interpreted that for some range of $p>1$ two-pole semi-supervised learning problem is not meaningless, since the equivalent $p$-resistance is shown not to converge to a meaningless one. 
Later year, independent of $p$-resistance, the statement ``for some range of $p>1$ semi-supervised learning problem is not meaningles'' is proven by~\cite{slepcev2019analysis}. 
Thm.~\ref{thm:alamgirp} now supports~\cite{slepcev2019analysis} from a $p$-resistance view.

\subsection{Remark on the Existing Claims on Theorem~\ref{thm:alamgirp}}

Finally, we discuss several existing claims on this theorem.
First, we need to mention a small \textit{fixable} mistake in the original proof in~\cite{alamgir2011phase} for the $p=2$ case.
The original proof assumes that the solution to the semi-supervised learning Eq.~\eqref{eq:sslp} when $p=2$ is that
\begin{align}
    \bfx^{\mathrm{*}_{ij}} = L^{+} (\bfe_{i} - \bfe_{j}).
\end{align}
However, this is not true since this does not satisfy the constraint
\begin{align}
    x^{\mathrm{*}_{ij}}_{i} - x^{\mathrm{*}_{ij}}_{j} = (\bfe_{i} - \bfe_{j})^{\top} L^{+} (\bfe_{i} - \bfe_{j}) \neq 1.
\end{align}
Instead, the solution is given as
\begin{align}
    \bfx^{\mathrm{*}_{ij}} = \frac{L^{+} (\bfe_{i} - \bfe_{j})}{(\bfe_{i} - \bfe_{j})^{\top} L^{+} (\bfe_{i} - \bfe_{j})}.
\end{align}
Note that this corresponds to Eq.~\eqref{eq:optp}.
However, this does not affected the rest of the proof, since the proof exploits only $\bfx^{\mathrm{*}_{ij}} = \rho L^{+} (\bfe_{i} - \bfe_{j})$ for $\rho \in \R$, and $\rho$ does not matter.
Thus, the validity of the original claim still remains.

\begin{figure*}[t]
\begin{center}
\includegraphics[width=.2\hsize,clip]{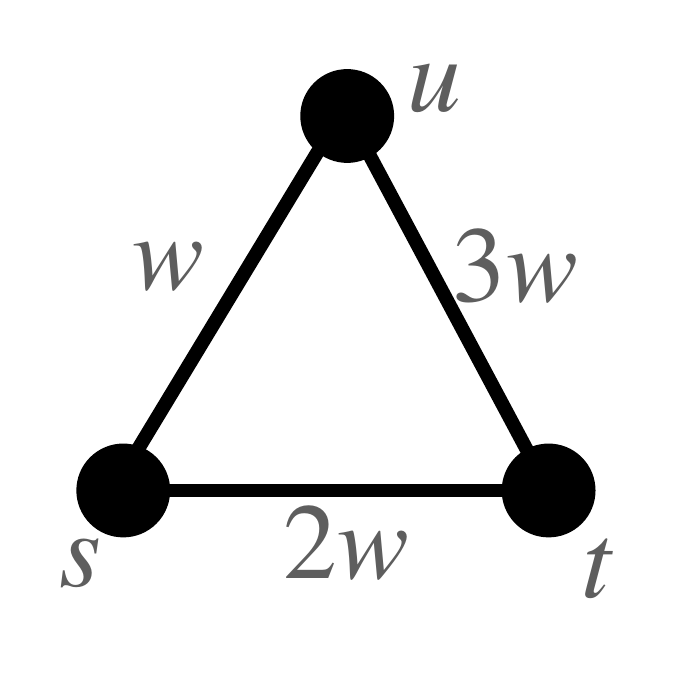}
\caption{The example discussed in~\cite{bridle2013p}.}
\label{fig:counterexample}
\end{center}
\end{figure*}

Next, since Thm.~\ref{thm:alamgirp} resolves the open problem in~\cite{alamgir2011phase}, there is an existing discussion on if this statement is true or not. 
The work~\cite{bridle2013p} claims that there is a counterexample to Thm.~\ref{thm:alamgirp} in the general $p$ case. 
In the following, we argue that the discussion on the example in~\cite{bridle2013p} does not work as a counterexample.

The ``counterexample'' given in~\cite{bridle2013p} is based on the example shown as Fig.~\ref{fig:counterexample}.
However, unfortunately, we believe that there is invalidity in the discussion on this example.
Firstly, we recall that
\begin{align}
\label{eq:notequal1}
    \min_{\bfx}\{S_{G,p}(\bfx)\ \subjectto\  x_{s} -x_{t} = 1\}  \neq \min_{\bfx} S_{G,p}(\bfx),
\end{align}
since $\min_{\bfx} S_{G,p}(\bfx) = 0$ when $\bfx = c\mathbf{1}$, $\forall c \in \R$, while $c\mathbf{1}$ does not satisfy the constraint of the left hand side.
However, the work~\cite{bridle2013p} assumes the equality of Eq.~\eqref{eq:notequal1}, see the the first equality at the top of the left column in p.3 of~\cite{bridle2013p}.
Moreover, we note that
\begin{align}
    \frac{\partial S_{G,p}(\bfx)}{\partial x_{u}} 
    &= \frac{\partial}{\partial x_{u}} \left( w|x_{s} - x_{u}|^{p} +  2w|x_{u} - x_{v}|^{p} + 3w|x_{s} - x_{t}|^{p}\right)\\
\label{eq:righteqaul}
    &= p \left( w|x_{s} - x_{u}|^{p-1} +  2w|x_{u} - x_{v}|^{p-1} \right),
\end{align}
and therefore
\begin{align}
\label{eq:notequal2}
    \frac{\partial S_{G,p}(\bfx)}{\partial x_{u}} \neq p \left( - w|x_{s} - x_{u}|^{p-1} +  2w|x_{u} - x_{v}|^{p-1} \right),
\end{align}
where the difference between Eq.~\eqref{eq:righteqaul} and Eq.~\eqref{eq:notequal2} is the sign of the term $w|x_{s} - x_{u}|^{p-1}$.
\footnote{
There is a slight difference between the definition of the $p$-resistance between ours and ~\cite{bridle2013p}. 
We follow the definition of~\cite{herbster2009predicting} and~\cite{bridle2013p} follows the definition of~\cite{alamgir2011phase}. 
However, these two have almost same properties. 
Moreover, while we write the equations in our form, this difference does not affect the discussion here. 
For more details of the difference, see \S 6.1 in~\cite{alamgir2011phase} or Sec.~\ref{sec:detailsof}.}
However, the work~\cite{bridle2013p} assumes the equality of Eq.~\eqref{eq:notequal2}, see the the third equality at the top of the left column in p.3 of~\cite{bridle2013p}.
In~\cite{bridle2013p}, these invalid equality assumptions of Eq.~\eqref{eq:notequal1} and Eq.~\eqref{eq:notequal2} derive the fundamental relationship in order to bring a counterexample.
The rest of the analysis in~\cite{bridle2013p} is carried with this relationship.
Due to this invalidity, we believe that there are serious flaws in the claim that the example Fig.~\ref{fig:counterexample} leads to a counterexample to Thm.~\ref{thm:alamgirp}.
Hence, we claim that Thm.~\ref{thm:alamgirp} holds with the proof in this section.

\section{Details of the Preliminary Experiments}
\label{sec:detailsof}

\begin{table}[t]
\centering
\caption{Dataset Summary. Since Hopkins 155 contains 155 different videos, we report the sum of the data points and sum of the dimensions of videos. Also, Hopkins 155 dataset contains 120 2-class datasets and 35 3-class datasets.}
\begin{tabular}{c|ccccc}
\toprule
            & ionosphere & hop 155 2cls & iris & wine & hop 155 3cls \\
\midrule
\# of class & 2         & 2            & 3    & 3    & 3            \\
size        & 351       & 31981        & 150  & 178  & 13983        \\
dimension   & 34        & 3542         & 4    & 13   & 999         
\end{tabular}
\label{tab:datasetsummary}
\end{table}

\begin{table}[t]
\centering
\caption{Computational Time for the Experiment (unit:sec).
Here we use E notation, e.g., E-6$=10^{-6}$ or E1 =$10^{1}$.
For methods we use the same abbriviation as Table~\ref{tab:res}.
Since ``Rec-bi $p$'' is a deterministic method, we only report time. 
Also, since Hop contains multiple datasets, we only show the average. }
\label{tab:comptime}
\begin{tabular}{c|c|cc|ccc}
\toprule
     &                                                              & \multicolumn{2}{c|}{2 clsss}            & \multicolumn{3}{c}{multi-class}                                \\
Type & Method                                                       & ionosphere              & Hop 2~cls & iris                  & wine                  & Hop 3~cls \\ \midrule
ER   & $k$-med (a) $p$                                              & 1.37E1 $\pm$ 0.01E1    & 1.21E1         & 4.93E0 $\pm$ 0.01E0   & 1.30E-1 $\pm$ 0.13E-1 & 1.78E1         \\
ER   & $k$-med $p=2$                                                & 3.76E0 $\pm$ 0.01E0    & 1.01E1         & 1.36E0 $\pm$ 0.00E0   & 9.07E-2 $\pm$ 1.05E-2 & 1.34E1         \\
ER   & FF (a) $p$                                                   & 9.71E-2 $\pm$ 0.32E-1  & 4.88E-1        & 5.45E-1 $\pm$ 0.33E-1 & 4.72E-2 $\pm$ 0.18E-2 & 8.01E0         \\
ER   & FF $p=2$  & 8.71E-2 $\pm$ 0.12E-1  & 3.56E-1        & 4.21E-1 $\pm$ 0.12E-1 & 3.82E-2 $\pm$ 0.06E-2 & 6.45E0         \\
ER   & $p$-Flow  & 1.46E1 $\pm$ 0.01E1    & 1.33E1         & 5.21 E0 $\pm$ 0.02E0  & 1.60E-1 $\pm$ 0.15E-1 & 1.81E1         \\
ER   & ECT        & 1.01E-1 $\pm$  0.10E-1 & 8.12E-1        & 2.03E-2 $\pm$ 0.42E-2 & 2.05E-2 $\pm$ 0.49E-2 & 9.26E-1        \\
SC   & Rec-bi $p$             & 1.18E-1                & 8.11E-1        & 5.35E-1               & 9.07E-2               & 1.01E0         \\
SC   & SC $p$-orth & 8.60E-2 $\pm$ 0.01E-2  & 6.31E-1        & 4.78E-1 $\pm$ 1.65E-1 & 3.02E-2 $\pm$ 0.57E-2 & 8.10E-1        \\
SC   & SC $p=2$                                                     & 1.60E-2 $\pm$ 0.01E-2  & 3.43E-2        & 1.28E-1 $\pm$ 0.00E-1 & 8.78E-3 $\pm$ 0.00E-3 & 6.12E-2       
\end{tabular}
\end{table}

\begin{figure*}[t]
\begin{center}
\subfigure[ion]{%
\includegraphics[width=.32\hsize,clip]{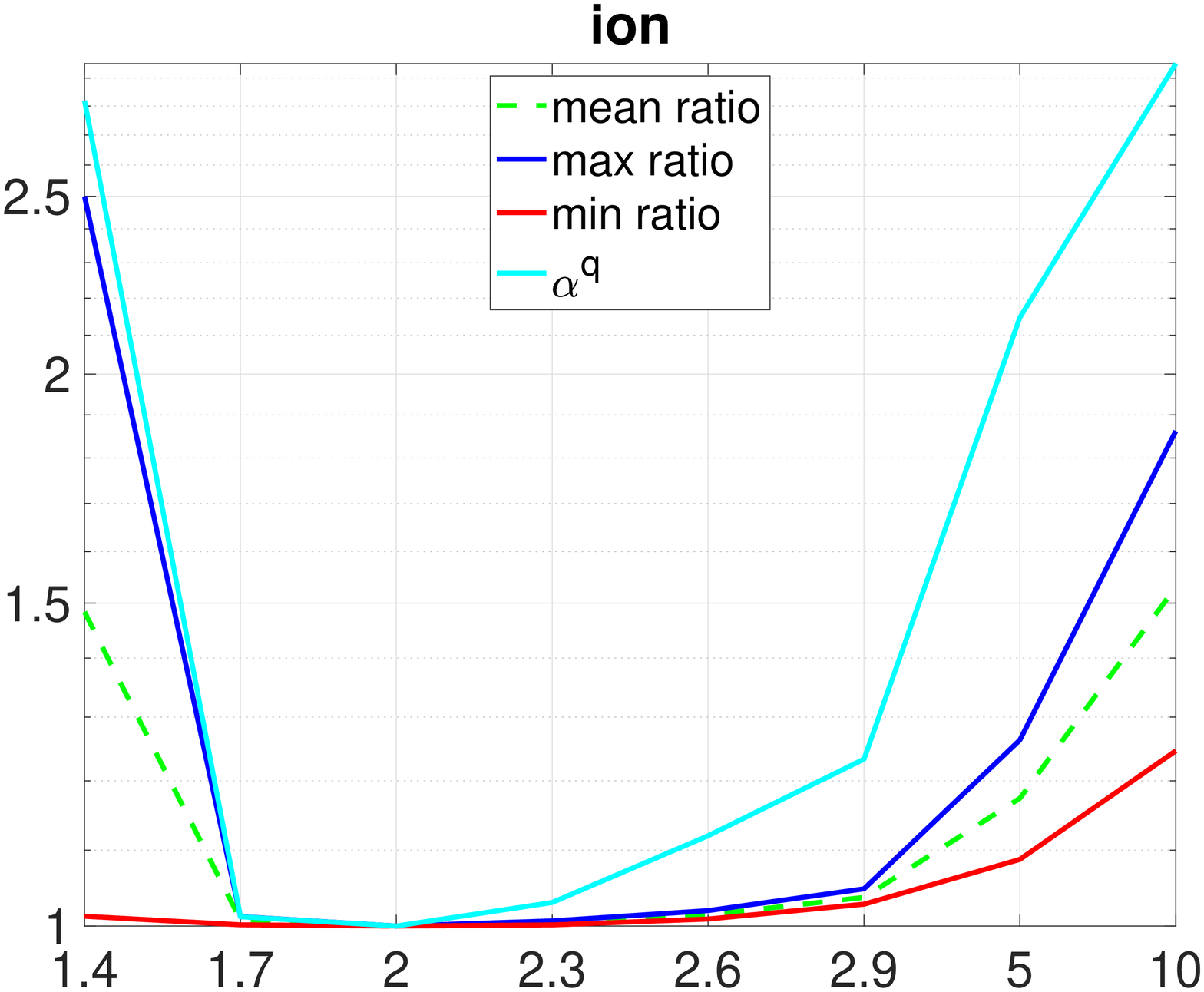}\label{fig:ioncomp}}
~\subfigure[iris]{%
\includegraphics[width=.32\hsize,clip]{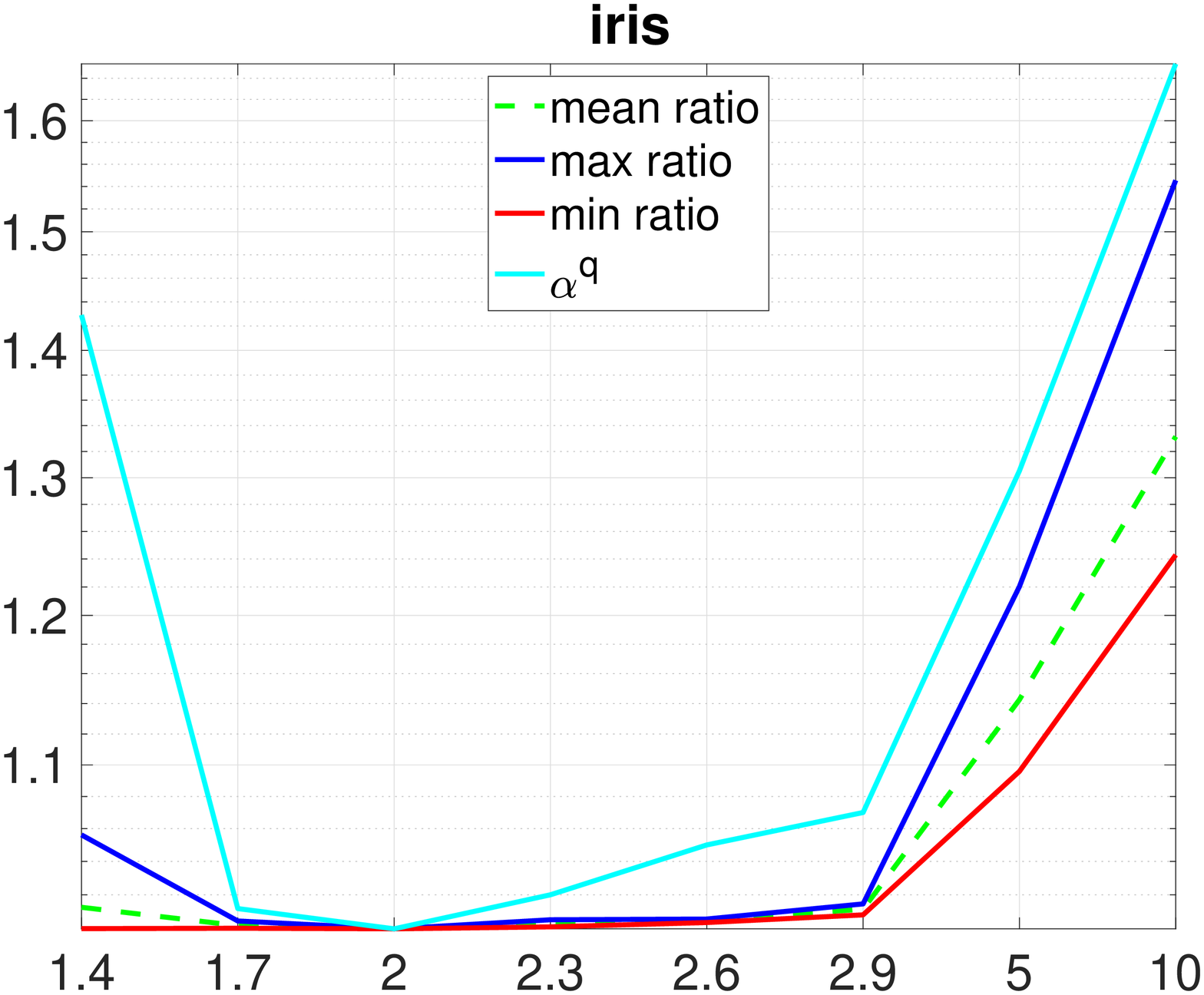}\label{fig:iriscomp}}
~\subfigure[wine]{%
\includegraphics[width=.32\hsize,clip]{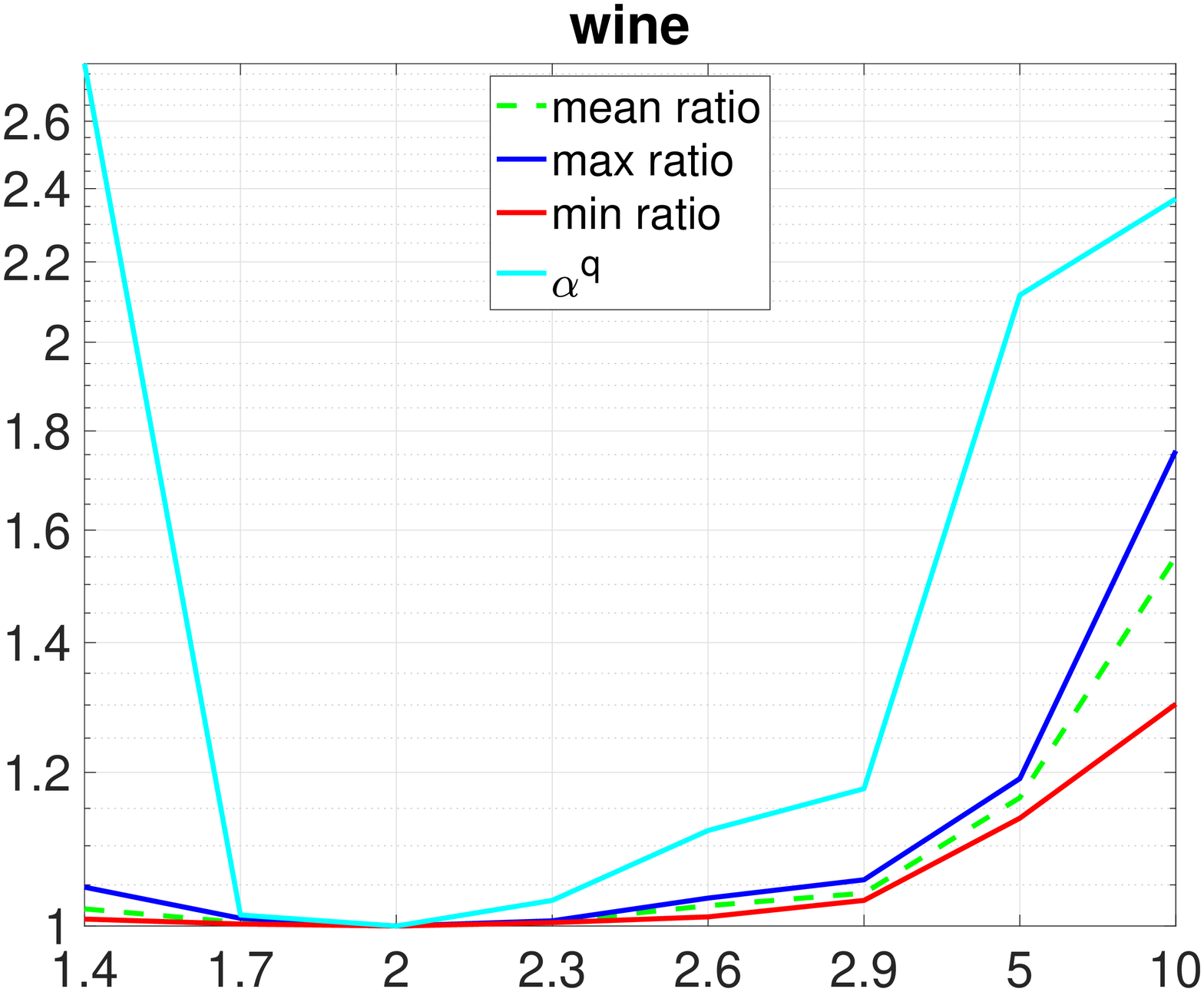}\label{fig:winecomp}}
\caption{
The ratio of the approximated value of $p$-resistance to the exact $p$-resistance, i.e., 
$\|L^{+}\bfe_{i}-L^{+}\bfe_{j}\|_{G,q}^{q}/r_{G,p}^{1/(p-1)}(i,j)$. 
Also, the factor of the bound $\alpha_{G,p}^q$.
}
\label{fig:compratio}
\end{center}
\end{figure*}

This section discusses missing details of preliminary experiments and additional experimental results.

\subsection{Details of Preliminary Experiments}

To begin with, we further explain the details of the experimental setting.
We report the size of the dataset in Table.~\ref{tab:datasetsummary}.
Our experiment was conducted on Mac Studio with M1 Max Processor and 32GiB RAM. 
Also, we use an Intel binary Matlab translated by Rosetta, which is a standard use in MacOS with Apple Silicon environment.
We need to note that for the comparison method Rec-bi~\cite{pgraph}, originally, the algorithm is defined for $p\leq 2$.
Thus, we apply the same technique for $p\leq3$.
In~\cite{pgraph}, in order to avoid the conversion to too close or too far local optimum, at the step $t$~\cite{pgraph} minimizes Eq.~\eqref{eq:rayleigh_2} via the gradient descent method using the initial condition as the obtained eigenvector of the previous step $p_{t} = 0.9p_{t-1}$.
If we increase the $p$, we use the same technique; $p_{t} = p_{t-1}/0.9$ until $p$ reaches 5. Beyond 5, we use $p_{t} = 2p_{t-1}$.
When $p=2$, we know that 2-resistance is further computed as
\begin{align}
    r_{G,2}(i,j) = \|L^{+}\bfe_{i} - L^{+}\bfe_{j}\|_{G,2}^{2} = \|(L^{+})^{1/2}\bfe_{i} - {(L^{+})^{1/2}}\bfe_{j}\|^{2}.
\end{align}
The graph $p$-seminorm is the size $m$ norm, while the latter is based on the size $n$ norm.
However, we use graph 2-seminorm even for the $p=2$ case.
The reason is that we needed to be consistent in the experiment since we observed numerical round-off errors in the other methods.
On the other hand, ECT~\cite{yen2005clustering} uses the square of 2-resistance. 
For this method, we use $\|(L^{+})^{1/2}\bfe_{i} - {(L^{+})^{1/2}}\bfe_{j}\|^{2}$ since the original paper~\cite{yen2005clustering} uses this.

Next, we discuss the computational time for the experiment. 
Due to its significant computational time, we parallelized the distance computation for the exact method, while we did not use such a technique for the others. 
Thus, we first compare the approximation of the $p$-reistance and the exact $p$-resistance.
Next, we compare the computational time among the methods except for the exact methods.

\subsection{Computational Times of Experiments}
\label{sec:computational}
In this section we discuss computational times of the experiment. 

In Table~\ref{tab:comptimeforapproxvsexact}, we compare the approximation by Eq.~\eqref{eq:approximation1/p-1} and the exact computation of $p$-resistance by naively optimizing Eq.~\eqref{eq:penergyandpresistance} by the gradient descent.
For ionosphere, iris, and wine, we made a graph for the best performing parameters in Table.~\ref{tab:res}.

In Table~\ref{tab:comptime}, we compare the computing time for the the best performing parameters in Table.~\ref{tab:res}.
We can see that ours are slower than spectral clustering methods. 
This slowness is because ours takes $O(mn^2)$ while spectral clustering methods using $p$-Laplacian are $O(n^3)$-based convergence methods.
Looking at the computational time for ECT, the time is similar to the spectral methods since ECT is also the $O(n^{3})$ method.
The computing time on ECT further motivates this future direction.

\subsection{Comparison of the Values of Approximated and Exact $p$-Resistance}
\label{sec:comparsionofapproximated}

This section discusses the comparison of approximated and exact $p$-resistance.
This experiment shows that the value is approximation is tighter than Prop.~\ref{prop:generalbound}.

This preliminary experiment aims to evaluate the quality of the approximation comparing to Thm.~\ref{thm:presistance}. 
To do so, we would like to compute the ratio of approximation to the exact $p$-resistance as $\|L^{+}\mathbf{e}_{i} - L^{+}\mathbf{e}_{j}\|_{G,q}^{q}/r_{G,p}^{1/(p-1)}(i,j)$.
Using Thm.~\ref{thm:presistance}, this ratio can be theoretically evaluated as
\begin{align}
\label{eq:boundtransform}
    1 \leq \frac{\|L^{+}\mathbf{e}_{i} - L^{+}\mathbf{e}_{j}\|_{G,q}^{q}}{r_{G,p}^{1/(p-1)}(i,j)} \leq \alpha_{G,p}^q,  \alpha_{G,p} := \vvvert W^{1/p}CC^{+}W^{-1/p} \vvvert_{p}.
\end{align}
This experiment also aims to evaluate this inequality.
We numerically computed the approximated $p$-resistance, exact $p$-resistance, and $\alpha_{G,p}$.
To compute $\alpha_{G,p}$, we use the same algorithm for the Table.~\ref{fig:cc}, which is discussed in Sec.~\ref{sec:additional}
To compute the approximated and exact $p$-resistance, we conducted with the following procedure.
To create a graph, we used $\mu=0.1$, in order to make $k$-nn graph. 
This means that we use $k = \lfloor 0.1n \rfloor$.
To make the comparison simple, we use an unweighted graph.  
This is because if we incorporate weights, it is not trivial how $\alpha_{G,p}$ behaves, and thus, the results might not be a consistent of a comparison among different $p$s.
The rest of the analysis was carried in the same procedure as Table~\ref{tab:comptime}.

The result is summarized in Fig.~\ref{fig:compratio}.
We can see that the all the ratios are in the bound of Thm.~\ref{thm:presistance}.
We also remark that using Prop.~\ref{prop:generalbound} we have the bound as
\begin{align}
    \alpha_{G,p}^{q} \leq m^{|q/2-1|},
\end{align}
all of the plots of $\|L^{+}\mathbf{e}_{i} - L^{+}\mathbf{e}_{j}\|_{G,q}^{q}/r_{G,p}^{1/(p-1)}(i,j)$ in Fig.~\ref{fig:compratio} is obviously far lower than this bound.
For this result we observe the looser bound for larger $p$, since $\vvvert CC^{+} \vvvert_{p} \leq \vvvert CC^{+} \vvvert_{\infty}$ assuming an unweighted graph.
By incorporating the weight, we might observe $\alpha_{G,p}$ differently, since we do not know which is larger  $\vvvert W^{1/p} CC^{+} W^{-1/p} \vvvert_{p} $ and $ \vvvert W^{1/\infty} CC^{+} W^{-1/\infty} \vvvert_{\infty}$.
Further, we have
\begin{align}
\label{eq:furthernorm}
    \alpha_{G,p}^{q} = \vvvert W^{1/p} CC^{+} W^{-1/p} \vvvert_{p}^{q} \leq \left(\frac{w_{\max}}{w_{\min}} \right)^{q/p=q-1} \vvvert CC^{+} \vvvert_{p}^{q}.
\end{align}
Seeing the current derived bound Eq.~\eqref{eq:furthernorm}, the bound may be looser if we involve the weights and $p$ is small and hence $q$ is large.
A tighter bound particularly for smaller $p$ is a possible future direction, but this might be a lower priority due to the low performance at the smaller $p$.
The reason is that, for small $p$, it is known that $r_{G,p}(i,j)$ converges to a meaningless function~\cite{alamgir2011phase,slepcev2019analysis} under certain graph building conditions. 
Also, possibly due to this, our method performs better for larger $p$.

We remark on the large $p$ observation. 
In the main text, we argue that ``these correspond to the existing theoretical insight; $p$-resistance with large $p$ becomes meaningful function while 2-resistance is not~\cite{alamgir2011phase,slepcev2019analysis}.'' 
At a first glance,~\cite{alamgir2011phase} seems to argue that ``smaller $p$ works''. 
However, this is due to the difference of the definition.
While we follow the $p$-resistance definition in~\cite{herbster2009predicting}, in~\cite{alamgir2011phase} their $p$-resistance $r_{G,p}^{A}$ is given as
\begin{align}
    r_{G,p}^{A}(i,j) &= \frac{1}{\min_{\bfx} \sum_{ij} a_{ij}^{1/(p-1)}|x_{i} - x_{j}|^{p/(p-1)} \ \subjectto \ x_{i} - x_{j} = 1}\\
    &=\frac{1}{\min_{\bfx} \sum_{ij} a_{ij}^{q-1}|x_{i} - x_{j}|^{q} \ \subjectto \ x_{i} - x_{j} = 1}.
\end{align}
Hence, in~\cite{alamgir2011phase} the parameter $p$ works in the opposite way; if we mean large, $p$~\cite{alamgir2011phase} means smaller $p$ and vice-versa. 
Despite this slight change of the definition, $p$-resistance in~\cite{herbster2009predicting} and~\cite{alamgir2011phase} shares almost the same properties.
More discussion can be seen in \S 6.1 in~\cite{alamgir2011phase}.

\subsection{Code for the Experiments}
We leave a remark on our code in the supplemental material.
Although we include an implementation of Alg.~\ref{algo:clusteringkmedoids}, at this stage we include a part of the whole codebase due to the copyright of the library reasons. 
In the final version, we plan to reduce the blockers to publish the code as much as possible. 
Moreover, we plan to publish our implementation at Github, an online codebase repository service.

\section{More Discussion on $\alpha_{G,p}$}
\label{sec:morediscussionalpha}

This section gives more observations on $\alpha_{G,p}$. 
In practice, we want to know how close to the exact value and how far from this upper bound the value of $ \vvvert W^{1/p}CC^{+}W^{-1/p} \vvvert_{p}$ is.
In the following, we argue that in the general case $\alpha_{G,p}$ is far less than the bound given in Prop.~\ref{prop:generalbound}.

Before we get into the detail, we give a brief overview of an interpretation of $\alpha_{G,p}$.
From the definition of $\bfz$, $\bfz$ is a mapping of $f_{q/p}(C\bfy)/\|\bfy\|_{G,q}^{q}$ from $\R^{m} \to \Image(C)$.
Comparing the equality condition Eq.~\eqref{eq:equalz}, we observe that if $f_{q/p}(C\bfy) \in \Image(C)$, we obtain the tightest bound since $\|\bfz\|_{G,p} = \|\bfy\|_{G,q}^{-1}$.
By looking at this, we observe that the $\alpha_{G,p}$ is the worst possible ``overflow'' of the mapping from $\Image(C)$ from $\R^{m}$, in a sense of the weighted $p$-norm.

\subsection{Condition Number Point of View}
\label{sec:boundingalpha}

To prove the bound of Prop.~\ref{prop:generalbound}, we only use $\vvvert MM^{+}\vvvert_{2} = 1$ and Lemma~\ref{lemma:higham}, which holds for \textit{any} matrix $M$.
Hence, we can say that this is the ``worst'' bound and we expect a far lower value of $\vvvert C C^{+} \vvvert_{p} $ for a general incidence matrix of graph.
To gain some qualitative observation on how close between the exact and approximation, we further decompose $\alpha_{G,p}$.
By using the submultiplicity and Lemma~\ref{lemma:higham2}, 
\begin{align}
\alpha_{G,p} = \vvvert W^{1/p}CC^{+}W^{-1/p} \vvvert_{p} 
\leq \vvvert W^{1/p}\vvvert_{p} \vvvert CC^{+} \vvvert_{p} \vvvert W^{-1/p} \vvvert_{p}
&\leq \vvvert C C^{+} \vvvert_{p}  w_{\max}^{1/p}/w_{\min}^{1/p}\\
\label{eq:condnumber}
&\leq \vvvert C \vvvert_{p}\vvvert C^{+} \vvvert_{p}  w_{\max}^{1/p}/w_{\min}^{1/p}
\end{align}
where $w_{\max}$$:=$$\max_{\ell} w_{\ell}$ and $w_{\min} := \min_{\ell} w_{\ell}$.
In numerical analysis, the term $\vvvert C \vvvert_{p} \vvvert C^{+} \vvvert_{p}$ is called as a \textit{condition number} of the matrix $C$~\cite{saad2003iterative}.
A condition number is related to the ``difficulty'' to numerically solve the linear equation $C\bfx = \bfy$.
The larger the condition number gets, the more difficult to solve the linear equation. 
The linear equation is difficult to solve if we can make one or more pairs of column or row of $C$ close to parallel by elementary operations.
However, by construction of incidence matrix, no pairs of column or row of the incidence matrix are close to parallel.
Thus, we expect that the condition number of $C$ will not be large, and hence we expect a smaller value of $\alpha_{G,p}$ than Prop.~\ref{prop:generalbound} in general.
For the specific graphs, we theoretically show this in the next section.

\subsection{Bound of $\alpha_{G,p}$ for Some Specific Graphs} 
\label{sec:specificgraphs}
In this section we give a constant bound of $\alpha_{G,p}$ for some specific graphs. 
By this we can independently bound $\alpha_{G,p}$ of $m$.

Now, for the specific cases, we have the following.
\begin{proposition}
\label{prop:specific}
If a graph is complete or cyclic, then $\vvvert CC^{+} \vvvert_{p} \leq 4$ and hence $\alpha_{G,p} \leq 4w_{\max}^{1/p}/w_{\min}^{1/p}$.
\end{proposition}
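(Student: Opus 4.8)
The plan is to bound the maximum absolute row/column sum of the symmetric projector $P := CC^{+}$ and feed it into the decomposition already derived in Sec.~\ref{sec:boundingalpha}. Since $CC^{+}$ is the orthogonal projector onto $\Image(C)$, it is symmetric, so Eq.~\eqref{eq:lemmahigham2symmetric} gives $\vvvert CC^{+}\vvvert_{p}\leq\vvvert CC^{+}\vvvert_{\infty}=\max_{\ell}\sum_{\ell'}|P_{\ell\ell'}|$. Thus it suffices to bound this row sum by $4$; combining with the intermediate inequality $\alpha_{G,p}\leq\vvvert CC^{+}\vvvert_{p}\,w_{\max}^{1/p}/w_{\min}^{1/p}$ established just before Eq.~\eqref{eq:condnumber} then yields the $\alpha_{G,p}$ claim. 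To compute the entries I would write $P = C L_0^{+} C^{\top}$ with $L_0 := C^{\top}C$ the unit-weight Laplacian, so that for edges $\ell=(a,b)$ and $\ell'=(c,d)$ one has $P_{\ell\ell'}=(\bfe_{a}-\bfe_{b})^{\top}L_0^{+}(\bfe_{c}-\bfe_{d})$; in particular the diagonal entries are exactly the unit-weight effective resistances $r_{G,2}$ across the edges.

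For the complete graph $K_n$ I would use the explicit pseudoinverse. With $J:=\bone\bone^{\top}$ we have $L_0 = nI - J$, whose only nonzero eigenvalue is $n$ on $\bone^{\perp}$, giving $L_0^{+}=\tfrac1n\bigl(I-\tfrac1n J\bigr)$. Since $J(\bfe_{c}-\bfe_{d})=0$, the entries reduce to $P_{\ell\ell'}=\tfrac1n(\bfe_{a}-\bfe_{b})^{\top}(\bfe_{c}-\bfe_{d})$, which equals $\tfrac2n$ on the diagonal, $\pm\tfrac1n$ for two edges sharing exactly one vertex, and $0$ for vertex-disjoint edges. As an edge $\ell=(a,b)$ meets exactly $2(n-2)$ other edges, the row sum is $\tfrac2n+2(n-2)\tfrac1n=2-\tfrac2n$.

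For the cycle $C_n$ I would instead work with the complementary projector $I-P$. Since $C_n$ is connected with $m=n$ and $\rank(C)=n-1$, the cycle space $\Kernel(C^{\top})$ is one-dimensional and spanned by the around-the-cycle flow $\bff$ with entries in $\{+1,-1\}$ and $\|\bff\|_{2}^{2}=n$; hence $P = I-\tfrac1n\bff\bff^{\top}$. This gives diagonal entries $1-\tfrac1n$ (matching $r_{G,2}=\tfrac{n-1}{n}$ across each edge) and off-diagonal entries $\mp\tfrac1n$, so each row sum is $(1-\tfrac1n)+(n-1)\tfrac1n=2-\tfrac2n$.

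In both cases $\vvvert CC^{+}\vvvert_{p}\leq\vvvert CC^{+}\vvvert_{\infty}=2-\tfrac2n<2\leq4$, which together with the decomposition above finishes the proof; in fact the stated constant $4$ is comfortably loose. The main work, and the only place requiring graph-specific input, is obtaining the explicit form of the projector: the spectral computation of $L_0^{+}$ for $K_n$ and the one-dimensionality of the cycle space for $C_n$. The only minor obstacle is careful bookkeeping of the off-diagonal signs and the count of incident edges, but since solely absolute values enter the row sum the signs are immaterial to the final bound.
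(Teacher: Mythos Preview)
Your argument is correct. The identity $CC^{+}=CL_{0}^{+}C^{\top}$ follows from the general formula $C^{+}=(C^{\top}C)^{+}C^{\top}$, and your entrywise computations for both cases are accurate; in particular the row sum $2-2/n$ you obtain in each case is sharper than the stated constant $4$.

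The paper takes a somewhat different route. For the complete graph it first proves the identity $C^{+}=\tfrac{1}{n}C^{\top}$ directly (via $L_{0}C^{\top}=nC^{\top}$) and then bounds $\vvvert CC^{+}\vvvert_{\infty}$ by submultiplicativity, $\vvvert C\vvvert_{\infty}\vvvert C^{+}\vvvert_{\infty}$, rather than computing the row sums of $CC^{+}$ itself; your direct computation via $P_{\ell\ell'}=\tfrac{1}{n}(\bfe_{a}-\bfe_{b})^{\top}(\bfe_{c}-\bfe_{d})$ is effectively the same object (since $C^{+}=\tfrac{1}{n}C^{\top}$ makes $CC^{+}=\tfrac{1}{n}CC^{\top}$) but yields the row sum exactly. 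For the cycle, the paper works considerably harder: it constructs $C^{+}$ explicitly through cyclic-shift formulas for its rows, verifies $C^{+}C=I-\tfrac{1}{n}J$, and then multiplies out $CC^{+}$ entry by entry. Your complementary-projector argument, using that the cycle space $\Kernel(C^{\top})$ is one-dimensional and spanned by the signed edge vector $\bff$, bypasses all of this and lands immediately on $CC^{+}=I-\tfrac{1}{n}\bff\bff^{\top}$. The payoff of your approach is brevity and transparency; the paper's explicit $C^{+}$ is more information but is not needed for the bound on $\vvvert CC^{+}\vvvert_{p}$.
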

For these specific graphs, we can bound the $p$-resistance (Thm.~\ref{thm:presistance}) by a constant. 
We now divide the proof into the complete case and the cyclic case.
\subsubsection{Complete Case}
First, we obtain the pseudoinverse of $C$ of a complete graph.
\begin{lemma}
\label{lemma:pseudoinversecompletegraph}
For an incidence matrix $C'$ for a complete graph,
\begin{align}
    C^{'+} = \frac{1}{n}C^{'\top}
\end{align}
\end{lemma}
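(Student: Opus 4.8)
The plan is to verify directly that the candidate matrix $X := \tfrac{1}{n} C'^{\top}$ satisfies the four defining Moore--Penrose conditions, namely $C'XC' = C'$, $XC'X = X$, $(C'X)^{\top} = C'X$ and $(XC')^{\top} = XC'$; by uniqueness of the pseudoinverse this forces $C'^{+} = X$. Everything reduces to two structural facts about the incidence matrix of the complete graph, which I would record first and then feed into the four conditions mechanically.

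The first fact is the general identity $C'\mathbf{1} = 0$: each row of an incidence matrix contains exactly one $+1$ and one $-1$, so every row sums to zero. The second fact is that $C'^{\top}C'$ equals the (unweighted) Laplacian of the complete graph. Forming $(\mathrm{row})^{\top}(\mathrm{row})$ for the row of edge $(a,b)$ and summing over all $\binom{n}{2}$ edges, the diagonal entry $(i,i)$ counts the degree $n-1$ of vertex $i$ while each off-diagonal entry receives $-1$ from its unique edge, giving
\begin{align}
\label{eq:CtopCcomplete}
C'^{\top}C' = (n-1)I - (\mathbf{1}\mathbf{1}^{\top} - I) = nI - \mathbf{1}\mathbf{1}^{\top}.
\end{align}

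With \eqref{eq:CtopCcomplete} in hand the four conditions follow by short computations. For the first, $C'XC' = \tfrac{1}{n}C'(C'^{\top}C') = \tfrac{1}{n}C'(nI - \mathbf{1}\mathbf{1}^{\top}) = C' - \tfrac{1}{n}(C'\mathbf{1})\mathbf{1}^{\top} = C'$, using $C'\mathbf{1}=0$. For the second, $XC'X = \tfrac{1}{n^{2}}(C'^{\top}C')C'^{\top} = \tfrac{1}{n^{2}}(nI - \mathbf{1}\mathbf{1}^{\top})C'^{\top} = \tfrac{1}{n}C'^{\top} - \tfrac{1}{n^{2}}\mathbf{1}(C'\mathbf{1})^{\top} = X$. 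For the two symmetry conditions, $C'X = \tfrac{1}{n}C'C'^{\top}$ is symmetric because $(C'C'^{\top})^{\top}=C'C'^{\top}$, and $XC' = \tfrac{1}{n}C'^{\top}C' = I - \tfrac{1}{n}\mathbf{1}\mathbf{1}^{\top}$ is symmetric because $\mathbf{1}\mathbf{1}^{\top}$ is.

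The argument has no genuinely hard step; the only place demanding care is the identification \eqref{eq:CtopCcomplete} of $C'^{\top}C'$ as $nI - \mathbf{1}\mathbf{1}^{\top}$ — getting the diagonal (degree $n-1$) and the off-diagonal ($-1$ per edge) bookkeeping right — together with the observation that the rank deficiency of $C'$, encoded by the kernel vector $\mathbf{1}$, is exactly what makes the $\mathbf{1}\mathbf{1}^{\top}$ terms vanish in the first two Penrose conditions. I would double-check the sign convention against the definition $c_{\ell i}=1$, $c_{\ell j}=-1$ for $i>j$, although these signs cancel inside $C'^{\top}C'$ and never affect the conclusion.
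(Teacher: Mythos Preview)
Your proof is correct and follows essentially the same route as the paper: both identify $C'^{\top}C' = nI - \mathbf{1}\mathbf{1}^{\top}$ (the Laplacian of $K_n$) and then verify the Penrose conditions for $X = \tfrac{1}{n}C'^{\top}$ by direct computation. If anything, your version is slightly more complete, since you check all four Moore--Penrose conditions whereas the paper only explicitly verifies the two multiplicative ones $C'XC' = C'$ and $XC'X = X$.
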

\begin{proof}

For a graph Laplacian $L$ of unweighted graph can be written as
\begin{align}
    L &= nI - \mathbf{1}^{\top}\mathbf{1},
      \end{align}
and thus
\begin{align}
     L_{ij} =  \left\{
        \begin{array}{ll}
            n-1&  \mathrm{if}\  i=j\\
            -1&   \mathrm{if}\ i \neq j
        \end{array}
     \right.
     .
\end{align}
Also, we know that
\begin{align}
    L = C^{'\top}C'.
\end{align}
Now we consider the the vector $\bfx_{ij} \in \R^{n}$ as
\begin{align}
    \bfx_{ij}^{\top} := \underbrace{(0,\ldots,0,\overbrace{1}^{i\mathrm{th\ element}},0,\ldots,0,\overbrace{-1}^{j\mathrm{th\ element}},0,\ldots,0)}_{\mathrm{size}\ n}.
\end{align}
Note that this $\bfx_{ij}$ is one row of the incidence matrix $C'$.
Now we get
\begin{align}
    (L\bfx_{ij})_{l} &= 
    \left\{
        \begin{array}{ll}
            (n-1)\times 1 + (-1)\times (-1) = n&  \mathrm{if}\  l=i\\
            1\times (-1) + (n-1)\times (-1) = -n&   \mathrm{if}\  l=j\\
            (-1) \times 1 + (-1)\times (-1) = 0  & \mathrm{otherwise} 
        \end{array}
     \right.\\
     &= n(L\bfx_{ij})_{l}.
\end{align}
Since $\bfx_{ij}$ is one column of the transpose of the incidence matrix $\mathbf{C}^{'\top}$,
\begin{align}
\label{eq:pseudoinversedef1}
    LC = C^{'\top}C'C^{'\top} = nC^{'\top} \iff \left(\frac{1}{n}C^{'\top}\right)C'\left(\frac{1}{n}C^{'\top}\right) = \frac{1}{n}C^{'\top}
\end{align}
Also,
\begin{align}
\label{eq:pseudoinversedef2}
    (LC)^{\top} = C^{'}C^{'\top}C^{'} = nC^{'} \iff C^{'}\left(\frac{1}{n}C^{'\top}\right)C^{'} = C^{'}
\end{align}
From Eq.~\eqref{eq:pseudoinversedef1} and Eq.~\eqref{eq:pseudoinversedef2}, the matrix $1/n C'$ satisfies the definition of $C^{+}$, which leads to the claim.
\end{proof}

Note that $\vvvert C C^{+}\vvvert_{1}=\vvvert C C^{+}\vvvert_{\infty}$ due to the symmetricity of $ C C^{+}$.
\begin{align}
    \vvvert CC^{+}\vvvert_{p} \leq \vvvert C C^{+}\vvvert_{\infty} \leq \vvvert C \vvvert_{\infty}\vvvert C^{+} \vvvert_{\infty} = 4\frac{n+1}{n}\leq 4.
\end{align}

\subsubsection{Cyclic Case}
In the cyclic graph, $m=n$, i.e., the number of vertices is equal to the number of edges. 
Thus, the incidence matrix $C$ is square.
However, in order to avoid confusion, in the following we use $m$ and $n$.
Now, we define the incidence matrix $C \in \R^{m \times n}$of the cyclic graph as
\begin{align}
    c_{i1} &= \left\{
    \begin{array}{cc}
        -1 &  \mathrm{when}\ i=1\\
        1 &   \mathrm{when}\ i=2 \\
        0 &   \mathrm{otherwise}
    \end{array}
    \right.\\
    c_{i2} &= \left\{
    \begin{array}{cc}
        -1 &  \mathrm{when}\ i=1\\
        1 &   \mathrm{when}\ i=n \\
        0 &   \mathrm{otherwise}
    \end{array}
    \right.\\
    c_{ij} &= \left\{
    \begin{array}{cc}
        -1 &  \mathrm{when}\ i=j-1\\
        1 &   \mathrm{when}\ i=j \\
        0 &   \mathrm{otherwise \ for\ } \ 
    \end{array}
    \right. \mathrm{for\ } j \geq 3.
\end{align}

Before we explore $C^{+}$, we introduce \textit{cyclic shift operator} of the vector.
Given the vector $\bfa$, the shift operator $(l)$ ``cyclic shifts'' the element, as
\begin{align}
    \bfa^{(l)} = (a_{n-l+1},a_{n-l+2},\ldots,a_{n},a_{1},\ldots,a_{n-l})^{\top}.
\end{align}
Thus, $\bfa^{(0)} = \bfa$.
Also, we define the reverse operator $\mathrm{rev}$ for a vector $\bfa$ as
\begin{align}
    \mathrm{rev}(\bfa) = (a_{n},a_{n-1},\ldots,a_{1}).
\end{align}
We also define the vector $\bfxi \in \R^{n}$ as
\begin{align}
    \bfxi = (1/2-1/2n,1/2 - 3/2n,\ldots,1/2 - (2i-1)/2n,\ldots,-1/2+1/n).
\end{align}

Now, we define a matrix $B$ as
\begin{align}
    B_{1\cdot} &= \bfxi^{(1)}\\
    B_{2\cdot} &= \mathrm{rev}(\bfxi^{(0)}) = \mathrm{rev}(\bfxi)\\
    B_{j\cdot} &= \bfxi^{(j-1)} \mathrm{for\ } j \geq 3,
\end{align}
where $B_{i\cdot}$ denotes $i$-th column of $B$.
We plot a heatmap of $C$ and $B$ for the illustrative purpose.

\begin{figure*}[t]
\begin{center}
\subfigure[$C$]{%
\includegraphics[width=.28\hsize,clip]{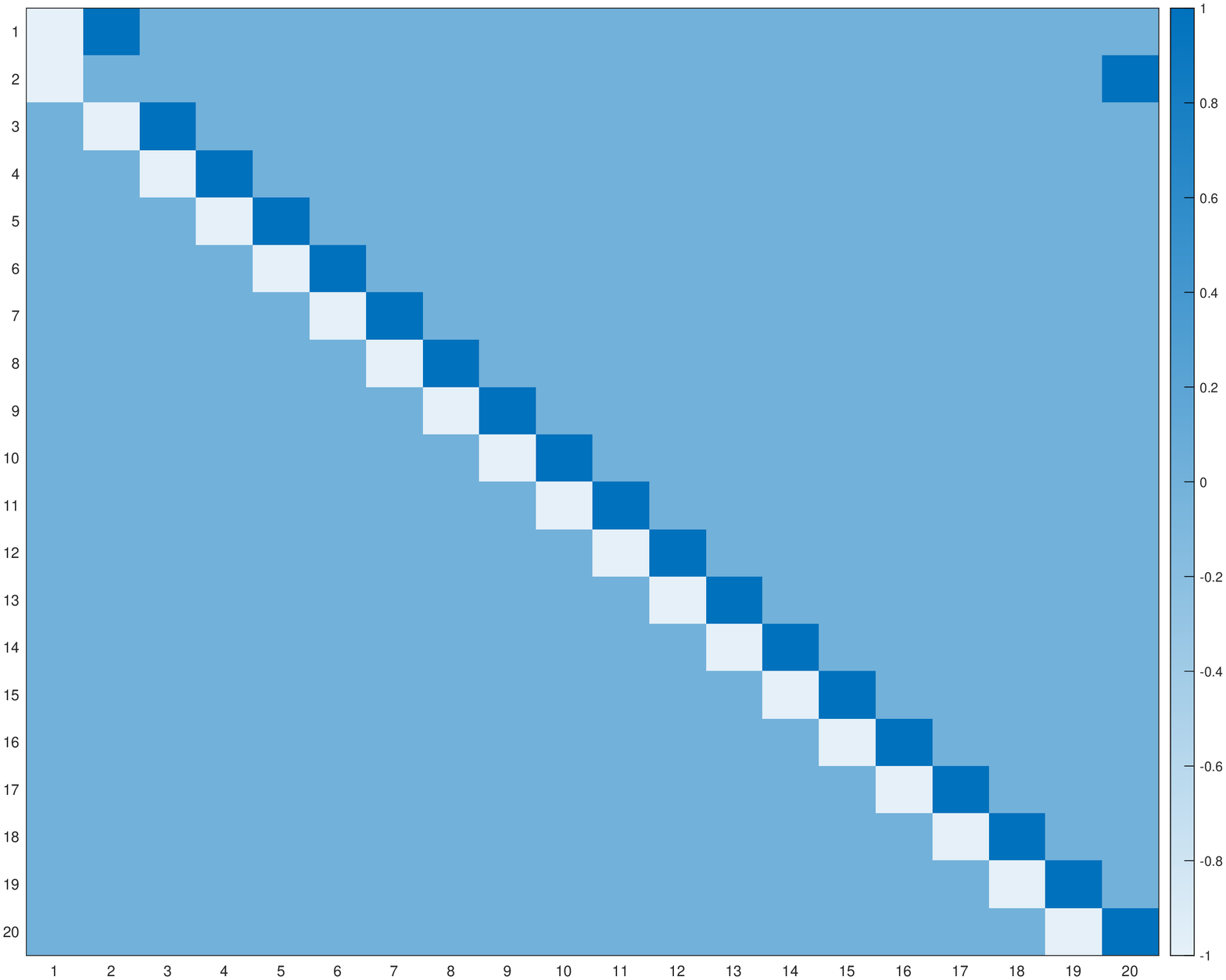}\label{fig:c}}
~\subfigure[$B=C^{+}$]{%
\includegraphics[width=.28\hsize,clip]{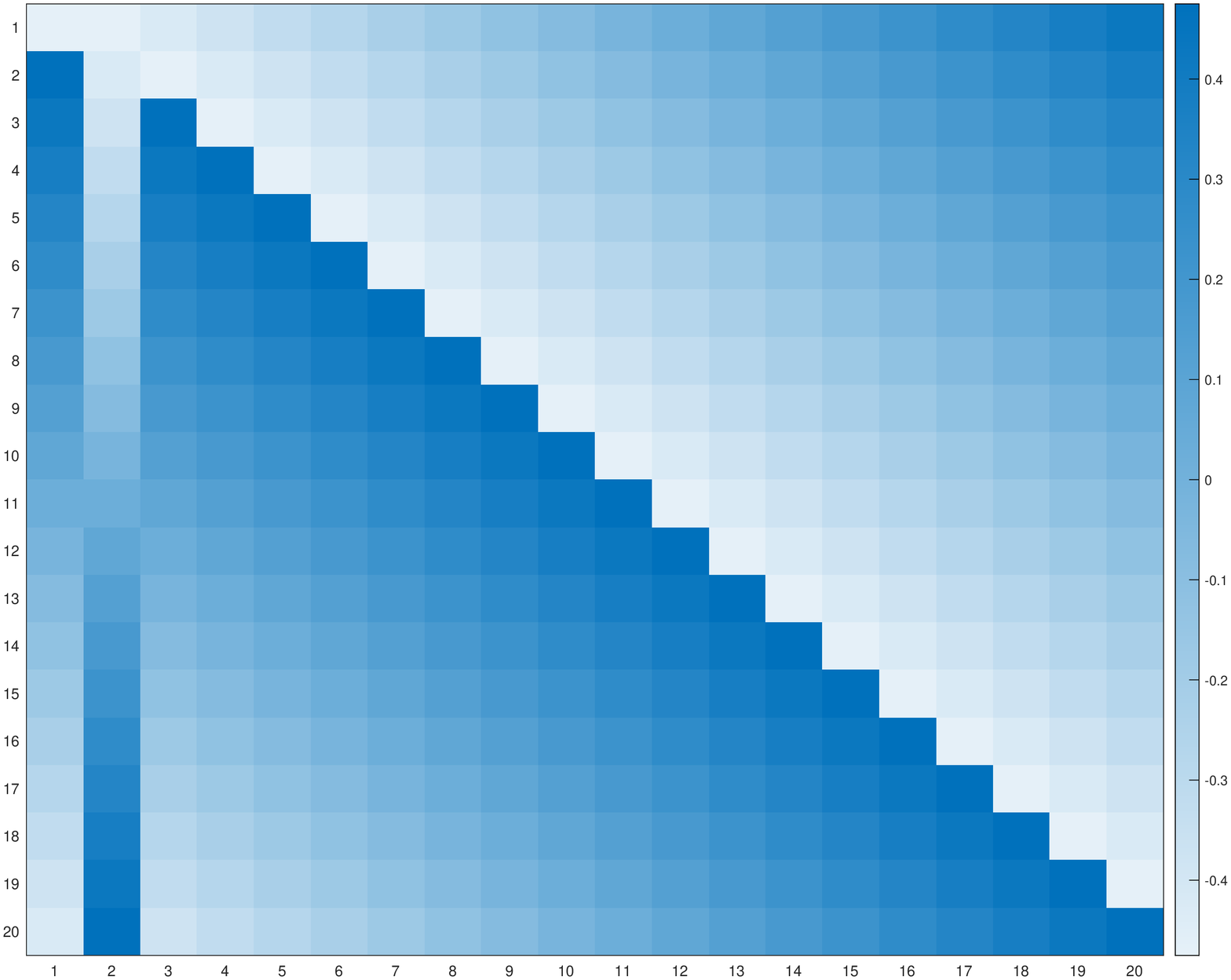}\label{fig:pc}}
~\subfigure[$CC^{+}$]{%
\includegraphics[width=.28\hsize,clip]{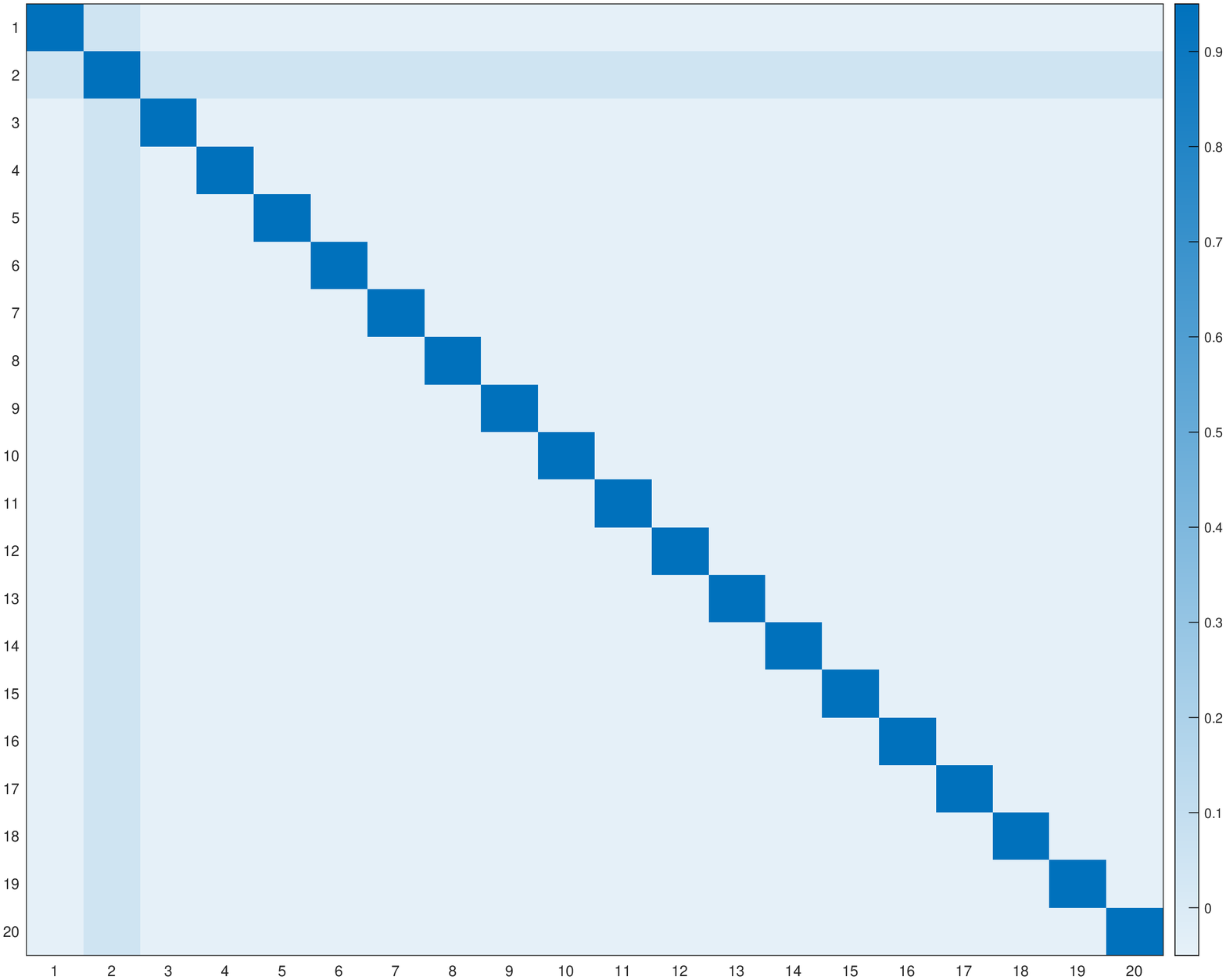}\label{fig:cpc}}
\caption{
Heatmap plot for the matrices $C$, $B=C^{+}$ and $CC^{+}$ of the cyclic graph for $n=20$.}
\label{fig:heatmapc}
\end{center}
\end{figure*}

Now we prove that $C^{+} = B$.
To claim that, it is enough to prove that $BC = I - \mathbf{1}^{\top}\mathbf{1}/n$~\cite{bapat2010graphs}.
From the construction,
\begin{align}
    (BC)_{ii} = \xi_{1} - \xi_{n} = 1/2-1/2n - (-1/2-1/2n) = 1-1/n. 
\end{align}
Also, when $i \neq j$,
\begin{align}
    (BC)_{ij} &= 
    \left\{
    \begin{array}{ll}
        - \xi_{i-1} - \xi_{n - i + 1} &   \mathrm{when}\  \mathrm{when}\ j = 1\\
        \xi^{(j)}_{i+1} - \xi^{(j)}_{i} &   \mathrm{when}\  \mathrm{when}\ 2\leq j < n,\\
        \xi_{n - i + 1} + \xi_{i + 1} &   \mathrm{when}\  \mathrm{when}\ j=n,\\
    \end{array}
    \right.\\
    & = -1/n.
\end{align}
Thus, we can say that $B = C^{+}$.
By doing a similar computation, we get
\begin{align}
    CC^{+} = 
    \left\{
    \begin{array}{cc}
        1 - 1/n &  \mathrm{when}\ i=j\\
        1/n &   \mathrm{when}\ i=2 \mathrm{\ or\ } j=2, i\neq j\\
        -1/n &   \mathrm{otherwise} \ 
    \end{array}
    \right.
\end{align}
We also plot a heatmap for $CC^{+}$ for the illustrative purpose in Fig.~\ref{fig:cpc}.
Thus, applying Lemma~\ref{lemma:higham2}, we get
\begin{align}
    \vvvert CC^{+} \vvvert_{p} \leq \vvvert CC^{+} \vvvert_{1} = \max_{i} \sum_{j=1}^{n} |(CC^{+})_{ij}| = 2 - 1/n \leq 4.
\end{align}

We leave a brief note for other concrete examples.
Several attempts are made to obtain the concrete form of $C^{+}$ for the specific graph~\cite{azimi2018moore,azimi2019moore}.
However, due to their abstract ways to characterize the graph such as distance or cut, we think that it is hard to immediately obtain a non-trivial bound from these results.
Also, $C^{+}$ for tree is studied~\cite{bapat1997moore}.
However, since we know the exact representation of $p$-resistance for tree in Thm.~\ref{thm:tree}, we do not have to discuss the tree case.

\subsection{Additional Experiments for Bound of Approximation}
\label{sec:additional}

This section discusses some additional experiments for bound of approximation in Prop.~\ref{prop:generalbound}.
For the real datasets we confirmed that the approximation over exact $\|L^{+}\mathbf{e}_{i} - L^{+}\mathbf{e}_{j}\|_{G,q}^{q}/r_{G,p}^{1/(p-1)}(i,j)$ is far below the bound and its worst case, in Fig.~\ref{fig:compratio}.
In this section, we further investigate the $\alpha_{G,p}$ of the real datasets. 
Moreover, we further investigate artificial dataset that the ratio is close to the worst dataset, and we discuss why it happens.

\subsubsection{Plot of $\vvvert CC^{+}\vvvert_{p}$.}

As we discussed in Sec.~\ref{sec:comparsionofapproximated}, since $\alpha_{G,p} = \vvvert W^{1/p}CC^{+}W^{-1/p}\vvvert_{p}$ involves $p$ in the matrix as well as norm, it is somewhat difficult to how $\alpha_{G,p}$ behaves by changing $p$. 
Thus, we focus on the unweighted graph; we numerically investigate the unweighted $\|CC^{+}\|_{p}$ that is a matrix norm evaluated in Eq.~\eqref{eq:furthernorm}.
We plot $\|CC^{+}\|_{p}$ for wine, ion, and iris with the $\mu$ when $k$-medoids performs the best in Fig.~\ref{fig:cc}. 
We use the matrix $p$-norm estimation algorithm proposed by~\cite{higham1992estimating}.
We plot $p\to 1$ and $p \to \infty$ as the exact value. 
We remark on the estimation of the matrix norm by~\cite{higham1992estimating}; let $\xi$ be the output by the estimation of the matrix norm of $CC^{+}$, then $\vvvert CC^{+}\vvvert_{p}/m^{|1/2-p|}\leq \xi \leq \vvvert CC^{+}\vvvert_{p}$. 
We note that using Lemma~\ref{lemma:higham2} and symmetricity of $CC^{+}$, we have
\begin{align}
\label{eq:CC+boundby1}
    \vvvert CC^{+}\vvvert_{p} \leq \vvvert CC^{+} \vvvert_{1} =  \vvvert CC^{+} \vvvert_{\infty},
\end{align}
whose exact value is computable.
Thus, although we need to use estimation algorithm for $\vvvert CC^{+}\vvvert_{p}$, we can exactly compute the bound of this norm.

Fig.~\ref{fig:cc} shows that $\|CC^{+}\|_{p}$ is far lower than the worst bound in Prop.~\ref{prop:generalbound}.
Moreover, Fig.~\ref{fig:cc} shows that the estimation algorithm proposed by~\citet{higham1992estimating} outputs is reliable results on this problem since this follows theory in terms of $\vvvert CC^{+} \vvvert_{p'} \leq \vvvert CC^{+} \vvvert_{p}$ if $2 < p' < p$ or $p<p'<2$, also Eq.~\eqref{eq:CC+boundby1}.
Also, even only looking at $\vvvert CC^{+}\vvvert_{1}=\vvvert CC^{+}\vvvert_{\infty}$, which is exactly computable the bound of $\vvvert CC^{+}\vvvert_{p}$, we have far lower value than the worst bound, especially in larger $p$ where our clustering algorithm works better.

\begin{figure}
	\begin{minipage}{.4\hsize}
		\centering
		\includegraphics[width=.8\hsize,clip]{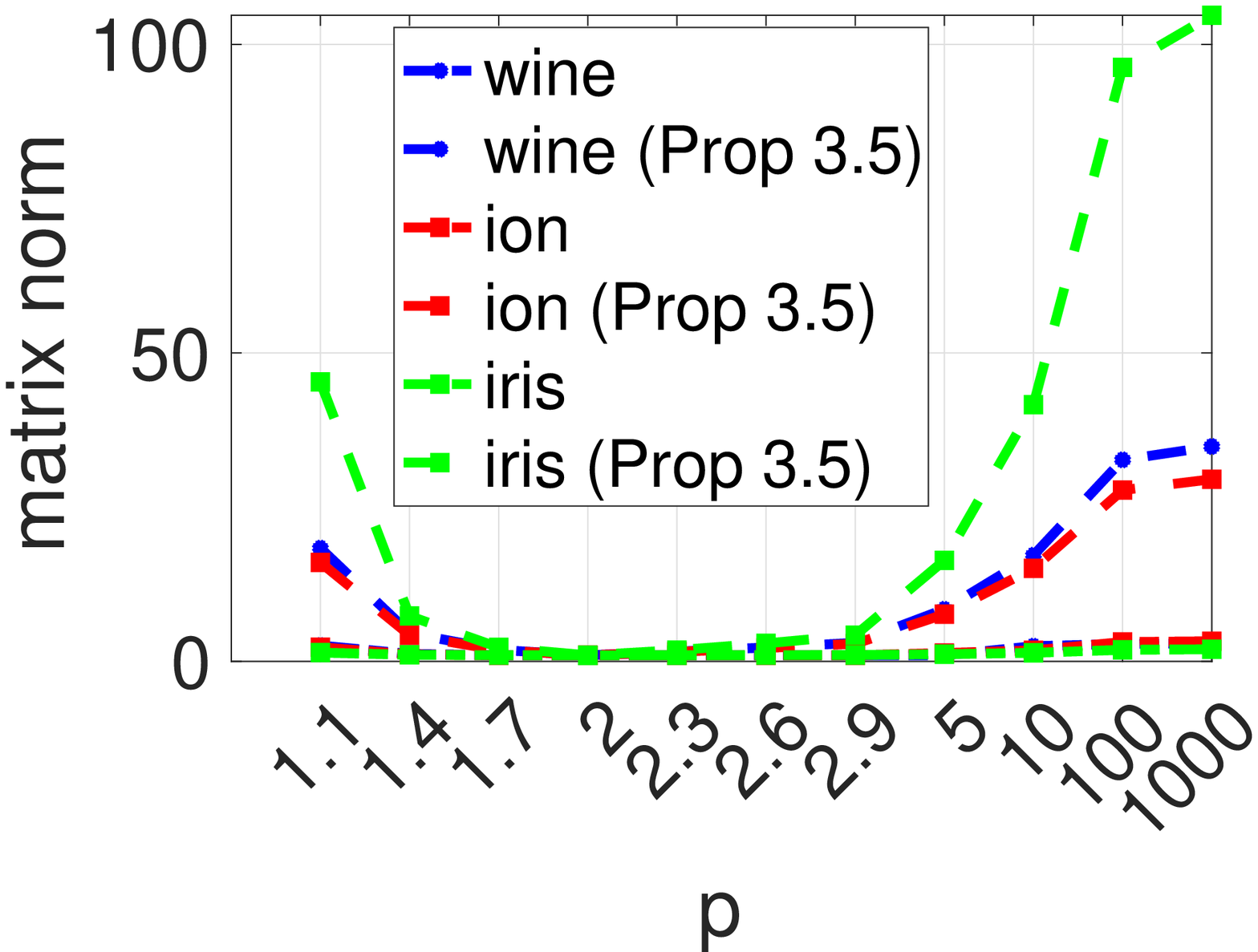}
		\captionof{figure}{Matrix norm $\|CC^{+}\|$ vs $p$.}
		\label{fig:cc}
	\end{minipage}
	\begin{minipage}{0.55\hsize}
    \centering
    \includegraphics[width=0.66\hsize,clip]{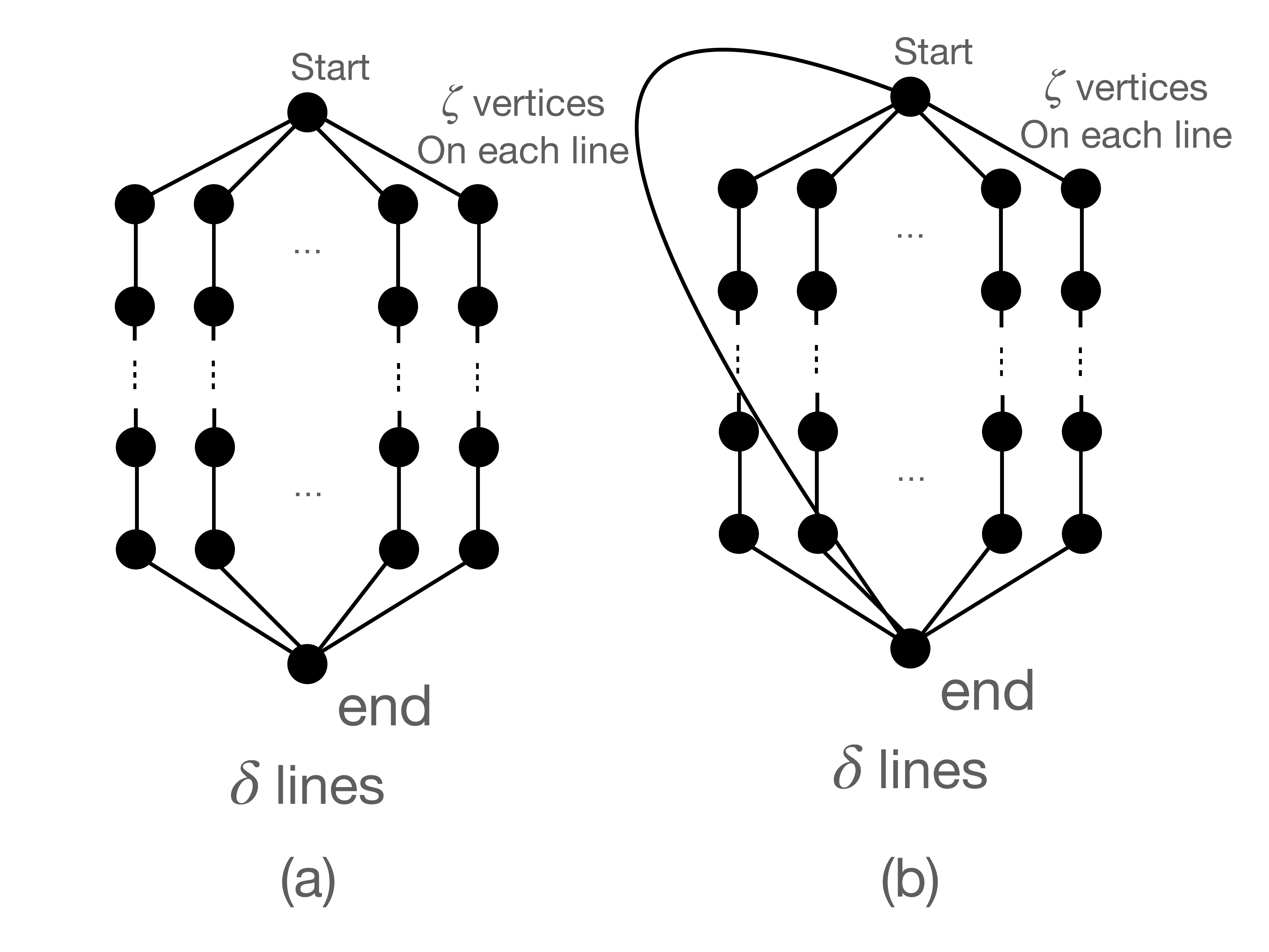}
    \caption{The example where the approximated value is far lower than the exact value. See~\ref{sec:additional} for details.}
    \label{fig:unbalk}
	\end{minipage}\hfill
\end{figure}

\begin{table}[]
\small
\centering
\begin{minipage}{.4\hsize}
\centering
\caption{
\small
The values of approximated $1$-Resistance for (a).
The exact 1-resistance for this graph is $1/\delta$.
}
\label{tab:approxa}
\begin{tabular}{cc|ccccc}
\toprule
                  &    & \multicolumn{5}{c}{$\zeta$}              \\
                  &    & 5   & 10  & 20   & 40    & 80     \\
\midrule
\multirow{5}{*}{$\delta$} & 5  & 0.2 & 0.1 & 0.05 & 0.025 & 0.0125 \\
                  & 10 & 0.2 & 0.1 & 0.05 & 0.025 & 0.0125 \\
                  & 20 & 0.2 & 0.1 & 0.05 & 0.025 & 0.0125 \\
                  & 40 & 0.2 & 0.1 & 0.05 & 0.025 & 0.0125 \\
                  & 80 & 0.2 & 0.1 & 0.05 & 0.025 & 0.0125
\end{tabular}    
\end{minipage}
\hspace{0.04\columnwidth}
\begin{minipage}{.4\hsize}
\centering
\caption{
\small
The values of approximated $p$-Resistance for (b).
The exact 1-resistance for this graph is $1/(\delta+1)$.
}
\label{tab:approxb}
\begin{tabular}{cc|ccccc}
\toprule
                  &    & \multicolumn{5}{c}{$\zeta$}
                  \\
                  &    & 5      & 10     & 20     & 40     & 80     \\
\midrule
\multirow{5}{*}{$\delta$} & 5  & 0.46 & 0.33 & 0.21 & 0.13 & 0.07 \\
                  & 10 & 0.61 & 0.48 & 0.33 & 0.21 & 0.12 \\
                  & 20 & 0.75 & 0.64 & 0.49 & 0.33 & 0.20\\
                  & 40 & 0.85 & 0.77 & 0.65 & 0.49 & 0.33 \\
                  & 80 & 0.92 & 0.87 & 0.79 & 0.66 & 0.50
\end{tabular}    
\end{minipage}
\end{table}

\begin{table}[]
\small 
\centering
\begin{minipage}{.4\hsize}
\centering
\caption{
\small
The values of $\vvvert CC^{+}\vvvert_{1}/m^{1/2}$ for the graph (a).
If this value is close to 1, we have a looser bound.
}
\label{tab:cca}
\begin{tabular}{cc|ccccc}
\toprule
                  &    & \multicolumn{5}{c}{$\zeta$}             \\
                  &    & 5    & 10   & 20   & 40   & 80   \\
\midrule
\multirow{5}{*}{$\delta$} & 5  & 0.51 & 0.33 & 0.23 & 0.16 & 0.11 \\
                  & 10 & 0.41 & 0.27 & 0.19 & 0.13 & 0.09 \\
                  & 20 & 0.31 & 0.2  & 0.14 & 0.1  & 0.07 \\
                  & 40 & 0.23 & 0.15 & 0.1  & 0.07 & 0.05 \\
                  & 80 & 0.16 & 0.11 & 0.07 & 0.05 & 0.04
\end{tabular}
\end{minipage}
\hspace{0.04\columnwidth}
\begin{minipage}{.4\hsize}
\centering
\caption{
\small
The values of $\vvvert CC^{+}\vvvert_{1}/m^{1/2}$ for the graph (b).
If this value is close to 1, we have a looser bound.
}
\label{tab:ccb}
\begin{tabular}{cc|ccccc}
\toprule
                  &    & \multicolumn{5}{c}{$\zeta$}                                                                                                      \\
                  &    & \multicolumn{1}{l}{5} & \multicolumn{1}{l}{10} & \multicolumn{1}{l}{20} & \multicolumn{1}{l}{40} & \multicolumn{1}{l}{80} \\
\midrule
\multirow{5}{*}{$\delta$} & 5  & 0.70                  & 0.55                   & 0.43                   & 0.33                   & 0.24                   \\
                  & 10 & 0.68                  & 0.59                   & 0.51                   & 0.41                   & 0.32                   \\
                  & 20 & 0.59                  & 0.56                   & 0.54                   & 0.49                   & 0.41                   \\
                  & 40 & 0.47                  & 0.48                   & 0.51                   & 0.52                   & 0.48                   \\
                  & 80 & 0.36                  & 0.38                   & 0.44                   & 0.49                   & 0.51                  
\end{tabular}
\end{minipage}
\end{table}

\subsubsection{Example where Approximation is Far Lower than the Exact Value}

Lastly, we discuss an example where the approximation is far lower than the exact value and how this happens.
We consider a graph depicted in Fig.~\ref{fig:unbalk}.
First, we see a graph in Fig.~\ref{fig:unbalk}~(a).
To build this graph, first consider the line graph, where the $\zeta$ vertices are in line.  
This graph is constructed with $\delta$ lines of diameter
$\zeta$ each lines start vertex is glued to each other lines ``start vertex'' and similar to the ``end vertices''.
For Fig.~\ref{fig:unbalk}~(b), we add one edge to the graph in Fig.~\ref{fig:unbalk} between the start vertex and end vertex.

We now compare with approximation and the exact value of $1$-resistance between the start vertex and end vertex.
As we discussed in Sec.~\ref{sec:illustrative}, we compute the exact $1$-resistance between $i$ and $j$ as the minimum cut's inverse.
Thus, for (a) $r_{G,1}(\textrm{start},\textrm{end})=1/\delta$ and for (b) we have  $r_{G,1}(\textrm{start},\textrm{end})=1/(\delta+1)$.
We then compute the approximated values and $\vvvert CC^{+} \vvvert_{1}$ for Fig.~\ref{fig:unbalk}.
We give a result in Tables~\ref{tab:approxa}--\ref{tab:ccb}.
From Tables~\ref{tab:approxa} and~\ref{tab:approxb}, we observe that we have a far less accurate approximation for graph (b) than that for graph (a).
In Tables~\ref{tab:cca} and~\ref{tab:ccb}, we observe that a far larger value of $\vvvert CC^{+}\vvvert_{1}$ for the graph (b) than that for the graph (a).
We also observe that comparing with $\vvvert CC^{+}\vvvert_{1}$ of the graph constructed from the real dataset in Fig.~\ref{fig:cc}, we see a far larger value of $\vvvert CC^{+}\vvvert_{1}$ for the graph (b).
The larger value of $\vvvert CC^{+}\vvvert_{1}$ might be the reason why the approximation of the $1$-resistance of graph (b) is far worse than the graph (a).

We now discuss why $\vvvert CC^{+}\vvvert_{1}$ for graph (b) is far larger than that for graph (a).
We now revisit the condition number argument in Sec.~\ref{sec:boundingalpha}.
The condition number is the stableness of the linear equation of the matrix.
The stable linear equation is even if we add small value $\mathbf{\epsilon}$ to the linear equation, i.e., $C\bfx=\bfy+\mathbf{\epsilon}$, the solution $\bfx$ is almost unchanged.
If we add perturbation on each edge in graph (a), the graph can absorb the perturbation since each line graph is almost independent.
However, on the graph (b), each line graph becomes dependent due to the additional edge. 
Moreover, the start and the end vertex are like ``pivots'' of the graph. 
The perturbations might be widely spread over the graph by connecting two pivots.
By this spread, graph (b) becomes unstable, while graph (a), where we do not connect the pivots is more stable.
In Tables~\ref{tab:conda} and~\ref{tab:condb} we see that graph (b) is far
more unstable than graph (a).

Finally, we argue that we do not observe this phenomenon in the real setting.
In the example of graph (b), the unstableness comes from the sparse connection over the graph and connection of the ``pivots'' over such a spares graph.
In a dense graph such as a complete graph, we saw far lower $\vvvert CC^{+}\vvvert_{1}$ as we observe in Sec~\ref{sec:specificgraphs}.
As we saw in the real dataset case, we can assume that there is a denser connection over the graph, even between the clusters.

\begin{table}[]
\small
\centering
\begin{minipage}{.4\hsize}
\centering
\caption{
\small
The values of condition number $\vvvert C \vvvert_{1}\vvvert C^{+} \vvvert_{1}$ for (a).
}
\label{tab:conda}
\begin{tabular}{cc|ccccc}
\toprule
                  &    & \multicolumn{5}{c}{$\zeta$}                   \\
                  &    & 5     & 10    & 20     & 40     & 80   \\
\midrule
\multirow{5}{*}{$\delta$} & 5  & 20.4  & 45.2  & 95.1   & 195.1  & 395  \\
                  & 10 & 40.4  & 90.2  & 190.1  & 390.1  & 790  \\
                  & 20 & 80.4  & 180.2 & 380.1  & 780.1  & 1580 \\
                  & 40 & 160.4 & 360.2 & 760.1  & 1560.1 & 3160 \\
                  & 80 & 320.4 & 720.2 & 1520.1 & 3120.1 & 6320
\end{tabular}  
\end{minipage}
\hspace{0.04\columnwidth}
\begin{minipage}{.4\hsize}
\centering
\caption{
\small
The values of condition number $\vvvert C \vvvert_{1}\vvvert C^{+} \vvvert_{1}$ for (b).
}
\label{tab:condb}
\begin{tabular}{cc|ccccc}
\toprule
                  &    & \multicolumn{5}{c}{$\zeta$}                      \\
                  &    & 5     & 10     & 20     & 40     & 80     \\
\midrule
\multirow{5}{*}{$\delta$} & 5  & 24.4  & 49.7   & 101.8  & 219.3  & 457.7  \\
                  & 10 & 51    & 136.7  & 346.2  & 812.8  & 1790   \\
                  & 20 & 120.2 & 362.3  & 1035.1 & 2702.6 & 6432.7 \\
                  & 40 & 266.2 & 871.6  & 2768.2 & 8119.5 & 21449  \\
                  & 80 & 563.8 & 1943.9 & 6670.3 & 21713  & 64438 
\end{tabular}   
\end{minipage}
\end{table}
\section{On Difficulties of The Exact Solution}
\label{sec:difficulties}
In this section, we briefly explain the difficulties to obtain the exact solution of the resistance.
Again, we consider the minimization problem Eq.~\eqref{eq:sslp}.
The Lagrangian multiplier method gives Eq.~\eqref{eq:lagconditionpsi} and Eq.~\eqref{eq:lagconditionpsi}.
From Eq.~\eqref{eq:lagconditionpsi}, the optimal solution $\bfx$ satisfies
\begin{align}
\label{eq:thelagrange}
    0 
    &=p\Delta_{p}\bfx - \lambda (\bfe_{i} - \bfe_{j}) 
\end{align}
To solve this problem, we want to consider $\Delta_{p}^{+}$, which is ``generalized inverse'' function $\Delta_{p}$, defined as
\begin{align}
    \Delta_p^{+}(\Delta_p(\Delta_p^{+}(\bfx))) = \Delta_p^{+} \bfx
\end{align}
Recall that we can write as
\begin{align}
    \Delta_{p} = C^{\top} W f_{p-1}(C\bfx).
\end{align}
For the convenience of notation, we write
\begin{align}
    f_{\bfw,p} = W f_{p}(C\bfx).
\end{align}
If there exists $\bfalpha \in \Kernel (C)$ s.t.
\begin{align}
\label{eq:imageCalpha}
    f^{-1}_{\bfw,p-1}(C^{+\top}\bfx - \bfalpha) \in \Image(C),
\end{align}
the $\Delta_{p}^{+}$ is given as
\begin{align}
    \Delta_{p}^{+} (\bfx) := C^{+} f_{\bfw,p-1}^{-1}(C^{+\top}\bfx - \bfalpha),
\end{align}
The reason is as follows. 
We get
\begin{align}
    \Delta_{p}(\Delta_{p}^{+}(\bfx)) 
    &= C^{\top}f_{\bfw,p-1}(CC^{+} f_{\bfw,p-1}^{-1}(C^{+\top}\bfx - \bfalpha)  )\\
    &= C^{\top}f_{\bfw,p-1}(f_{\bfw,p-1}^{-1}(C^{+\top}\bfx - \bfalpha))\\
    &= C^{\top}(C^{+\top}\bfx - \bfalpha))\\
    &= C^{\top}C^{+\top}\bfx.
\end{align}
The second line follows from the assumption that $f_{\bfw,p-1}^{-1}(C^{+\top}\bfx - \bfalpha) \in \Image(C)$.
Thus,
\begin{align}
    \Delta_{p}^{+}(\Delta_{p}(\Delta_{p}^{+}(\bfx))) 
    &= C^{+} f_{\bfw,p-1}^{-1}(C^{+\top}C^{\top}C^{+\top}\bfx - \bfalpha)\\
    &= C^{+} f_{\bfw,p-1}^{-1}(C^{+\top}\bfx - \bfalpha)\\
    &= \Delta_{p}^{+}\bfx.
\end{align}
From this property, if we substitute
\begin{align}
    \bfx = \Delta^{+}_p \left(\frac{\lambda}{p} (\bfe_{i} - \bfe_{j}) \right)
\end{align}
the Eq.~\eqref{eq:thelagrange} satisfied.
Therefore, the next question is what $\bfalpha$ is.
However, we do not know even if such $\bfalpha$ satisfying Eq.~\eqref{eq:imageCalpha} exists or not.

\end{document}